\documentclass{article}

\usepackage{microtype}
\usepackage{graphicx}
\usepackage{booktabs} 

\usepackage{hyperref}
\usepackage{multicol}
\usepackage{multirow}

\usepackage{subcaption}
\usepackage[most]{tcolorbox}

\usepackage{xspace}

\newcommand{\projectname}[0]{\textsc{Calibration Aware Token-level Training Objective}\xspace}

\newcommand{\projectnameshort}[0]{\textsc{CATTO}\xspace}

\usepackage[preprint]{icml2026}

\usepackage{amsmath}
\usepackage{amssymb}
\usepackage{mathtools}
\usepackage{amsthm}
\usepackage{enumitem}

\usepackage[capitalize,noabbrev]{cleveref}

\theoremstyle{plain}
\newtheorem{theorem}{Theorem}[section]
\newtheorem{proposition}[theorem]{Proposition}
\newtheorem{lemma}[theorem]{Lemma}

\theoremstyle{definition}
\newtheorem{definition}[theorem]{Definition}

\theoremstyle{remark}

\usepackage[textsize=tiny]{todonotes}

\icmltitlerunning{\projectnameshort: Balancing Preferences and Confidence in Language Models}

\begin{document}

\twocolumn[
\icmltitle{\projectnameshort: Balancing Preferences and Confidence in Language Models}





\begin{icmlauthorlist}
\icmlauthor{Nisarg Parikh}{umass}
\icmlauthor{Ananya Sai}{adobe}
\icmlauthor{Pannaga Shivaswamy}{adobe}
\icmlauthor{Kunjal Panchal}{umass}
\icmlauthor{Andrew Lan}{umass}
\end{icmlauthorlist}

\icmlaffiliation{umass}{Department of Computer Science, University of Massachussets, Amherst, USA}
\icmlaffiliation{adobe}{Adobe, Bangalore, India}

\icmlcorrespondingauthor{Nisarg Parikh}{nkparikh@umass.edu}

\icmlkeywords{Machine Learning, ICML}

\vskip 0.3in
]



\printAffiliationsAndNotice{} 

\begin{abstract}
    Large language models (LLMs) often make accurate next token predictions but their confidence in these predictions can be poorly calibrated: high-confidence predictions are frequently wrong, and low-confidence predictions may be correct.
This miscalibration is exacerbated by preference-based alignment methods breaking the link between predictive probability and correctness.
We introduce a \projectname (\projectnameshort), a calibration-aware objective that aligns predicted confidence with empirical prediction correctness, which can be combined with the original preference optimization objectives.
Empirically, \projectnameshort reduces Expected Calibration Error (ECE) by $2.22\%$-$7.61\%$ in-distribution and $1.46\%$-$10.44\%$ out-of-distribution compared to direct preference optimization (DPO), and by $0.22\%$-$1.24\%$ in-distribution and $1.23\%$-$5.07\%$ out-of-distribution compared to the strongest DPO baseline. 
This improvement in confidence does not come at a cost of losing task accuracy, where \projectnameshort maintains or slightly improves multiple-choice question-answering accuracy on five datasets.  
We also introduce Confidence@k, a test-time scaling mechanism leveraging calibrated token probabilities for Bayes-optimal selection of output tokens.
\end{abstract}

\section{Introduction}
\label{sec:introduction}
Large language models (LLMs) are increasingly deployed in settings where they must not only produce accurate outputs, but also provide reliable \textit{confidence estimates}, i.e., a probability that the prediction is correct. 
LLMs with well-calibrated confidence estimates are critical for downstream decision-making, including selective prediction, answer reranking, reasoning, and test-time scaling~\cite{geifman2019selectivenet,kadavath2022language,wang2023selfconsistency}.

Formally, a model is \textit{well-calibrated} if predictions made with confidence $p$ are empirically correct at the same frequency, $p$~\cite{guo2017calibration}. 
Despite strong gains in accuracy and generalization~\cite{ouyang2022training,chung2024scaling,yin2023llm}, modern LLMs are often severely miscalibrated, particularly after post-training and preference alignment, such as reinforcement learning from human feedback (RLHF)~\cite{christiano2017deep} and direct preference optimization (DPO)~\cite{rafailov2023direct}. 
This miscalibration causes models to be overly confident in incorrect predictions and underconfident in correct ones, as reflected by the gap between confidence and accuracy in Figure~\ref{fig:ece-plot-main}.

Prior work performs confidence calibration in LLMs both during and after training~\cite{xiao2025restoring,huang2025calibrated}, using techniques such as temperature scaling~\cite{guo2017calibration,lamb2025semantic} and label smoothing~\cite{huang2025calibrated}.
However, a major challenge in confidence calibration is that pre-training calibration does not persist after post-training preference alignment.
Preference-based objectives typically focus only on the relative likelihood ratio between preferred and unpreferred outputs, without any constrains on the absolute scale of token-level probabilities. 
Due to this, alignment such as DPO or RLHF can substantially alter logit magnitudes, leading to systematic confidence drift and poor calibration~\cite{leng2025taming,parikh2025lookalike,xiao2025algorithmic,xiao2024caldpo}. 
Empirically, this shows up as inflated logits, overconfident incorrect predictions, and underconfident correct ones (Table~\ref{tbl:in-distribution-main}, Figure~\ref{fig:ece-plot-main}), undermining the reliability of confidence estimates.

\begin{figure*}[hbt!]
    \centering
    \begin{subfigure}[b]{0.24\textwidth}
         \centering
         \includegraphics[width=\textwidth]{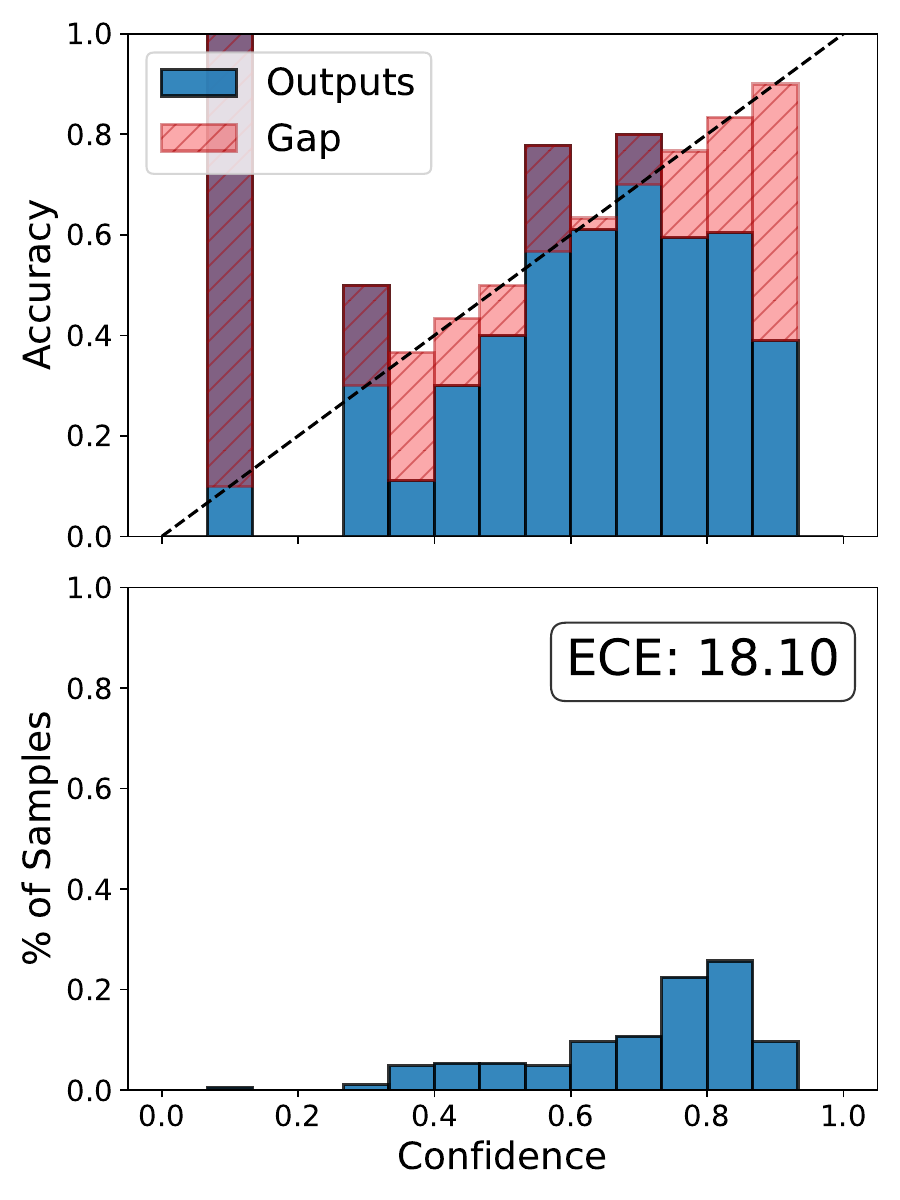}
         \caption{SFT} \label{subfig:realibility-confidence-sft}
     \end{subfigure}
     \hfill
    \begin{subfigure}[b]{0.24\textwidth}
         \centering
         \includegraphics[width=\textwidth]{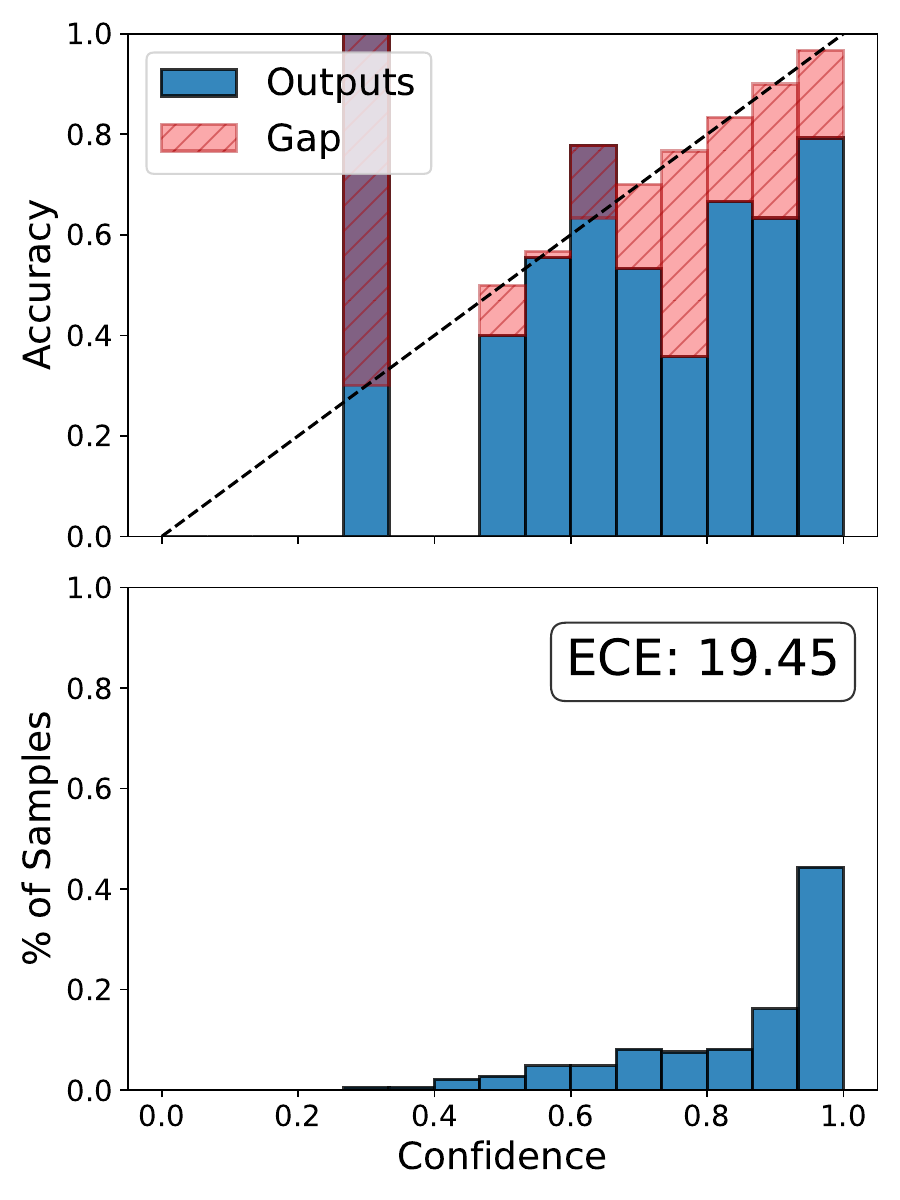}
         \caption{DPO} \label{subfig:realibility-confidence-dpo}
     \end{subfigure}
     \hfill
    \begin{subfigure}[b]{0.24\textwidth}
         \centering
         \includegraphics[width=\textwidth]{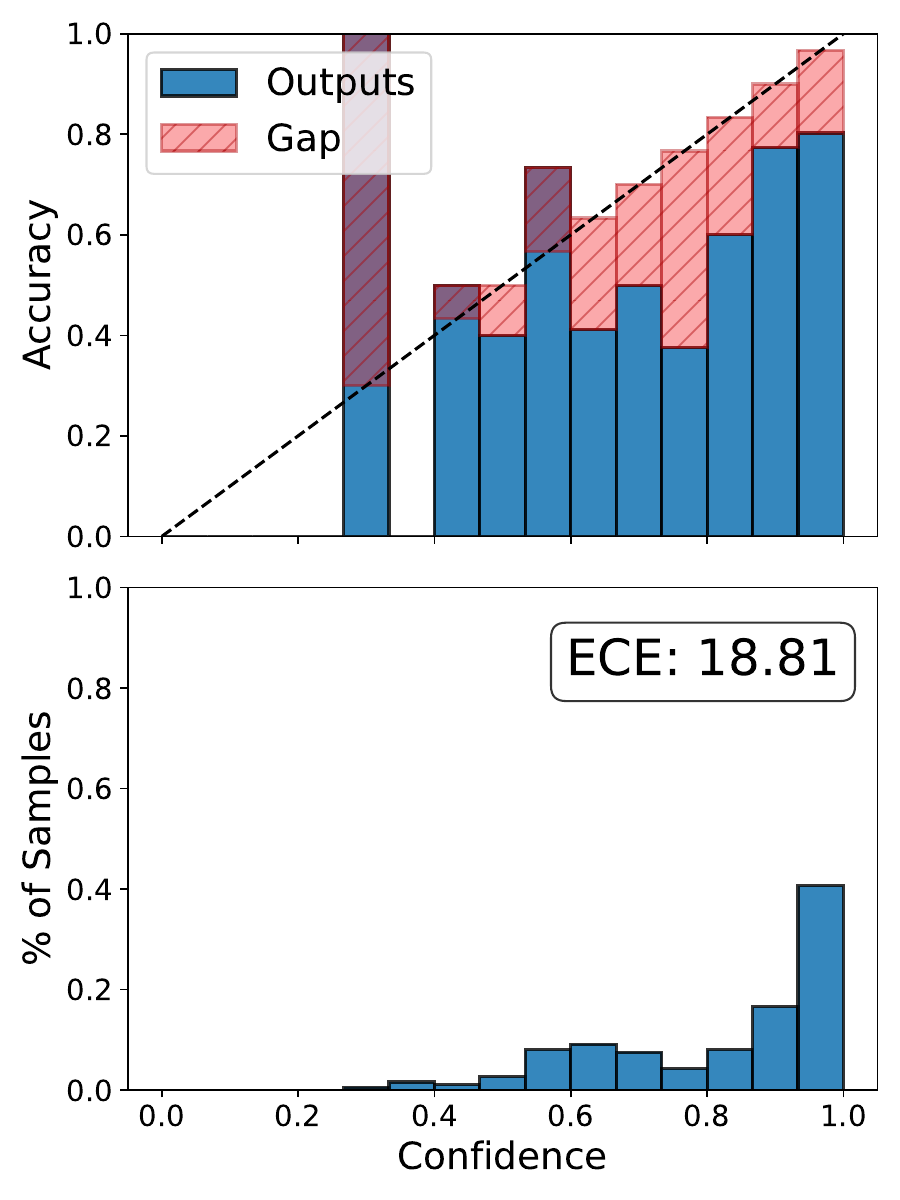}
         \caption{RCFT} \label{subfig:realibility-confidence-rcft}
     \end{subfigure}
     \hfill
    \begin{subfigure}[b]{0.24\textwidth}
         \centering
         \includegraphics[width=\textwidth]{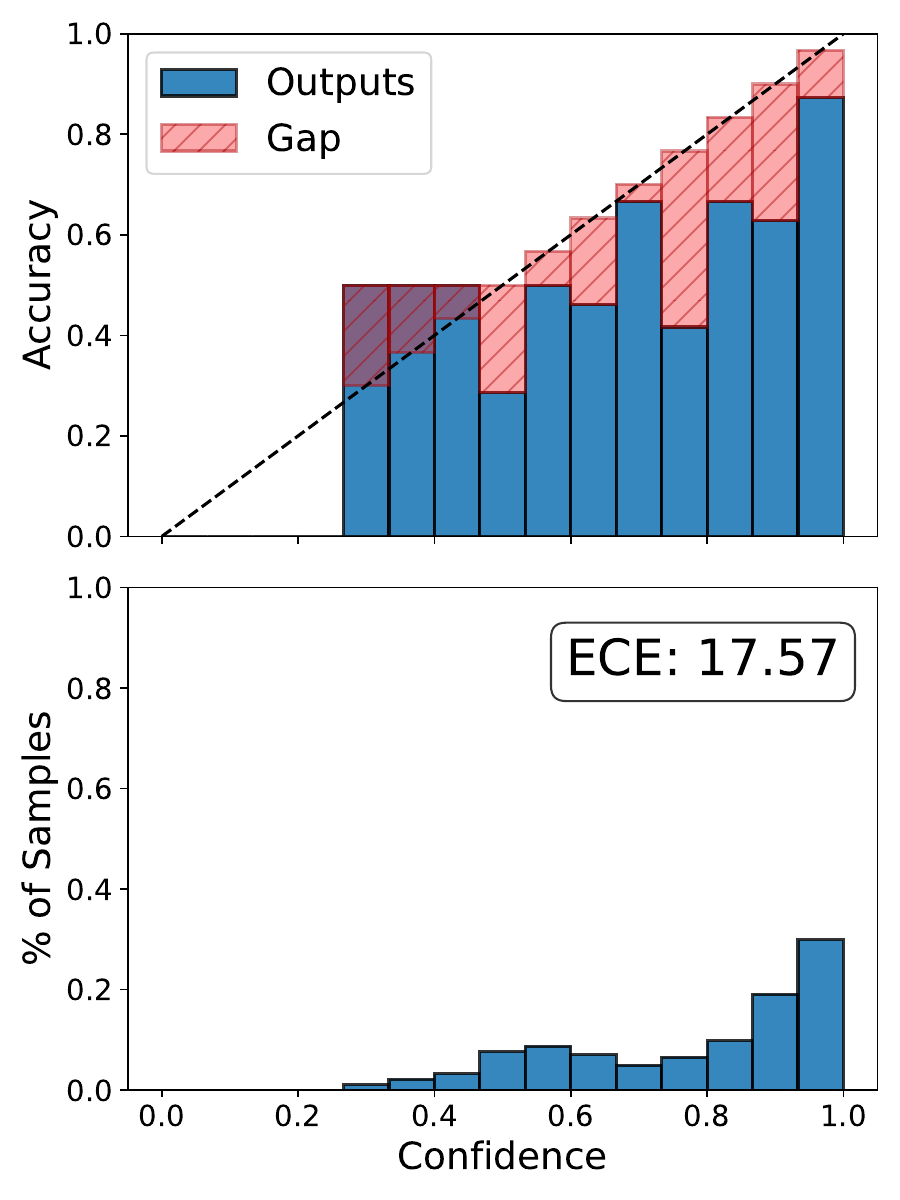}
         \caption{DPO + \projectnameshort} \label{subfig:realibility-confidence-cal}
     \end{subfigure}
    \caption{Reliability diagrams (top) compare predicted confidence with actual accuracy, while confidence histograms (bottom) show how often different confidence levels occur, using the same random seed. 
    The dashed line represents perfect calibration, deviations from it (marked with red bars annotated as "Gap") indicate miscalibration. 
    DPO produces strong overconfidence that remains even after RCFT post-hoc calibration, whereas \projectnameshort brings confidence closer to true accuracy, reducing calibration error (ECE).} 
    \label{fig:ece-plot-main}
\end{figure*}

Regularized Calibration-Aware Finetuning (RCFT)~\cite{xiao2025restoring} partially addresses this issue by incorporating confidence calibration signals after alignment as a supervised training phase. 
RCFT adopts an expectation–maximization procedure that estimates correctness from model confidence and retrains the model accordingly. 
While effective, this approach incurs a substantial computational overhead (18–39$\times$ the cost of DPO; Table~\ref{tbl:time-taken-main}) and still does not fully prevent confidence drift that occurs during preference optimization.

Moreover, confidence calibration is complicated by another important issue, i.e. a mismatch between calibration metrics and training objectives. 
Calibration metrics quantify the gap between reported confidence and empirical correctness, whereas LLMs are typically trained using token- or sequence-level likelihood objectives, which only influence confidence indirectly and do not explicitly constrain its alignment with empirical correctness. 
This mismatch means that optimizing likelihood or task accuracy does not guarantee calibrated confidence, highlighting the importance of accounting for miscalibration during training (Figure~\ref{subfig:realibility-confidence-dpo}). 

\subsection*{Contributions}

In this work, we propose a \projectname (\projectnameshort), a \textbf{calibration-aware preference optimization objective} that explicitly regularizes confidence during preference alignment.
A natural metric to target is the Expected Calibration Error (ECE), as it directly measures miscalibration. 
Since the original ECE objective is non-differentiable, we introduce \textbf{a differentiable, per-token objective} for correctness that serves as a continuous approximation of empirical accuracy.
\projectnameshort introduces a calibration-aware training signal that directly constrains predicted probabilities during alignment, \textbf{preventing confidence drift} during preference optimization and ensuring that high-confidence predictions remain meaningful.
We combine this surrogate loss with the preference optimization objective linearly, resulting in a end-to-end training pipeline that simultaneously performs preference alignment and confidence calibration. 
This approach ensures that improvements in training directly translate to more accurate confidence estimates, avoiding the limitations of post-hoc calibration.

Empirically, we evaluate \projectnameshort across multiple benchmarks covering a wide range of tasks, from selective prediction to answer reranking. We measure calibration using ECE as well as downstream utility in these tasks.
Results show that our method reduces miscalibration compared to strong baselines, achieving an improvement in ECE within $2.22\%$-$7.61\%$ when compared to DPO and $0.22\%$-$1.24\%$ when compared to the next best DPO baseline (between RCFT and DPO + BCE) for in-distribution experiments (Table~\ref{tbl:in-distribution-main}).
Across benchmarks, \projectnameshort improves or preserves task accuracy, achieving an average accuracy change of $+3.16\%$, with a maximum observed drop of $-1.33\%$ on tasks where all other DPO based methods also observe drops in accuracy.
Under distribution shift, \projectnameshort continues to reduce miscalibration, yielding ECE improvements of $10.44\%$–$1.46\%$ relative to DPO on out-of-distribution benchmarks while maintaining accuracy compared to DPO (Table~\ref{tbl:out-of-distribution-main}.
We also demonstrate a practical, test-time application of calibrated confidences via \emph{Confidence@k}, showing how token-level calibrated probabilities can be leveraged for Bayes-optimal selection among candidate outputs (Section~\ref{subsec:methodology-confidence-at-k},Table~\ref{tbl:conf-at-k-main}).

\section{Related Work}
\label{sec:related-work}
Confidence calibration~\cite{brier1950verification,murphy1972scalar} requires a model’s reported probability to reflect the empirical frequency of correctness. 
In practice, calibration is measured using metrics such as Expected Calibration Error (ECE) and its variants~\cite{guo2017calibration,kull2019beyond,nixon2019measuring,luo2022local}, which quantify the gap between predicted confidence and observed correctness. ECE is computed by binning predictions by confidence~\cite{naeini2015obtaining,guo2017calibration}:
\begin{equation}
\mathrm{ECE}(\theta) \triangleq
\sum_{m=1}^M \frac{|B_m|}{N}
\left|\mathrm{Z}(B_m) - \mathrm{c}_\theta(B_m)\right|,
\label{eq:ece}
\end{equation}
where $M$ is the number of confidence bins, $B_m$ is the set of predictions in bin $m$, $N$ is the total number of predictions, $\mathrm{c}_\theta(B_m)$ is the average predicted confidence in the bin, and $\mathrm{Z}(B_m)$ is the empirical accuracy of predictions in the bin. 
While widely used~\cite{guo2017calibration,huang2025calibrated}, ECE is non-differentiable due to binning and binary correctness, motivating our differentiable surrogates for training-time optimization.

Different training paradigms affect predicted probabilities differently, causing ECE behavior to vary between supervised and preference-based training. 
Motivated by this, we categorize prior work into: 
(a) supervised fine-tuning, which uses labeled data, and 
(b) preference optimization, which leverages human or learned signals. 
We review both, highlighting limitations and motivating our calibration-aware preference optimization method.

\subsection{Calibration during Supervised Fine-Tuning}

Post-hoc techniques adjust predictions after training without changing model parameters, including temperature scaling \cite{guo2017calibration}, isotonic regression \cite{zadrozny2002transforming}, and Dirichlet calibration \cite{kull2019beyond}.
Training-time approaches regularize confidence during supervised learning. Label smoothing \cite{szegedy2016rethinking}, focal loss \cite{lin2020focal}, and Mixup \cite{zhang2018mixup} reduce overconfidence by softening targets, reweighting hard examples, or smoothing decision boundaries but they provide limited control over the extent of calibration nor can they reduce miscalibration introduced during preference-alignment.
More recent differentiable surrogates, such as S-AvUC \cite{karandikar2021soft}, align predicted confidence with empirical accuracy but can incur bias due to the soft-binning artifacts, while other works target calibration in large-class classification \cite{coz2024confidence} but require training an auxiliary model to estimate the correctness target. 

However, these approaches operate only in labeled settings and do not persist under preference-based optimization, where logits can drift and absolute probabilities are unconstrained. 
In contrast, our method \projectnameshort embeds a differentiable calibration objective directly in preference optimization, preserving calibration throughout alignment rather than relying on post-hoc correction or supervised approximations.

\subsection{Calibration during Preference Optimization}

Preference optimization methods such as RLHF \cite{christiano2017deep} and DPO \cite{rafailov2023direct} optimize relative likelihoods between preferred and dispreferred outputs, but leave absolute probability scales unconstrained. 
This often drives logits to extreme values, inflating confidence and causing severe miscalibration.
\citeauthor{leng2025taming} show that preference optimization quickly degrades calibration even from well-calibrated or label-smoothed checkpoints, with similar confidence inflation reported in \cite{xiao2025algorithmic,xiao2024caldpo}. 
RCFT~\cite{xiao2025restoring} applies an EM-based post-hoc calibration, but relies on coarse bins, domain-specific assumptions, and incurs extra computational costs.

In contrast, our method regularizes absolute confidence during training via a continuous, per-token calibration objective. 
By integrating a differentiable calibration loss into DPO, we preserve preference ordering while anchoring probabilities to a correctness surrogate, preventing confidence drift without post-hoc correction, discretization, or domain-specific assumptions.

\section{Methodology}
\label{sec:methodology}

This section formalizes our approach to confidence calibration under preference optimization. 
(a) We first introduce \projectnameshort,
a differentiable per-token objective, that directly aligns predicted confidence with empirical correctness 
(\S~\ref{subsec:methodology-per-token-loss}). 
(b) We then incorporate this objective into the preference optimization framework, yielding a calibration-aware variant of Direct Preference Optimization that constrains absolute confidence while preserving preference ordering 
(\S~\ref{subsec:methodology-calibration-aware-preference-optimization}). 
(c) Finally, we present an inference-time mechanism that leverages calibrated token-level confidences to improve downstream decision-making (\S~\ref{subsec:methodology-confidence-at-k}). 

\paragraph{Notation.}
Let $\theta$ denote the parameters of an LLM, denoted as $\pi_\theta$.
For an input textual token sequence $x = (x_1, \dots, x_{t-1})$ with corresponding ground-truth output tokens $y^* = (y_t, \dots, y_T)$, the model induces a conditional distribution $\pi_\theta(\cdot \mid x_t)$ at each token position $t$. 
$(x,y^\star) \sim \mathcal{D}$ is drawn from a data distribution $\mathcal{D}$.
Let $\hat{y}_t = \arg\max_y \pi_\theta(y \mid x_t)$ denote the model's predicted token at position $t$, and let $\bar{y}_t$ denote the token with the highest predicted probability other than the ground-truth (i.e., $\bar{y}_t \neq y^*_t$). 
We denote the model’s predicted confidence as $c_\theta(x_t)~\triangleq~\pi_\theta(\hat{y}_t~\mid~x_t)$.
Let $Z_t \in \{0,1\}$ be a binary correctness indicator, where $Z_t = 1$ if $\hat{y}_t = y^*_t$ and $Z_t = 0$ otherwise. 
We further use $z(x)=\Pr(Z=1\mid x)$ to denote the true (but generally unknown) probability that the model's prediction is correct.
The goal of confidence calibration is to align the model's predicted confidence $c_\theta(x_t)$ with this underlying correctness probability as much as possible.

\subsection{A Differentiable Per-Token Calibration Loss}
\label{subsec:methodology-per-token-loss}
To ensure a model's confidence reflects empirical correctness, we target ECE (Eq.~\ref{eq:ece}), which quantifies the global gap between predicted confidence and observed accuracy. It also corresponds to a lower-bound on class-specific calibration errors.
While ECE captures this overall notion of calibration, it is not directly optimizable, since it depends on (a) hard binning of confidence values, where continuous confidence scores are discretized into fixed intervals, and (b) a binary-valued correctness indicator, both of which make it non-differentiable and cannot be directly used as a training objective.

To enable end-to-end optimization, we
(a) first define \textbf{a population-level calibration risk} across the full data distribution;
(b) \textbf{decompose it per token} to guide individual predictions;
(c) then introduce \textbf{a differentiable correctness surrogate} for stable gradients; and
(d) \textbf{extend per-token calibration to sequences}, ensuring reliable sequence-level confidence.
Together, these steps enable gradient-based training that preserves preference alignment while directly controlling confidence reliability.

\paragraph{A population measure of miscalibration.}
We first define a population calibration risk, $\mathcal{R}_{\mathrm{Cal}}$, which measures miscalibration before discretization. 
A natural population measure of miscalibration is the expected absolute deviation between confidence and correctness:
\begin{align}
\label{eq:pop-l1-deriv}
\mathcal{R}_{\mathrm{Cal}}(\theta)
\;=\;
\mathbb{E}_{(x,y^*)\sim\mathcal{D}}\Bigg[\frac{1}{T}\sum_{t=1}^T\,|\,c_\theta(x_t) - Z_t\,|\,\Bigg].
\end{align}
We refer to this quantity as the population $L_1$ calibration risk. 
This population risk measures the expected absolute deviation between predicted confidence and correctness across all tokens and sequences.
Unlike ECE, it is defined without binning and admits unbiased stochastic estimation.

Minimizing the population $L_1$ calibration risk directly reduces an upper bound on ECE
ensuring that calibration improvements translate directly to the ECE metric.
Formally, the population $L_1$ risk upper-bounds ECE by
$\text{ECE}(\theta)~\leq~\mathcal{R}_{\mathrm{Cal}}(\theta)$,
which follows from Jensen's inequality since $|\cdot|$ is convex:
\begin{align}
&\text{ECE}(\theta)
= \mathbb{E}_{x_t \sim \mathcal{D}}\Big[\,\big|\mathbb{E}_{Z\sim \Pr(Z\mid x_t)}[Z] - c_\theta(x_t)\big|\,\Big] \nonumber \\
&\leq \mathbb{E}_{x_t \sim \mathcal{D}}\Big[\mathbb{E}_{Z \sim \Pr(Z\mid x_t)}\big[|Z - c_\theta(x_t)|\big]\Big] = \mathcal{R}_{\mathrm{Cal}}(\theta).
\label{eq:ece-l1-upperbound}
\end{align}
The resulting upper bound establishes that the population $L_1$ calibration risk dominates ECE, motivating $\mathcal{R}_{\mathrm{Cal}}(\theta)$ as an optimization-friendly surrogate, i.e. it avoids binning, admits unbiased stochastic estimation, and preserves a direct relationship to the target calibration metric.

\textbf{Remark on ECE.}
We target ECE over other calibration estimation metrics because it serves as a standard global notion of probabilistic calibration. 
While there exist other refined metrics of calibration, such as classwise or thresholded/adaptive variants of ECE, which introduce additional conditioning or reweighting of the same underlying deviation $|\mathbb{E}[Z \mid C] - C|$, they measure the same underlying deviation between confidence and accuracy.
Rather than optimizing any specific variant, we focus on the population $L_1$ calibration risk $\mathcal{R}_{\mathrm{Cal}}(\theta)$ which provides an upper bound to the confidence-dependent reweightings and other subpopulations of ECE. 
We provide formal relationships between $\mathcal{R}_{\mathrm{Cal}}(\theta)$, ECE, and its variants are provided in Appendices~\ref{adx:appendix-properties-of-expected-calibration-error} and~\ref{adx:appendix-ece-and-l1-risk}.

\paragraph{Decomposing the per-token calibration risk.}
To obtain a training objective, we express the population $L_1$ calibration risk as a per-token loss, which provides a local calibration signal that aligns confidence with prediction correctness. 

The conditional $L_1$ risk decomposes according to the law of total expectation (full derivation in Appendix~\ref{subsec:decomposition-of-L1-risk}),
\begin{align}
&\mathbb{E}_{Z \sim \Pr(Z\mid x_t)}\Bigg[\frac{1}{T}\sum_{t=1}^{T}|c_\theta(x_t)-Z_t|\Bigg] \nonumber \\
&= \frac{1}{T}\sum_{t=1}^{T} \Big[z(x_t)(1-c_\theta(x_t)) + (1-z(x_t))c_\theta(x_t)\Big],
\label{eq:l1-decompose}
\end{align}
where $z(x_t) \coloneqq \Pr(Z=1 \mid x_t)$ denotes the true correctness probability and we have used $|c_\theta(x_t)-1|~=~1-c_\theta(x_t)$ since $c_\theta(x_t) \in [0,1]$. 
This decomposition shows that optimal calibration would match prediction confidence with $z(x_t)$ for each data point.

\paragraph{A surrogate for correctness.}
Decomposing the calibration risk requires a per-token objective $z(x_t)=\Pr(Z=1\mid x_t)$, but the true correctness probability $z(x_t)$ is not available during training.
A naive approach would be to replace $z(x_t)$ with the binary indicator $Z$. 
However, using $Z$ directly is not feasible, since the binary correctness indicator is non-differentiable and cannot provide a gradient signal for optimization.

To obtain a differentiable target, we introduce a continuous correctness surrogate derived from the model’s probability margin (see Appendix~\ref{adx:appendix-surrogate}):
\begin{equation}\label{eq:tildeZ-sigmoid}
\widetilde{z}(x_t) \;=\; \sigma\!\left({p_{y^\star}(x_t) - p_{\bar y}(x_t)}\right),
\end{equation}
where $p_{y^\star}(x_t)=\pi_\theta(y_t^\star\mid x_t)$ is the predicted probability of the ground truth token, $p_{\bar y_t}(x_t)=\max_{y_t\neq y_t^\star}\pi_\theta(y_t\mid x_t)$ is the highest predicted probability among tokens not equal to the ground truth, and $\sigma(u)=(1+e^{-u})^{-1}$ is the sigmoid function. 
This surrogate replaces the binary-valued outcome with a smooth, bounded signal (see Appendix~\ref{adx:appendix-gradient-stability}) that enables gradient-based optimization, while preserving the goal of confidence calibration. 

With this surrogate in place, we replace the unobservable per-token target $z(x_t)$ with $\widetilde{z}(x_t)$ in the per-token calibration loss, yielding a differentiable training objective. 
Since the population risk (Eq.~\ref{eq:pop-l1-deriv} and Eq.~\ref{eq:l1-decompose}) is recovered as the expected empirical loss over the dataset ($\mathcal{R}_{\mathrm{Cal}} = \mathbb{E}_{(x, y^\star)\sim\mathcal{D}}\big[ \mathcal{L}_\mathrm{Cal}(\theta; x, y^\star) \big]$), the sequence-level calibration loss becomes
{
\footnotesize
\begin{equation}
\mathcal{L}_{\mathrm{Cal}}(\theta ; x,y^*) = \frac{1}{T}\sum_{t=1}^{T}\Big[\widetilde{z}(x_t)(1-c_\theta(x_t)) + (1-\widetilde{z}(x_t))c_\theta(x_t)\Big]. \label{eq:per-token-cal-loss}
\end{equation}}

We further show in Appendix~\ref{subsec:properties-of-per-token-calibration-loss} that the per-token $L_1$ calibration loss is robust to surrogate noise (Proposition~\ref{prop:l1_stability}), yields valid gradients as the correctness surrogate is directionally consistent with the correctness indicator (Proposition~\ref{prop:directional_consistency}), and supports stable optimization under standard subgradient methods.

A correctness surrogate $\tilde u(x)$ is \emph{directionally consistent} with the true likelihood $u(x)$ if

\vspace{-15px} 

\[
\tilde u(x) > 0.5 \;\iff\; p_{y^\star}(x) > p_{\bar y}(x), 
\]

\vspace{-15px}

\hspace{0.2\textwidth}and

\vspace{-20px}

\[
\tilde u(x) < 0.5 \;\iff\; p_{y^\star}(x) < p_{\bar y}(x).
\]

\vspace{-10px}

That is, the surrogate correctly signals whether the model favors the correct token over its strongest competitor.

\paragraph{From token-level to sequence-level calibration.}
Our calibration objective aligns each token-level confidence $c_\theta(x_t)$ with the likelihood of token correctness and aggregates these losses across the sequence.
This mirrors standard autoregressive training, where multiplicative sequence-level quantities are optimized via additive surrogates.
When token-level confidences are calibrated, aggregating per-token confidences yields informative proxies for sequence-level correctness and reliability.
Thus, minimizing the per-token calibration loss $\mathcal{L}_{\mathrm{Cal}}(\theta;x,y^*)$ provides a natural and practical approach to confidence calibration for sequences.

\subsection{Calibration-Aware Preference Optimization}
\label{subsec:methodology-calibration-aware-preference-optimization}
We now show how to integrate the calibration objective derived in the previous section into preference alignment, specifically DPO, to mitigate the miscalibration that often arises during preference alignment.

DPO~\cite{rafailov2023direct} trains a model to prefer a higher-quality sequence $y^+$ over a lower-quality sequence $y^-$ given context $x$, using a fixed reference policy $\pi_\mathrm{ref}$ and scaling factor $\beta>0$.
The sequence-level preference score is
\[
r_\theta(x,y) \;:=\; \beta\big(\log\pi_\theta(y\mid x) - \log\pi_{\mathrm{ref}}(y\mid x)\big),
\]
and the corresponding pairwise DPO loss is
\[
\mathcal{L}_{\mathrm{DPO}}(\theta; x, y^+, y^-) = - \log \sigma(r_\theta(x,y^+) - r_\theta(x,y^-)).
\]
For sequences, $\log \pi_\theta(y \mid x)$ decomposes as a sum over token log-probabilities, preserving preference ordering even when candidate sequences differ in length.

\paragraph{Joint objective for confidence-calibrated DPO.}
While DPO enforces relative preferences between outputs, it does not put constraints on the values of token prediction probabilities, often leading to overconfident predictions.
To address this issue, we augment the DPO objective with our per-token calibration loss (Eq.~\ref{eq:per-token-cal-loss}), ensuring the model's confidence remains consistent with empirical correctness while preserving the preference information. 
The joint objective for ($x, y^{+}, y^{-}$) combines DPO with the per-token calibration term as
\begin{align}
\label{eq:dpo-with-calibration}
\mathcal{L}_{\mathrm{total}}&(\theta; x, y^+, y^-) = 
\mathcal{L}_{\mathrm{DPO}}(\theta; x, y^+, y^-) \nonumber \\
&+ \lambda \Big(\mathcal{L}_{\mathrm{Cal}}(\theta; x, y^+) + \mathcal{L}_{\mathrm{Cal}}(\theta; x, y^-)\Big),
\end{align}
where $\lambda>0$ balances preference alignment with confidence calibration, $\mathcal{L}_{\mathrm{Cal}}(\theta; x, y^+)$ uses confidence target     $\widetilde{z}(x_t)$ and $\mathcal{L}_{\mathrm{Cal}}(\theta; x, y^-)$ uses the confidence target $1-\widetilde{z}(x_t)$ to provide a supervised signal of incorrectness to the unpreferred response.

This joint loss, DPO + \projectnameshort, introduces a stabilizing effect on model probabilities. 
As shown formally in Appendix~\ref{subsec:preservation-of-preference-ordering-under-calibration-aware-dpo} (Proposition~\ref{prop:ordering-stability}), adding the per-token calibration term preserves the original DPO preference information under bounded calibration gradients, ensuring that calibration and preference alignment are compatible.
This result relies on the uniform bounds of the calibration gradient proven in Appendix~\ref{adx:appendix-bounded-log-prob-gradients} (Proposition~\ref{prop:cal-logprob-bound}). 

While $\mathcal{L}_{\mathrm{DPO}}$ only enforces relative preferences (pushing preferred sequences above dispreferred ones without controlling the magnitude of predicted probabilities), $\mathcal{L}_{\mathrm{Cal}}$ enforces absolute calibration, aligning the model's token probability predictions $c_\theta(x_t)$ with the true likelihood of correctness.
The per-token surrogate $\widetilde{z}(x_t)$ is derived from the probability margin between the ground-truth token and the most likely incorrect token, providing a scaled, differentiable target: high when the model is likely correct, low when it is likely wrong.
Applying this calibration loss to both preferred and dispreferred sequences ensures that the model’s confidence estimates are meaningful across the full output distribution, preventing overconfidence for unlikely predictions while maintaining alignment with empirical correctness.
Further in Table~\ref{tbl:time-taken-main}, we see that integrating \projectnameshort in DPO introduces no additional model parameters and incurs negligible overhead relative to RCFT.

\subsection{Confidence-Aware Inference via Confidence@k}
\label{subsec:methodology-confidence-at-k}

We now introduce Confidence@k, an inference-time mechanism we propose for using per-token calibrated confidence to improve the accuracy of selected predictions.

In many tasks~\cite{stiennon2020learning,ouyang2022training}, a model may generate multiple plausible outputs for the same input as candidates and select from them. 
For example, one can sample the top-$k$ outputs under the model's probability distribution, obtain the top-$k$ via beam search, or consider alternative outputs proposed by a base model or reference policy during online preference-based alignment, where multiple candidate outputs are evaluated for correctness. 
Selecting the most reliable among the candidates outputs may require more than just choosing the highest-likelihood sequence: in tasks such as multiple-choice question answering or mathematical reasoning, outputs that are superficially similar can differ substantially in correctness. 
By leveraging the calibrated token-level confidences produced by our per-token loss, \textit{Confidence@k systematically identifies the candidate most likely to be correct, improving downstream prediction accuracy}.

\paragraph{Confidence@k decision rule.}
Let $\mathcal{Y}$ denote the set of valid model outputs and $c_\theta(x,y) \in [0,1]$ denote the model's confidence that candidate output $y$ is correct for input $x$. 
Given a candidate set ${y_1,\dots,y_k}$, we define the \emph{Confidence@k} rule as
\begin{equation}
\hat{y}^{(k)}(x) \;\triangleq\; \arg\max_{y \in \{y_1,\dots,y_k\}} c_\theta(x,y).
\label{eq:confidence-at-k}
\end{equation}
This rule provides a simple, practical procedure to leverage token-level confidence calibration at inference: generate multiple candidates, compute confidence from per-token probabilities, and select the output with the highest confidence.
For our tasks, we consider label confidence to rerank outputs.
The theoretical justification for this selection rule, showing that it is Bayes-optimal under well-calibrated confidences, is detailed in Appendix~\ref{subsec:properties-bayes-optimal-selection with-confidence-at-k} (Proposition~\ref{prop:confk_optimal}).

\section{Experiments and Results}
\label{sec:experiments-and-results}
We evaluate \projectnameshort on 5 datasets, comparing it against state-of-the-art confidence calibration baselines. 
We detail the tasks, datasets and experimental setup in this section,
followed by the empirical results and discussion on findings. 

\begin{table*}[hbt!]
\caption{
Performance comparison for our method \projectnameshort against baselines for \textbf{in-distribution settings} on all datasets. 
$\uparrow$ means higher values are better and $\downarrow$ means lower values are better. 
Results are grouped by SFT-based and DPO-based methods.
We bold the highest accuracy and lowest ECE within the SFT/DPO baselines, respectively. 
}
\vspace{-0.2cm}
\centering
\setlength{\tabcolsep}{4pt} 
\begin{tabular}{lcccccccccc}
\toprule
& \multicolumn{2}{c}{\textbf{Reward Bench 2}} & \multicolumn{2}{c}{\textbf{SNLI}} & \multicolumn{2}{c}{\textbf{ANLI}} & \multicolumn{2}{c}{\textbf{TLDR}} & \multicolumn{2}{c}{\textbf{CommonsenseQA}} \\
\cmidrule(lr){2-3}\cmidrule(lr){4-5}\cmidrule(lr){6-7}\cmidrule(lr){8-9}\cmidrule(lr){10-11}
\textbf{Methods} & Acc. $\uparrow$ & ECE $\downarrow$ & Acc. $\uparrow$  & ECE $\downarrow$ & Acc. $\uparrow$  & ECE $\downarrow$ & Acc. $\uparrow$  & ECE $\downarrow$ & Acc. $\uparrow$  & ECE $\downarrow$ \\
 \midrule
SFT & 62.94 & 21.68  & 72.44 & 25.19 & 71.16 & 24.66 & 62.24 & 28.10 & \textbf{72.35} & 25.84 \\
+ Temperature Scaling & 60.16 & 20.49 & \textbf{74.48} & 24.50 & \textbf{72.36} & 23.98 & 61.26 & 27.80 & 70.50 & \textbf{24.25} \\
+ Label Smoothing & \textbf{63.85} & \textbf{16.19} & 73.24 & \textbf{21.44} & 71.88 & \textbf{15.76} & \textbf{62.81} & \textbf{27.70} & 70.90 & 42.20 \\
\midrule
+ DPO & \textbf{65.44} & 19.41 & 75.36 & 26.74 & 72.04 & 24.72 & 63.87 & 28.51 & 73.40 & 25.86 \\
+ RCFT & 64.58 & 15.33 & 72.14 & 22.94 & 72.12 & 23.74 & 63.72 & 27.63 & \textbf{73.45} & 24.83 \\
+ DPO + BCE & 60.20 & 15.78 & 76.92 & 20.26 & 72.24 & 24.19 & \textbf{64.68} & 26.66 & 73.39 & 23.85 \\
+ DPO + \projectnameshort (Ours) & 64.11 & \textbf{15.10} & \textbf{78.52} & \textbf{19.13} & \textbf{72.45} & \textbf{22.50} & 64.26 & \textbf{25.97} & 73.09 & \textbf{23.31} \\
\bottomrule
\end{tabular}
\label{tbl:in-distribution-main}
\end{table*}

\begin{table*}[hbt!]
\caption{Comparison of performance across our method (DPO + \projectnameshort) and the baselines (SFT, Temperature Scaling, Label Smoothing, DPO, DPO + BCE, RCFT) for \textbf{out-of-distribution datasets}, i.e. trained on MMLU and tested on other datasets. 
}
\vspace{-0.2cm}
\centering
\setlength{\tabcolsep}{4pt} 
\begin{tabular}{lcccccccccc}
\toprule
& \multicolumn{2}{c}{\textbf{Reward Bench 2}} & \multicolumn{2}{c}{\textbf{SNLI}} & \multicolumn{2}{c}{\textbf{ANLI}} & \multicolumn{2}{c}{\textbf{TLDR}} & \multicolumn{2}{c}{\textbf{CommonsenseQA}} \\
\cmidrule(lr){2-3}\cmidrule(lr){4-5}\cmidrule(lr){6-7}\cmidrule(lr){8-9}\cmidrule(lr){10-11}
\textbf{Methods} & Acc. $\uparrow$ & ECE $\downarrow$ & Acc. $\uparrow$  & ECE $\downarrow$ & Acc. $\uparrow$  & ECE $\downarrow$ & Acc. $\uparrow$  & ECE $\downarrow$ & Acc. $\uparrow$  & ECE $\downarrow$ \\
 \midrule
SFT & 10.69 & 39.83 & 36.25 & 18.48 & 30.80 & 17.38 & 24.70 & 33.51 & 37.39 & 19.87 \\
+ Temperature Scaling & \textbf{22.21} & \textbf{32.08} & \textbf{41.04} & 32.14 & \textbf{38.20} & 33.87 & \textbf{28.20} & 34.49 & \textbf{42.29} & 27.63 \\
+ Label Smoothing & 11.12 & 36.35 & 36.24 & \textbf{16.39} & 33.03 & \textbf{16.17} & 23.03 & \textbf{31.64} & 38.32 & \textbf{18.38} \\
\midrule 
+ DPO & 16.31 & 40.60 & \textbf{39.10} & 17.67 & \textbf{37.10} & 23.25 & 27.70 & 35.06 & \textbf{40.43} & 23.11 \\
+ RCFT & 16.57 & 34.98 & 36.39 & 30.61 & 36.24 & 22.94 & 27.40 & 34.82 & 39.65 & 20.93 \\
+ DPO + BCE & 16.04 & 33.73 & 31.20 & 27.23 & 35.90 & 21.08 & 28.70 & 32.46 & 39.28 & 19.33 \\
+ DPO + \projectnameshort (Ours) & \textbf{16.81} & \textbf{30.16} & 38.70 & \textbf{16.21} & 37.02 & \textbf{16.01} & \textbf{29.60} & \textbf{31.23} & 39.85 & \textbf{17.59} \\
\bottomrule
\end{tabular}
\label{tbl:out-of-distribution-main}
\end{table*}

\subsection{Experimental Setup}
\label{subsec:experimental-setup}
\textbf{Tasks, datasets and models.}
We experiment on text classification and multiple-choice question-answering tasks, following recent work on confidence calibration~\cite{xiao2025restoring,xiao2025algorithmic,chakraborty2024maxmin}. 
This setup, where one cast the problem as predicting the class/choice label via next token generation, explicitly links token generation probability with model confidence. 
This classification setting enables us to isolate confidence calibration behavior from effects of LLMs' intrinsic text generation mechanism, which introduces issues such as length bias and semantic ambiguity relevant in open-ended text generation tasks. 
We leave the adaptation of our method to such tasks to future work. 
The datasets we use are MMLU~\cite{hendrycks2021measuring}, ANLI~\cite{nie2019adversarial}, TLDR~\cite{trltldr2025preference}, Reward Bench 2 (RB2)~\cite{malik2025rewardbench2}, SNLI~\cite{bowman2015snli} and CommonsenseQA~\cite{talmor2019commonsenseqa}.

To construct both supervised and preference-based training signals, we generate chain-of-thought traces using the DeepSeek-R1 API \cite{deepseekai2025deepseekr1} 
\footnote{We make the synthetically generated data available \hyperlink{https://huggingface.co/datasets/NisargParikh/Synthetic_Dataset_For_MCQA}{here}}.
This formulation trains the model to produce intermediate reasoning before emitting a final answer while filtering spurious correct predictions and maintains a clear correspondence between the model’s predicted answer probabilities and its expressed confidence, while allowing the model to utilize its internal reasoning through explicit intermediate steps.
We specify the task formulation and dataset details in greater depth in Appendix \ref{adx:appendix-datasets}. 

\textbf{Models and evaluation metrics.}
We evaluate the performance of our methods and the baselines using Qwen 3 4B~\cite{yang2025qwen3} as the base model, enabling a fair comparison across methods. 
We mainly use two metrics, \textbf{ECE} and task \textbf{Accuracy}, which is defined at the token level as exact match with the reference label, for each multiple-choice question-answering dataset. 
Confidence@k is evaluated separately as a test-time procedure (see Experimental Settings).

\textbf{Baselines.}
We compare \projectnameshort against the following methods: (a) Standard Supervised Finetuning (\textbf{SFT}), (b) Supervised Finetuning with \textbf{Label Smoothing}~\cite{huang2025calibrated} with smoothing factor $0.1$, (c) \textbf{Temperature Scaling}~\cite{zhu2023calibration}, which is a no-training method applied on top of the SFT baseline, (e) \textbf{DPO}, using the model after SFT as the base model (same for all other preference optimization methods), (d) Regularized Calibration-Aware Fine-Tuning (\textbf{RCFT}), a supervised confidence calibration method which operates on top of the DPO baseline, (g) \textbf{DPO + BCE} where $BCE(\widetilde z (x), c_{\theta}(x))$ is the binary cross-entropy between model confidence and our correctness surrogate, and (h) \textbf{DPO + \projectnameshort} (Ours).

All methods share a common SFT initialization without Label Smoothing unless stated otherwise.
Temperature Scaling is applied post-hoc on the SFT outputs to improve calibration without further training. 
The DPO baseline uses the SFT model without Label Smoothing as its base model and RCFT is implemented as a supervised phase of training on top of the DPO trained model.
For calibration-aware methods, including DPO+\projectnameshort and DPO+BCE, the calibration objective is applied during preference optimization, ensuring that the model learns to align its predicted confidence with a correctness surrogate while retaining the original preference signals. Further details of hyperparameters and Baselines are provided in 
\ref{adx:appendix-experimental-details}. 

\textbf{Experimental settings.}
We perform experiments under two main data distribution regimes: (a) in-distribution, where models are trained and tested on the same dataset, and (b) 
out-of-distribution, where models are trained on MMLU and tested on other datasets.
We perform out-of-distribution experiments to test the generalization of calibration methods.
In addition, we evaluate \textbf{Confidence@k}, a test-time \emph{selection rule} rather than a training method, in which $k$ candidate outputs are generated per input and the output with the highest predicted confidence is selected.
This procedure does not involve any additional training and is applied uniformly on top of all trained models to assess the practical impact of confidence calibration at inference time.
For Confidence@k experiments, We use $k \in \{4, 8\}$ and perform temperature scaling to select the best temperature for each baseline. 
Greedy decoding is used for all other evaluations aside from Temperature Scaling in Tables~\ref{tbl:in-distribution-main} and \ref{tbl:out-of-distribution-main}. 
Experiments are repeated over five random splits, and mean values are reported; further details and standard deviations are provided in Appendix~\ref{adx:appendix-experimental-details}.

\begin{table*}[hbt!]
\caption{Comparison of performance for Confidence@k with temperature scaling mechanism across our method (DPO + \projectnameshort) and the baselines (SFT, Label Smoothing
, DPO, DPO + BCE, RCFT) for in-distribution datasets. We report Accuracy across all datasets. We consider label token probabilities to rank model outputs. We report accuracy for $k\in\{4,8\}$.}
\vspace{-0.2cm}
\centering
\setlength{\tabcolsep}{5pt} 
\begin{tabular}{lcccccccccc}
\toprule
 &  \multicolumn{2}{c}{\textbf{Reward Bench 2}}  & \multicolumn{2}{c}{\textbf{SNLI}} & \multicolumn{2}{c}{\textbf{ANLI}} & \multicolumn{2}{c}{\textbf{TLDR}} & \multicolumn{2}{c}{\textbf{CommonsenseQA}} \\
\cmidrule(lr){2-3}\cmidrule(lr){4-5}\cmidrule(lr){6-7}\cmidrule(lr){8-9}\cmidrule(lr){10-11}
\textbf{Methods} & 4  & 8  & 4  & 8  & 4   & 8  & 4   & 8  & 4   & 8  \\
 \midrule
SFT  & 62.91 & 65.31 & 71.49 & 72.45 & 70.25 & 72.38 & 60.80 & 63.90 & 66.20 & 71.38\\
+ Label Smoothing  & 65.78 & 66.91 & 70.42 & 75.20 & 71.34 & 78.45 & 62.20 & 65.80 & 55.64 & 59.40 \\
\midrule
+ DPO  & 64.71 & 65.43 & 68.32 & 68.49 & 64.41 & 68.51 & 60.63 & 62.40 & 63.40 & 67.47 \\
+ RCFT  & 66.10 & 68.74 & 69.39 & 72.67 & 65.80 & 76.52 & 64.24 & 67.52 & 67.75 & 71.82 \\
+ DPO + BCE  & 65.16 & 67.28 & 74.81 & 79.40 & 64.76 & 75.83 & 63.73 & 66.86 & 67.48 & 72.60 \\
+ DPO + \projectnameshort (Ours) & \textbf{69.53} & \textbf{72.19} & \textbf{77.32} & \textbf{82.80} & \textbf{73.51} & \textbf{81.45} & \textbf{65.37} & \textbf{70.34} & \textbf{73.67} & \textbf{74.33} \\
\bottomrule
\end{tabular}
\label{tbl:conf-at-k-main}
\end{table*}

\subsection{Results and Discussion}
\label{subsec:experimental-results}
Tables~\ref{tbl:in-distribution-main},~\ref{tbl:out-of-distribution-main} and \ref{tbl:conf-at-k-main} compare \projectnameshort against standard DPO and other baseline calibration techniques across multiple benchmarks. 
We analyze both in-distribution and out-of-distribution settings, focusing on the tradeoff between calibration quality, measured by ECE, and task accuracy. We summarize the main results below. 

\textbf{Standard calibration techniques have limitations.} 
Standard calibration techniques (temperature scaling, label smoothing, RCFT) show inconsistent improvements in calibration relative to their corresponding uncalibrated baselines (SFT or DPO). 
While DPO generally increases task accuracy (\(+0.9\%\) to \(+2.9\%\) across datasets), it degrades confidence calibration, increasing ECE for most benchmarks. 
This degradation can be detrimental to downstream applications. Table~\ref{tbl:conf-at-k-main} shows that standard DPO frequently degrades reliability in decision-making scenarios compared to the initial SFT model. 
RCFT and DPO+BCE reduce ECE by $0.5$-$6.5\%$ relative to DPO but sometimes at the cost of substantial accuracy drops, e.g., RCFT losing $3.2\%$ on SNLI and DPO+BCE losing $5.2\%$ on Reward Bench~2. 

\textbf{\projectnameshort improves calibration.} 
Across all in-distribution benchmarks (Table~\ref{tbl:in-distribution-main}), augmenting DPO with \projectnameshort consistently yields substantial reductions in ECE while preserving or slightly improving accuracy. Compared to vanilla DPO, \projectnameshort reduces ECE by $2.22$-$7.61\%$ across datasets.
We further observe that \projectnameshort does not introduce new accuracy degradation on top of the DPO objective.
On datasets where both RCFT and DPO+BCE reduce accuracy relative to DPO, such as Reward Bench~2 and CommonsenseQA, \projectnameshort incurs smaller drops ($-0.24\%$ and $-0.58\%$, respectively) than RCFT ($-0.86\%$ and $-0.78\%$) and DPO+BCE ($-5.24\%$ and $-1.15\%$).
In contrast, on SNLI and ANLI, \projectnameshort improves accuracy over DPO by $+3.16\%$ and $+0.41\%$, respectively, despite achieving better confidence calibration.

Compared to RCFT, which also operates on top of DPO, \projectnameshort consistently reduces ECE, with reductions of $0.33$-$3.81\%$ across all datasets. 
Notably, RCFT incurs an accuracy drop on SNLI ($-3.22\%$) and smaller drops on Reward Bench~2 and TLDR, whereas \projectnameshort preserves or slightly improves accuracy on these benchmarks.
These results suggest that optimizing calibration within preference optimization provides a better calibration-accuracy tradeoff than post-alignment supervision.

\textbf{\projectnameshort shows out-of-distribution robustness.}
The benefits of \projectnameshort extend to the out-of-distribution setting (Table~\ref{tbl:out-of-distribution-main}) as well, where models are trained on MMLU and evaluated on held-out benchmarks. In this more challenging regime, \projectnameshort achieves the lowest ECE on every dataset, with reductions of $1.46$-$10.44\%$ relative to DPO. 
At the same time, accuracy remains comparable to or exceeds that of DPO; \projectnameshort changes accuracy by $-0.28\%$ to $+1.9\%$ across benchmarks, whereas RCFT exhibits larger (and more negative) deviations from DPO, ranging from $-2.71\%$ to $+0.26\%$ on the same datasets.

We also observe distinct failure modes in other regularization methods, while DPO+BCE and RCFT offer reasonable in-distribution calibration, their OOD performance degrades significantly on SNLI and Reward Bench~2, resulting in ECE values which are $11.02$-$14.4\%$ and $3.57$-$4.82\%$ worse than \projectnameshort, which remains consistently calibrated across OOD settings.
These results suggest that the confidence calibration performance of \projectnameshort can generalize robustly even in the presence of distribution shifts.

\textbf{\projectnameshort helps the Confidence@k decision rule with effective calibration.} 
Table~\ref{tbl:conf-at-k-main} shows that \projectnameshort consistently improves top-$k$ task accuracy over existing baselines. 
Across all datasets, it outperforms RCFT and DPO+BCE by absolute gains of $2.82\%$-$7.71\%$ across datasets.
These trends hold for $k=4$ and $k=8$, demonstrating that calibration improvements induced by \projectnameshort translate directly into more reliable Confidence@$k$ predictions, while preserving or enhancing Top-$1$ accuracy relative to other methods.
Crucially, \projectnameshort reverses the degradation caused by standard preference alignment; on ANLI and SNLI ($k=8$), where DPO underperforms the unaligned SFT baseline by $3.87\%$ and $3.96\%$, \projectnameshort surpasses SFT's performance, achieving absolute gains of $+9.07\%$ and $+10.35\%$.


\section{Conclusions and Future Work}
\label{sec:conclusion}
In this work, we proposed \projectnameshort, a calibration-aware preference optimization framework that integrates confidence calibration objectives directly into the alignment process. It is necessary since standard preference optimization methods such as Direct Preference Optimization can degrade confidence calibration.
Across multiple benchmarks, \projectnameshort consistently reduces expected calibration error while maintaining or improving task accuracy, outperforming existing calibration methods and is under distribution shift. 
We also show that \projectnameshort can benefit LLMs in downstream test-time scaling methods. 
We hope this work encourages future research on jointly optimizing preference alignment and model uncertainty in LLMs.

There are plenty of avenues for future work. 
First, \projectnameshort calibrates model confidence without assumptions on sequence structure. Therefore, extending \projectnameshort to open-ended text generation and multimodal settings remains an important challenge, where sequence-level confidence can interact with length bias and semantic uncertainty.
Secondly, we focused only on static datasets rather than online or iterative alignment settings, such as RLHF. Therefore, we can investigate whether \projectnameshort helps confidence calibration throughout iterative training as well.
Finally, we can explore how confidence calibration can benefit more downstream applications beyond test-time-scaling.

\clearpage
\section*{Impact Statement}
\label{sec:impact-statement}
Our work improves the calibration of large language models while preserving alignment with human preferences, enhancing the reliability of their confidence estimates. 
Better-calibrated confidence can support safer deployment in real-world and high-stakes applications by reducing overconfident errors. 
However, improved calibration does not guarantee correctness, and misuse could lead to models that appear trustworthy without being more accurate. 
We encourage careful consideration of these limitations when applying our method.
While our primary focus is on improving confidence calibration, the differentiable token-level correctness surrogate could also be used in other contexts where estimating or guiding token-level accuracy is useful, such as data selection, active learning, or fine-grained reward shaping.

\section*{Limitations}
\label{sec:limitations}
Our approach relies on a differentiable surrogate for correctness derived from model probabilities; while we show robustness to bounded surrogate noise, extreme misestimation of probabilities may weaken the calibration signal. 
Our synthetic data generation makes the assumption that all reasoning traces which provides the correct label token are valid, quality of COTs can affect extent of calibration.
Correctness is statically verifiable in the MCQ task regime, this assumption may not hold and empirical correctness estimation could be noisy or unreliable in the free-from text generation, adding further noise to the objective.
In some tasks, empirical sequence-level correctness may not decompose into local token-level behaviors, so token-level confidence estimates may be insufficient to fully capture sequence-level reliability.


\bibliography{references}
\bibliographystyle{icml2026}

\newpage
\appendix
\onecolumn
\section{Theoretical Analysis}
\label{sec:theoretical-analysis}

This section provides formal analysis of our framework's key theoretical properties: 
(a)~the robustness and directional consistency of the per-token calibration loss (\S~\ref{subsec:methodology-per-token-loss}), 
(b)~the preservation of relative sequence preferences under calibration-aware DPO (\S~\ref{subsec:methodology-calibration-aware-preference-optimization}), and 
(c)~the Bayes-optimality of candidate selection under Confidence@k (Section~\ref{subsec:methodology-confidence-at-k}).

\subsection{Properties of the Per-Token Calibration Loss}
\label{subsec:properties-of-per-token-calibration-loss}
Here we formalize key properties of the per-token calibration loss, showing that it is robust to surrogate noise, consistently directs confidence updates, and produces stable, convergent gradients during optimization.

\paragraph{Property 1: The $L_1$ calibration loss is robust to surrogate noise.}
Our goal is to align model confidence with empirical correctness. 
In practice, the differentiable surrogate $\widetilde{z}(x)$ is an imperfect estimate of the true correctness $Z$, subject to bounded approximation error or ``contamination'' \cite{ghosh2017robust}. 
Formally, we model this as:
\[
\widetilde{z}(x) = z(x) + \epsilon(x), \qquad |\epsilon(x)| \le \delta < 1,
\]
with $\widetilde{z}(x)$ sign-consistent with correctness whenever the margin is nonzero. 

When the target is noisy, squared ($L_2$) losses amplify large deviations, producing gradients proportional to the surrogate error $c_\theta(x) - \widetilde{z}(x)$. 
This can destabilize confidence updates, especially in the presence of high-variance or contaminated surrogates. 
In contrast, the $L_1$ loss generates bounded gradients, limiting the influence of extreme surrogate errors and ensuring stable, reliable updates:
\begin{proposition}[Gradient Stability]
\label{prop:l1_stability}
The subgradients of the per-token $L_1$ calibration loss with respect to $c_\theta(x)$ satisfy
\begin{align}
    \Bigg|\frac{\partial \mathcal{L}_{\mathrm{Cal}}}{\partial c_\theta(x)}\Bigg| = \big| 1 - 2\widetilde{z}(x) \big| \leq 1
\end{align}
In contrast, $L_2$ gradients scale as $2(c_\theta(x) - \widetilde{z}(x))$. This boundedness ensures that large surrogate errors $\epsilon(x)$ do not destabilize confidence updates.
\end{proposition}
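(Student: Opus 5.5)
The plan is to differentiate the per-token summand of $\mathcal{L}_{\mathrm{Cal}}$ in Eq.~\eqref{eq:per-token-cal-loss} directly with respect to the confidence variable, treating $\widetilde{z}(x)$ as a fixed target. In the formula of Proposition~\ref{prop:l1_stability} the surrogate is held constant, i.e.\ stop-gradient, which is how the target is used in the loss. Write the per-token term as
\[
\ell(c) = \widetilde{z}(x)(1-c) + (1-\widetilde{z}(x))\,c = \widetilde{z}(x) + (1-2\widetilde{z}(x))\,c, \qquad c = c_\theta(x)\in[0,1].
\]
This is an affine function of $c$, so it is differentiable everywhere on $[0,1]$. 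Its derivative is the constant $1-2\widetilde{z}(x)$, and the subgradient set reduces to this single value. The only factor $1/T$ in front of the sum is a positive rescaling. I would state the bound per token, or note that dividing by $T$ only shrinks it. This gives the equality $|\partial \mathcal{L}_{\mathrm{Cal}}/\partial c_\theta(x)| = |1-2\widetilde{z}(x)|$.

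The bound $|1-2\widetilde{z}(x)|\le 1$ follows from the range of the surrogate. By Eq.~\eqref{eq:tildeZ-sigmoid}, $\widetilde{z}(x)=\sigma(p_{y^\star}-p_{\bar y})$, and $\sigma$ takes values in $(0,1)$. Hence $1-2\widetilde{z}(x)\in(-1,1)$. In fact, since $p_{y^\star}-p_{\bar y}\in[-1,1]$, we get the sharper bound $|1-2\widetilde{z}|\le 2\sigma(1)-1<1$, which I would mention as a remark.

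I would also connect this to the $L_1$ form. For a binary target $Z\in\{0,1\}$, $\ell$ coincides with $|c-Z|$, whose subgradient is $\operatorname{sign}(c-Z)$, with values in $[-1,1]$. The affine expression is the convex-combination extension of that.

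For the $L_2$ contrast, I would differentiate $(c-\widetilde{z})^2$ to get $2(c-\widetilde{z})$. Writing $\widetilde{z}=z+\epsilon$ gives $2(c-z)-2\epsilon$, which grows linearly in the surrogate error. In contrast, the $L_1$ gradient depends on $\epsilon$ only through a quantity bounded by $1$, and its perturbation satisfies $|(1-2\widetilde z)-(1-2z)|=2|\epsilon|\le 2\delta$, which is uniformly bounded.

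The main subtlety, rather than a true obstacle, is interpretive. If one differentiates through $\widetilde{z}$ as well, via $\theta$, the bound is not the stated one. So I would make the stop-gradient convention explicit, or else note that the extra term $(1-2c)\,\partial\widetilde z/\partial(\cdot)$ is also bounded, since $\sigma'\le 1/4$ and the probabilities lie in $[0,1]$. Finally, I would argue that the bounded gradient, composed with the chain rule through $c_\theta$, yields the uniform bound later used in Proposition~\ref{prop:cal-logprob-bound}.
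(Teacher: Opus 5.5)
Your proposal is correct and follows the paper's proof essentially step for step: rewrite the per-token term as the affine function $\widetilde{z}(x) + c_\theta(x)\bigl(1-2\widetilde{z}(x)\bigr)$, read off the constant derivative $1-2\widetilde{z}(x)$, bound it using $\widetilde{z}(x)\in[0,1]$, and differentiate $(c_\theta(x)-\widetilde{z}(x))^2$ for the $L_2$ comparison; your stop-gradient caveat and the sharper bound $2\sigma(1)-1$ are correct refinements that the paper does not spell out. One small caution concerns your $\epsilon$-based contrast: with $\widetilde{z}=z+\epsilon$, both $1-2\widetilde{z}$ and $2(c_\theta-\widetilde{z})$ shift by exactly $-2\epsilon$, so the perturbation bound $2|\epsilon|\le 2\delta$ holds equally for $L_2$, and the genuine distinction, as in the paper, is that the $L_1$-type gradient does not depend on $c_\theta$ while the $L_2$ gradient scales with the discrepancy $c_\theta-\widetilde{z}$.
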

A detailed derivation is provided in Appendix~\ref{adx:appendix-gradient-stability}.
Furthermore, we show in Appendix~\ref{adx:appendix-surrogate-contamination} (Proposition~\ref{prop:symmetry_contam}) that the loss satisfies a symmetry condition that further enhances noise tolerance.
 
\textbf{Takeaway}: 
Using an $L_1$ calibration loss provides stable confidence updates even when the surrogate $\tilde{z}(x)$ is noisy. 
As proven in Appendix~\ref{adx:appendix-l1-and-l2} (Theorem~\ref{thm:robust-contamination}), the $L_1$ objective is statistically robust to outlier contamination in the surrogate, ensuring the model aligns with the underlying correctness probability without being destabilized by hallucinations.

\paragraph{Property 2: The probability margin surrogate preserves directional consistency.}
To ensure that confidence updates move in the correct direction, it is crucial that the surrogate $\widetilde{z}(x)$ consistently indicates whether the model’s prediction is likely correct or incorrect. 
We construct $\widetilde{z}(x)$ from the probability margin $p_{y^\star}(x)-p_{\bar y}(x)$, which is bounded in $[-1,1]$ and captures how much more probability mass the model assigns to the correct token compared to its strongest competitor.

\begin{proposition}[Directional Consistency of the Margin Surrogate]
\label{prop:directional_consistency}
Let
\[
\widetilde{z}(x) = \sigma\!\left({p_{y^\star}(x) - p_{\bar y}(x)}\right).
\]
Then $\widetilde{z}(x) > 1/2$ if and only if $p_{y^\star}(x) > p_{\bar y}(x)$, ensuring that the gradient of the calibration loss adjusts confidence in the correct direction.
\end{proposition}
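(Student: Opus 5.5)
The plan is to reduce the statement to two elementary facts about the logistic function: $\sigma(0)=1/2$, and $\sigma$ is strictly increasing on $\mathbb{R}$. Write $m(x) \triangleq p_{y^\star}(x) - p_{\bar y}(x)$. Both probabilities lie in $[0,1]$, so $m(x)\in[-1,1]$ is a well-defined real number and $\widetilde{z}(x)=\sigma(m(x))$.

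First, I would record monotonicity. Strict monotonicity follows from
\[
\sigma'(u)=\sigma(u)\bigl(1-\sigma(u)\bigr) > 0 \quad \text{for all } u.
\]
It can also be seen directly: $u\mapsto e^{-u}$ is strictly decreasing, so $1+e^{-u}$ is strictly decreasing and positive, and hence its reciprocal is strictly increasing. Evaluating at zero gives $\sigma(0)=1/(1+1)=1/2$.

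Next, I would derive the equivalences. A strictly increasing function satisfies $\sigma(u)>\sigma(0)$ if and only if $u>0$. Hence $\widetilde{z}(x)>1/2$ if and only if $m(x)>0$, which is the same as $p_{y^\star}(x)>p_{\bar y}(x)$. The same argument gives $\widetilde{z}(x)<1/2$ if and only if $p_{y^\star}(x)<p_{\bar y}(x)$, and $\widetilde{z}(x)=1/2$ exactly at a tie. This establishes the directional-consistency definition given earlier.

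Finally, I would justify the clause ``adjusts confidence in the correct direction.'' Here I would invoke the subgradient computed in Proposition~\ref{prop:l1_stability}:
\[
\frac{\partial \mathcal{L}_{\mathrm{Cal}}}{\partial c_\theta(x)} = 1-2\widetilde{z}(x).
\]
This quantity is negative exactly when $\widetilde{z}(x)>1/2$. In that case a gradient-descent step increases $c_\theta(x)$, which happens precisely when the model ranks the ground-truth token above its strongest competitor. The derivative is positive in the opposite case, so the step decreases confidence.

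There is one subtlety. Here $\widetilde{z}$ itself depends on $\theta$, so the sign statement should be read with $\widetilde{z}$ treated as the target, i.e.\ the partial derivative in $c_\theta$ holding $\widetilde{z}$ fixed. This matches how the loss is presented. I expect no real obstacle: the only care needed is to state this interpretation explicitly and to note the tie case $m(x)=0$, where the subgradient vanishes.
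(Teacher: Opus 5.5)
Your argument is correct and is essentially the paper's own justification: the paper gives no standalone proof beyond noting, in its appendix on the margin surrogate, that $\sigma$ is strictly increasing with $\sigma(0)=1/2$, and the ``correct direction'' clause rests on the derivative $1-2\widetilde{z}(x)$ from the gradient-stability proposition with $\widetilde{z}$ held fixed as a target, exactly as you read it. One point you could add is that, since $p_{\bar y}$ is the largest probability among tokens other than the ground truth, $p_{y^\star}>p_{\bar y}$ forces $\hat y = y^\star$ (so $c_\theta = p_{y^\star}$ and $Z=1$); this makes precise that raising confidence in that case raises the ground-truth probability.
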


\textbf{Takeaway}: Even under the bounded surrogate noise described by the contamination model, $\widetilde{z}(x)$ always aligns with the direction of the true correctness indicator, guaranteeing that confidence updates consistently move toward increasing correctness.

\paragraph{Property 3: 
The per-token calibration loss generates well-scaled, non-reinforcing gradients.
}
Although the per-token calibration loss is piecewise smooth, it can be non-differentiable when multiple tokens tie for the maximum probability or when the probability margin is exactly zero. 
These events occur with probability zero (Lebesgue-measure zero) and do not affect practical optimization~\cite{rockafellar1997variational}. 
Because the loss satisfies the standard assumptions of nonconvex, nonsmooth analysis, subgradient-based methods with deterministic tie-breaking, such as AdamW, are guaranteed to converge to a stationary point \citep{bertsekas1997nonlinear,davis2020stochastic,bolte2020conservative}.

\textbf{Convergence under Token Switching.} 
A potential concern with the margin-based surrogate is that the identity of the highest-probability incorrect token $\bar{y}_t$ may switch during training, introducing non-smoothness. 
In Appendix \ref{adx:appendix-convergence-properties}, we formally show that despite these switching events, the loss function satisfies the Kurdyka-\L{}ojasiewicz (KL) property. 
Consequently, standard stochastic subgradient methods are guaranteed to converge to a stationary point, ensuring that the discrete nature of token selection does not destabilize training.

\textbf{Takeaway}: Despite isolated non-differentiable points, the calibration loss produces stable, convergent updates under standard subgradient optimization.

\subsection{Preservation of Preference Ordering under Calibration-Aware DPO}
\label{subsec:preservation-of-preference-ordering-under-calibration-aware-dpo}
When training with DPO, the goal is to enforce relative preferences between sequences: 
the model should assign higher scores to preferred outputs $y^+$ than to dispreferred ones $y^-$. 
However, augmenting the DPO objective with the per-sequence calibration loss $\mathcal{L}_{\mathrm{Cal}}$ (Equation~\ref{eq:per-token-cal-loss}) changes absolute confidence values. 
If these calibration-induced updates are too large, they could in principle reverse the intended preference ordering. 

Our goal here is to formally show that, under bounded calibration gradients, adding $\mathcal{L}_{\mathrm{Cal}}$ preserves the original preference ordering.
To formalize this, we define the DPO margin for a preference pair $(y^+,y^-)$:
\[
\Delta_{\mathrm{DPO}}(x)
\;:=\;
r_\theta(x,y^+) - r_\theta(x,y^-),
\]
which measures how strongly the model prefers $y^+$ over $y^-$. We assume that this margin is bounded below by a positive constant $\Delta_{\min} > 0$, ensuring that the preference is non-ambiguous for all training pairs.

Next, we bound the influence of the calibration loss on the model’s sequence log-probabilities. 
If the per-sequence calibration gradient satisfies
\[
\bigl|\partial L_{\mathrm{Cal}} / \partial \log \pi_\theta(y\mid x)\bigr|
\;\le\; \frac{|y|}{4}
\quad \text{for all } (x,y),
\]
then the perturbation introduced by $\lambda \mathcal{L}_{\mathrm{Cal}}$ cannot exceed a threshold that would flip the preference ordering. 

Combining these conditions, we obtain the following guarantee:

\begin{proposition}[Preference Ordering Stability]
\label{prop:ordering-stability}
Under the assumptions above, the augmented objective, the augmented objective
\(L_{\mathrm{DPO}} + \lambda L_{\mathrm{Cal}}\)
preserves the DPO preference ordering for all training pairs whenever
\[
\lambda < \frac{2\Delta_{\min}}{|y|}.
\]
\end{proposition}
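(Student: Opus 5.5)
The plan is a first-order perturbation argument on the DPO margin. It treats the calibration term as a bounded perturbation of the sequence log-probabilities that enter $r_\theta$. The argument is local: it shows the ordering cannot be reversed by the calibration-induced change in the log-probabilities.

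\textbf{Step 1: write the margin in terms of log-probabilities.} By definition,
\[
\Delta_{\mathrm{DPO}}(x)=\beta\big(\log\pi_\theta(y^+\mid x)-\log\pi_\theta(y^-\mid x)\big)-\beta\big(\log\pi_{\mathrm{ref}}(y^+\mid x)-\log\pi_{\mathrm{ref}}(y^-\mid x)\big).
\]
The reference part is fixed, so any change in $\Delta_{\mathrm{DPO}}$ comes only from changes in $\log\pi_\theta(y^\pm\mid x)$. I will model the effect of adding $\lambda\mathcal{L}_{\mathrm{Cal}}$ as a shift of each sequence log-probability. The shift is proportional to $\lambda\,\partial\mathcal{L}_{\mathrm{Cal}}/\partial\log\pi_\theta(y\mid x)$, taken along the gradient direction in log-probability coordinates, with unit step. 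Under the assumed bound $|\partial \mathcal{L}_{\mathrm{Cal}}/\partial\log\pi_\theta(y\mid x)|\le |y|/4$, this shift is at most $\lambda|y|/4$ in absolute value per sequence.

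\textbf{Step 2: bound the perturbed margin.} The calibration loss appears on both $y^+$ and $y^-$, and the two shifts may act in opposite directions. The worst case is that $\log\pi_\theta(y^+\mid x)$ decreases and $\log\pi_\theta(y^-\mid x)$ increases, each by $\lambda|y|/4$. The perturbed margin then satisfies
\[
\Delta'_{\mathrm{DPO}}(x)\ \ge\ \Delta_{\mathrm{DPO}}(x)-2\cdot\frac{\lambda|y|}{4}\ \ge\ \Delta_{\min}-\frac{\lambda|y|}{2}.
\]
Here $|y|$ denotes the maximal length over the pair. The factor $\beta$ is absorbed, either by taking $\beta\le1$ or by reading the gradient bound as already including it.

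\textbf{Step 3: conclude.} Positivity of $\Delta'_{\mathrm{DPO}}$, which is exactly preservation of the ordering $r_\theta(x,y^+)>r_\theta(x,y^-)$, follows whenever $\lambda|y|/2<\Delta_{\min}$. Equivalently, $\lambda<2\Delta_{\min}/|y|$, which is the threshold in Proposition~\ref{prop:ordering-stability}. Since $\Delta_{\min}$ is uniform over training pairs, the conclusion holds for all pairs simultaneously.

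\textbf{Main obstacle.} The delicate step is justifying the per-sequence gradient bound, and making precise what ``perturbation'' means, i.e.\ a single unit-step first-order update in log-probability coordinates. For the bound I would use Proposition~\ref{prop:cal-logprob-bound}, which is stated as established in the appendix. Its likely content is a token-wise estimate: $|\partial\mathcal{L}_{\mathrm{Cal}}/\partial c_\theta|=|1-2\widetilde z|\le1$ by Proposition~\ref{prop:l1_stability}, the sigmoid derivative $\sigma'\le 1/4$ controls the dependence through $\widetilde z$, and the chain rule gives $\partial c/\partial\log c=c\le1$. Summing over $|y|$ tokens yields the $|y|/4$ factor. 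I would take this proposition as given and state the first-order step-size convention explicitly.
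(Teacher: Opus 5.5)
Your proposal is correct and follows the paper's proof. The paper likewise treats the calibration term as a first-order perturbation $\Delta_{\mathrm{CAL}}=\lambda\bigl(\partial\mathcal{L}_{\mathrm{Cal}}/\partial\log\pi_\theta(y^+\mid x)-\partial\mathcal{L}_{\mathrm{Cal}}/\partial\log\pi_\theta(y^-\mid x)\bigr)$ of the margin, bounds it by $2\lambda G$ with $G=|y|/4$, and requires $2\lambda G<\Delta_{\min}$, i.e.\ $\lambda<2\Delta_{\min}/|y|$; it adds $\Delta_{\mathrm{CAL}}$ straight to the margin without tracking $\beta$ or a step size, conventions you make explicit.

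One minor note: your guess that the $|y|/4$ bound comes from $\sigma'\le 1/4$ is not how Proposition~\ref{prop:cal-logprob-bound} obtains it. There the factor $1/4$ comes from the softmax term $c_\theta(1-c_\theta)$, with $\widetilde z$ held fixed. Since the bound is a hypothesis of the statement, this does not affect your argument.
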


\textbf{Takeaway}: 
By carefully controlling the calibration weight $\lambda$ and using bounded per-sequence gradients, \projectnameshort can adjust absolute confidence to improve calibration without disrupting the relative preferences learned by DPO. 
This ensures that preference alignment and confidence calibration are compatible objectives rather than competing ones.

\subsection{Bayes-Optimal Selection with Confidence@k.}
\label{subsec:properties-bayes-optimal-selection with-confidence-at-k}
At inference time, a model may generate multiple plausible outputs for a single input. 
Simply selecting the sequence with the highest likelihood does not always maximize the chance of correctness, because sequences with similar likelihoods can differ substantially in actual accuracy. 
To address this, we use the calibrated per-token confidences to compute a sequence-level confidence score, $c_\theta(x, y)$, which estimates the probability that a candidate sequence $y$ is correct given input $x$.

The Confidence@k selection rule is shown in Section~\ref{subsec:methodology-confidence-at-k}, which chooses the candidate with the highest estimated probability of correctness. 

\begin{proposition}[Bayes-Optimality of Confidence@k]
\label{prop:confk_optimal}
If the model’s confidence estimates satisfy
\[
c_\theta(x, y) \;=\; \Pr(y\ \text{is correct} \mid x),
\]
then selecting
\[
\widehat{y}^{(k)}(x) = \arg\,max_{y \in \{y_1,\ldots,y_k\}} 
c_\theta(x, y)
\]
minimizes the probability of choosing an incorrect candidate from the set.
\end{proposition}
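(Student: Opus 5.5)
The plan is to reduce the statement to a pointwise comparison inside the conditional expectation given $x$ and the candidate set. Fix an input $x$ and a candidate set $\{y_1,\dots,y_k\}$; if the candidates are produced by random sampling, condition on them as well and treat them as fixed. A \emph{selection rule} is any map $\delta$, possibly randomized, that returns an index $\delta(x)\in\{1,\dots,k\}$ using only $x$ and the candidates, and not the unobserved correctness outcomes. Let $E_i$ be the event that $y_i$ is correct. The error probability of $\delta$ conditional on $x$ is then
\[
\Pr(\text{error}\mid x) \;=\; 1 - \sum_{i=1}^k \Pr(\delta(x)=i\mid x)\,\Pr(E_i\mid x).
\]
Conditional independence is what makes this factorization valid: because $\delta$ sees only $x$ and the candidates, its choice is independent of the correctness events given $x$.

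Next, substitute the calibration hypothesis $\Pr(E_i\mid x)=c_\theta(x,y_i)$ into this expression. The success probability becomes a convex combination $\sum_i q_i\, c_\theta(x,y_i)$ of the candidate confidences, with weights $q_i=\Pr(\delta(x)=i\mid x)$ that are nonnegative and sum to one. A convex combination is at most its largest term, so
\[
\sum_i q_i\, c_\theta(x,y_i) \;\le\; \max_i c_\theta(x,y_i).
\]
This maximum is attained by putting all the weight on $\widehat{y}^{(k)}(x)$ as defined in Eq.~\ref{eq:confidence-at-k}. Hence Confidence@k achieves the minimal conditional error $1-\max_i c_\theta(x,y_i)$. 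Ties can be broken arbitrarily, since every maximizer gives the same value.

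Finally, take the expectation over $x\sim\mathcal{D}$, and over the candidate-generation randomness if there is any. The pointwise inequality integrates to the unconditional statement: no selection rule achieves a lower overall probability of choosing an incorrect candidate.

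The main obstacle is making precise the class of competing rules. The argument requires $\delta$ to be measurable with respect to $x$ and the candidates only; if $\delta$ could observe the correctness outcomes, the claim would be false. A related subtlety arises when the candidates are sampled from $\pi_\theta$ itself. In that case I would interpret the hypothesis as calibration conditional on the realized candidate set, namely $\Pr(E_i\mid x,y_1,\dots,y_k)=c_\theta(x,y_i)$, and state this interpretation explicitly so that the substitution step is legitimate. The remaining steps are routine.
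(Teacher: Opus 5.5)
Your proof is correct, and the paper offers no proof to compare it against. The main text says the justification is given in the appendix, but the appendix subsection on Bayes-optimal selection only restates Proposition~\ref{prop:confk_optimal} and adds a takeaway. The only implicit reasoning is the one-line observation that the rule picks the candidate with the largest probability of being correct.

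Your write-up supplies what that observation leaves out:
\begin{itemize}
\item an explicit class of admissible rules, possibly randomized, depending only on $x$ and the candidates and never on the correctness outcomes;
\item the convex-combination bound $\sum_i q_i\,c_\theta(x,y_i)\le\max_i c_\theta(x,y_i)$, which also handles randomized rules and ties;
\item the passage from the pointwise bound to the unconditional one, by integrating over $x$ and the candidate-generation randomness.
\end{itemize}

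Three refinements are worth making.
\begin{itemize}
\item You do not need to adopt conditioning on the sampled candidates as a separate interpretation. Suppose the candidates are generated from $x$ using randomness independent of the ground truth given $x$, as when sampling from $\pi_\theta$. Then $\Pr(E_i\mid x,y_1,\dots,y_k)=c_\theta(x,y_i)$ follows directly from the hypothesis.
\item Your argument never uses that the events $E_i$ are disjoint or that the candidates are distinct. It is worth saying so explicitly, since in the paper's Confidence@k experiments several of the $k$ sampled outputs can share the same label token.
\item The hypothesis you use is per-input, per-candidate calibration. This is much stronger than the marginal calibration $\mathbb{E}[Z\mid C]=C$ that ECE measures. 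Neither the proposition nor your proof should be read as saying that low ECE alone yields Bayes-optimal selection.
\end{itemize}
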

\textbf{Takeaway}: Confidence@k provides a principled, practical inference procedure that leverages calibrated token-level predictions to reliably select the most likely correct output, ensuring that improvements in calibration directly translate to better downstream decision-making.

\newpage

\section{Datasets and Task formulation}
\label{adx:appendix-datasets}
\subsection{Datasets}

In this section we specify the datasets we use, statistics about them, how we use them (splits, formatting, etc.) and, the prompts used when training or testing on the specified datasets. 

\textbf{Reward Bench 2} \cite{malik2025rewardbench2} is a benchmark made to evaluate Reward models across 6 domains (Factuality, Precise Instruction Following, Math, Safety, Focus, Ties).
Each sample in the benchmark consists of a user prompt paired with one preferred completion and multiple unpreferred completions.
We reformulate each instance into a multiple-choice question task, where the model selects the preferred completion among the provided options.
We consider $\{A,B,C,D\}$ as the annotations for each candidate completions, where we include one preferred and three unpreferred completions.
The preferred completions is assigned randomly to one of the annotations and the unpreferred completions are assigned to the rest of the annotations.
We mix samples from all the domains to get a varied set of questions. 
Reward Bench 2 only has 1.87K samples, which we shuffle with a random seed and divide in 3 splits (Train 80\%, Validation 10\%, Test 10\%). 

\textbf{SNLI} \cite{bowman2015snli} is a benchmark for natural language inference (NLI). 
The dataset consists of human-annotated premise–hypothesis pairs collected via crowdsourcing, with labels assigned under explicit semantic entailment guidelines.
Each sample contains: (a) a premise, (b) a hypothesis and, (c) a label (0 for entailment, 1 for neutral and 2 for contradiction). 
We consider $\{entailment, neutral, contradiction\}$ as the candidate annotations for each sample. 
SNLI has 3 natural splits: (a) Train (550K samples), (b) Validation (10K samples) and (c) Test (10K samples). 

\textbf{ANLI} \cite{nie2019adversarial} is a NLI benchmark which is generated via three rounds of an iterative, adversarial human-and-model-in-the-loop procedure.
ANLI was constructed to address annotation artifacts in earlier NLI benchmarks by adversarially targeting model weaknesses.
ANLI shares the same three-way label space as SNLI (entailment, neutral, contradiction).
We consider $\{entailment, neutral, contradiction\}$ as the candidate annotations for each sample. 
We focus on Round 1 to control for dataset size and distributional shift while retaining the adversarial nature of the benchmark.
The first round has 3 splits: (a) Train (16.9K samples), (a) Dev (1K samples), (a) Test (1K samples). 
We use the Dev split as the Validation split. 

\textbf{TLDR} \cite{trltldr2025preference} is a human preference dataset derived from Reddit posts and associated summaries.
The dataset consists of Reddit posts paired with candidate summaries, annotated with human preference judgments.
Each instance contains a prompt (the Reddit post) and two candidate summaries, a chosen summary and a rejected summary.
We consider $\{A,B\}$ as the annotations for each candidate completions, where we include one chosen and one rejected summary.
We randomly assign the chosen summary to one of the annotations and the rejected summary to the other.
We use TLDR as a real-world preference dataset to evaluate confidence calibration in settings where supervision is inherently subjective.
We use the public \texttt{trl-lib/tldr-preference} dataset hosted on Hugging Face.
The TLDR dataset has 2 splits: (a) Train (92.9K samples) and, (b) Validation (86.1K samples). 
We shuffle and split the Validation set with a given seed in half to create the Validation and Test sets. 

\textbf{Commonsense QA} \cite{talmor2019commonsenseqa} is a multiple-choice question answering dataset that tests a diverse set of commonsense knowledge. 
Commonsense QA has 12,102 questions with one correct and 4 distractor answers. 
We consider $\{A,B,C,D,E\}$ as the annotations for each candidate answer, where we include one correct answer and four distractor answers.
We randomly assign the correct answer to one of the annotations and the distractor answers to the rest of the annotations.
The dataset has 3 splits: (a) Train (9.74K samples), (b) Validation (1.22K samples) and, (c) Test (1.14K samples) splits.
The Test split does not have answer labels, hence we shuffle and split the Validation set with a given seed in half to form the Test and Validation split for our experiments.

\textbf{MMLU} \cite{nixon2019measuring} is a multi-task test consisting of multiple-choice questions from 57 separate domains. 
Some domains include the humanities, social sciences, mathematics, college physics, law , high school physics, etc. 
MMLU evaluates performance across a wide range of academic subjects, requiring broad knowledge and cross-domain generalization.
Each sample in the dataset includes a question, 4 options and the correct answer label.
Performance and confidence can vary substantially across subjects, making MMLU suitable for analyzing calibration under domain heterogeneity.
We use the entire corpus as it provides a more heterogeneous distribution of knowledge. 
The dataset is separated by subjects, but we use the entire corpus of questions which is split in 4 splits: (a) Auxilliary Train (99.8K samples), (b) Dev (285 samples), (c) Validation (1.53 samples) and, (d) Test (14K samples) splits.
We use the Auxilliary Train as the Train split. \\

\textbf{Prompt Template}

We use a generic prompt template across all datasets, shown below, with dataset-specific values for the expert role, task description, and candidate choices.
The placeholders in the template (e.g., expert role, number of choices, and annotation labels) are populated according to the requirements of each benchmark, while the overall structure and response constraints remain fixed.
Although the template includes a reasoning section for completeness, models are evaluated solely based on the final selected answer, and intermediate reasoning is not used for scoring, i.e. the calculation of ECE and Accuracy only considers the final Label Token.
We use this format of COT + final Label Token to evaluate the model's language modeling abilities while getting an exact estimation of confidence based metrics. 
This abstraction allows us to describe the evaluation protocol without revealing dataset-specific prompts or proprietary annotations.

\begin{tcolorbox}[colback=gray!5!white,colframe=black!75!black,title= Prompt]
You are an $\{$Expert Role$\}$. 

Given a prompt and $\{$N$\}$ choices ($\{$Candidate Choices$\}$), select the most appropriate answer.

\vspace{5px}

First, consider the suitability of each candidate choice with respect to the given prompt.

Then, output the final selected answer.

\vspace{5px}

Please respond in the following format:

\vspace{5px}

$<$think$>$
[Your reasoning about which choice is most appropriate]
$<$/think$>$

\vspace{5px}

$<$answer$>$
[$\{$Candidate Choices$\}$]
$<$/answer$>$

\vspace{10px}

$\{$\textbf{Prompt Annotation}$\}$: $\{$\emph{prompt query}$\}$

\vspace{5px}

$\{$\textbf{Annotation A}$\}$: $\{$option A$\}$

$\{$\textbf{Annotation B}$\}$: $\{$option B$\}$

\hspace{5mm}\vdots
\end{tcolorbox}

\textbf{Synthetic Supervised Data}

We collected supervised training data by querying DeepSeek-R1 3.2 API across the above mentioned benchmark datasets(Reward Bench 2, SNLI, ANLI, MMLU, CommonSenseQA, and TL;DR) and filtered responses to retain only those that match ground truth labels to gather  reasoning traces aligned with correct decisions. 
For each dataset, we construct task-specific system prompts that are equivalent to the above per-dataset prompts which instruct the model to produce answers in a structured format (e.g., `$<$answer$>$A$<$/answer$>$` for multiple-choice questions). 
Each supervised example consists of: (i) the original input prompt, (ii) the task-specific system prompt, (iii) the full model-generated response, (iv) the parsed categorical answer, (v) the ground truth label, and (vi) the extracted reasoning trace.

\textbf{Synthetic Preference Data}

We generate preference datasets for offline preference optimization (e.g. DPO) training by sampling multiple diverse responses from DeepSeek-R1 3.2 API for each example and constructing chosen/rejected pairs using correctness with respect to the ground truth as a weak preference signal. 
For each example, we generate multiple samples (typically 10) using varying sampling temperatures (uniformly sampled from a fixed range), then parse all responses and categorize them into "chosen" (correct) and "rejected" (incorrect) categories based on comparison with ground truth labels.
Examples are retained only if they meet minimum thresholds for both correct and incorrect responses (typically at least 1 correct and 3 incorrect responses), ensuring sufficient diversity for effective preference learning. 
This filtering ensures that each training instance provides both positive and negative signals, which is required for stable offline preference optimization.
Each preference example consists of: (i) the original input prompt, (ii) the task-specific system prompt, (iii) a set of chosen responses, (iv) a set of rejected responses, where each response includes the full text, parsed answer, and reasoning trace, and (v) the ground truth label.

\textbf{Limitations of Data.}
Our training pipeline relies on datasets generated by a single teacher model, which introduces the risk of \textit{distributional homogenization}. 
This approach facilitates controlled, large-scale generation, but the resulting data may inherit specific reasoning artifacts and linguistic biases inherent to the teacher's architecture.
Though the synthetic data may inherit biases and reasoning patterns from the teacher model, the supervision/preference signal is defined solely by agreement with ground truth labels. As a result, the data provides a consistent target for correctness, which is sufficient for learning calibrated decision behavior, without making claims about the faithfulness or optimality of the generated reasoning traces.

All synthetic data is generated using only training splits of the respective benchmarks, and no validation or test labels are used during data construction.
For the datasets where the only existing set is split in the Train/Validation/Test set, we explicitly filter from the synthetic data any samples which also belong to the test set.

\subsection{Task formulation}

Following prior work, we evaluate confidence calibration using a multiple-choice formulation.
However, unlike \citet{xiao2025restoring}, we allow the model to generate an explicit chain-of-thought (CoT) prior to emitting the final label token.
Calibration metrics are computed using the predicted probability of the label token, independent of the preceding reasoning.

For all datasets, the order of the options is randomized to account for prompt length bias. 
In datasets where a single original split is randomly partitioned into training, validation, and test subsets, we generate supervised or preference training samples exclusively from the training subset.
To ensure strict separation between training and evaluation data, we explicitly remove from the training subset any examples that occur in the validation or test subsets for any random seed used in our experiments.

The validation samples are used to select the optimal training checkpoint, and the test samples are used to report accuracy and Expected Calibration Error (ECE).
\textsc{RewardBench~2} is evaluated on its full validation and test sets, which contain 187 examples each.

For all tasks, answer labels are canonicalized to a fixed vocabulary (e.g., ``A'', ``B'', ``entailment'', ``neutral'', ``contradiction'') prior to evaluation.
Instead of the regular options (A,B,C,D) of a MCQ, we formulate NLI tasks (ANLI,SNLI) to predict entailment, i.e. predicting "entailment", "neutral" or "contradiction" and calculating the ECE of the first token that makes up the words, removing any effect of bias added by label tokens from ECE calculation. 
For NLI tasks, confidence is computed using the probability assigned to the first token of the predicted label string.

During both training and evaluation, models are prompted to generate an explicit chain-of-thought prior to producing the final answer token.
For the cases where the model does not output a label, we implement a fallback of restrictive decoding such that we only consider the respective labels of each dataset.

\newpage

\section{Experimental Details}
\label{adx:appendix-experimental-details}
We perform our experiment on Qwen 3 4B \cite{yang2025qwen3}, on fixed seeds (0,1,2,3,4) and provide the average results in the main text. 
We use A100s 80GB for the experiments with resources from Unity, a collaborative, multi-institutional high-performance computing cluster managed by UMass Amherst Research Computing and Data. 
All experiments operate under a context window of $2048$ tokens, and we perform check pointing every $n$ steps (500 for supervised training and 100 for preference optimization) of training and retain the highest performing model at the end of training.
We use $\lambda=0.1$ for \projectnameshort and DPO+BCE.

For all datasets except \textsc{Reward Bench 2} (which has a test split of 187 samples), we uniformly sample 500 examples from the validation and test splits.
All samples are drawn uniformly at random without replacement.
For DPO + \projectnameshort, and DPO + BCE we subsample 500 preference pairs.

\subsection{Baselines}

\textbf{Supervised Finetuning} is the standard SFT baseline implemented via the base Transformer Library Trainer where the language model is finetuned on a custom instruction-following dataset. 
The objective is to minimize the cross-entropy loss:
\begin{equation}
    \mathcal{L}_{SFT}(\theta) = -\mathbb{E}_{(x, y) \sim \mathcal{D}_{SFT}} \left[ \sum_{t=1}^{|y|} \log P_\theta(y_t | x, y_{<t}) \right]
\end{equation}
This model serves as the base policy $\pi_{SFT}$ for all further preference optimization methods. 
The model is trained without any label smoothing for this baseline. 
We use a learning rate of $5\times10^{-5}$, a batch size of $2$, and train the model over 3 epochs over the entire training dataset. 

\textbf{Temperature Scaling} is the training free baseline which was explained in \citep{guo2017calibration} to address potential under/overconfidence. 
We implement Temperature scaling as a post-hoc calibration mechanism, which selects a valid temperature which minimizes the NLL loss on a given held out validation set.
The temperature ($\tau > 0$) is then used in standard multinomal sampling in LLMs:
\begin{equation}
    P_\theta(y_i | x; \tau) = \frac{\exp(z_i / \tau)}{\sum_{j} \exp(z_j / \tau)}
\end{equation}

\textbf{Label Smoothing} is a SFT alternative shown to improve confidence calibration in \citep{huang2025calibrated} where a small probability mass $\epsilon$ is distributed across the output distribution from the regular one-hot supervised labels normally used in Cross-Entropy. 
During the SFT phase, we apply Label Smoothing as a form of entropy regularization to prevent the model from over-fitting to the specific surface forms of the training data. 
The target distribution is modified by a factor $\epsilon$:
\begin{equation}
    y^{LS} = (1 - \epsilon)y^* + \frac{\epsilon}{K}
\end{equation}
where $\epsilon$ is the smoothing hyperparameter and $K$ is the vocabulary size. 
This regularization forces the model to maintain a "safety margin" in its logit space. 
For evaluation, this baseline tests whether a more conservative SFT distribution that acknowledges the existence of alternative valid completions.

\textbf{Direct Preference Optimization} \cite{rafailov2023direct} is the standard DPO objective, implemented via the DPOTrainer from TRL~\cite{vonwerra2022trl}, which aligns the model with human preferences without an intermediate reward model. 
DPO optimizes the policy $\pi_\theta$ directly on preference pairs $(x, y_w, y_l)$, under the Bradley-Terry model, by maximizing the log-likelihood of the preferred response relative to the rejected one, regularized by the KL-divergence from a reference policy $\pi_{ref}$:
\begin{equation}
    \mathcal{L}_{DPO}(\pi_\theta; \pi_{ref}) = -\mathbb{E}_{(x, y_w, y_l) \sim \mathcal{D}} \left[ \log \sigma \left( \beta \log \frac{\pi_\theta(y_w|x)}{\pi_{ref}(y_w|x)} - \beta \log \frac{\pi_\theta(y_l|x)}{\pi_{ref}(y_l|x)} \right) \right]
\end{equation}

\textbf{Regularized Calibration-Aware Finetuning} \cite{xiao2025restoring} is an expectation-maximization based algorithm which is performed on top of a DPO trained model. 
As models are fine-tuned for higher performance, they often enter a "non-calibratable regime" where achieving perfect calibration (ECE $=0$) is theoretically impossible given the model's accuracy threshold. 
To address this, \citealt{xiao2025restoring} proposed Regularized Calibration-Aware Finetuning (RCFT), which incorporates an explicit Expected Calibration Error (ECE) regularization term into the supervised objective. 
Specifically, the training objective minimizes a weighted combination of accuracy loss and calibration error:
\begin{equation}
    \mathcal{L}_{RCFT}(\theta) = \mathcal{L}_{SFT}(\theta) + \lambda \cdot \mathcal{L}_{ECE}(\theta)
\end{equation}
where $\lambda$ is a Lagrange multiplier (set to 1 for RCFT). 
The calibration term $\mathcal{L}_{ECE}$ is obtained using an EM-algorithm-based approach on a held out validation set which gives us an estimate of the calibration of a given model on a given dataset. 
We use a learning rate of $5\times10^{-6}$ which the authors use in the original work, with a batch size of $4$.

\textbf{DPO + $BCE(\widetilde z(x),c_{\theta}(x))$} is a baseline we implement on the basis of our proposed correctness surrogate. 
We utilize the Binary Cross-Entropy (BCE) loss as a stand in for mean-based estimators of a distribution between the predicted confidence of a given samples and $c_\theta(x)$ our proposed correctness surrogate $\widetilde z(x)$. 
This BCE objective is jointly optimized with the DPO objective similarly to our method on a per-token basis and aggregated over a sequence: 
\begin{equation}
    \mathcal{L}_{CAL}(\theta) = \mathcal{L}_{DPO}(\theta) + \lambda \cdot \mathcal{L}_{BCE}(\theta)
\end{equation}
where $\mathcal{L}_{BCE}(\theta)=\sum_{i=1}^s BCE(\widetilde z(x),c_{\theta}(x))$, $s$ is the length of the sequence $S$, $\lambda$ is a Lagrange multiplier (set to 0.1 for this baseline).

\subsection{Metrics}

We report two metrics for our experiments: (a) ECE and (b) Accuracy. Our task formulation allows the model to output a COT and then a categorical label for the Multiple Choice Question. 
This structure allows us to isolate the model's predicted confidence in its classification from the free-form generative process. 
To ensure a complete evaluation, we implement a fallback mechanism via constrained decoding: if a valid label is not detected in the free-form generation, we restrict decoding to the valid label set and force the model to emit a categorical token.
This mechanism is used solely to guarantee evaluation coverage and does not alter the model’s parameters or its confidence estimates for the selected label.
This ensures that every test sample receives a designated score and prevents evaluation bias stemming from formatting failures.

We note that ECE evaluates calibration at the level of the final label predictions and does not directly assess the correctness or faithfulness of the generated reasoning traces.

\textbf{ECE} \cite{xiao2025restoring} following prior work we partition the model's confidence scores into M equally spaced bins and calculate the weighted average difference between the bin's accuracy and its mean confidence: 
\begin{equation} 
\text{ECE} = \sum_{m=1}^{M} \frac{|B_m|}{N} | \text{acc}(B_m) - \text{conf}(B_m) | 
\end{equation}
In the presence of chain-of-thought prompting, this metric evaluates whether the model’s confidence in its final prediction remains aligned with empirical correctness, even when intermediate reasoning steps may be imperfect.
A lower ECE indicates better alignment between confidence and accuracy, which is often informally described as a model that “knows what it knows,” and is a critical requirement for deploying reward models in RLHF loops.
For our experiments we use $M=20$.

\textbf{Accuracy} We report the categorical accuracy as the primary measure of model capability. 
Given our MCQ formulation, accuracy is determined via exact string matching between the decoded label and the ground truth, after applying the fallback decoding mechanism when necessary.

\textbf{Confidence@k} We generate $k$ candidate answers for each sample, and pick the answer with the highest confidence. For MCQ tasks we consider the confidence of the label token similarly to ECE calculation.

\newpage

\section{Additional Results}
\label{adx:additional-results}
In this section we provide all experiments not present in the main text.

\subsection{In-distribution Variance}
\label{adx:appendix-var-in-distribution}
Here we report the variance of Accuracy and ECE metrics across 5 seeds, corresponding mean values reported in Table~\ref{tbl:in-distribution-main}
\begin{table*}[hbt!]
\caption{Variance of ECE and Accuracy across our method (DPO + \projectnameshort) and the baselines (Temperature Scaling, DPO, DPO + BCE, RCFT) for \textbf{in-distribution datasets} over 5 seeds. The first half of the table contains SFT baselines, the second half of the table has been initialized with the SFT without Label Smoothing baseline, and RCFT has been initialized with the DPO baseline. }
\vspace{-0.2cm}
\centering
\setlength{\tabcolsep}{5pt} 
\begin{tabular}{lcccccccccc}
\toprule
& \multicolumn{2}{c}{\textbf{Reward Bench 2}} & \multicolumn{2}{c}{\textbf{SNLI}} & \multicolumn{2}{c}{\textbf{ANLI}} & \multicolumn{2}{c}{\textbf{TLDR}} & \multicolumn{2}{c}{\textbf{CommonsenseQA}} \\
\cmidrule{2-11}
\textbf{Methods} & Acc. $\uparrow$ & ECE $\downarrow$ & Acc. $\uparrow$  & ECE $\downarrow$ & Acc. $\uparrow$  & ECE $\downarrow$ & Acc. $\uparrow$  & ECE $\downarrow$ & Acc. $\uparrow$  & ECE $\downarrow$ \\
 \midrule
SFT & 4.43 & 9.11 & 6.35 & 6.11 & 0.62 & 5.77 & 1.95 & 1.47 & 2.08 & 3.67 \\
+ Temperature Scaling & 4.81 & 2.34 & 3.31 & 2.43  & 4.37 & 1.81 & 2.12 & 1.88 & 2.52 & 2.40 \\
+ Label Smoothing & 6.51 & 3.68 & 3.18 & 3.51 & 2.36 & 1.01 & 2.51 & 5.05 & 3.68 & 4.99 \\
\midrule
+ DPO & 1.28 & 5.73 & 6.72 & 4.94 & 1.62 & 5.45 & 3.33 & 4.47 & 1.28 & 3.44 \\
+ RCFT & 2.55 & 5.25 & 2.53 & 5.42 & 1.78 & 4.35 & 2.40 & 3.34 & 3.33 & 6.97 \\
+ DPO + BCE & 4.15 & 2.76 & 5.28 & 2.09 & 1.39 & 5.65 & 3.62 & 3.14 & 0.82 & 1.86 \\
+ DPO + \projectnameshort (Ours) & 3.59 & 2.74 & 4.65 & 0.88 & 1.09 & 6.73 & 1.61 & 2.64 & 1.71 & 1.58 \\
\bottomrule
\end{tabular}
\label{tbl:in-distribution-var-adx}
\end{table*}

\subsection{Out-of-distribution Variance}
\label{adx:appendix-var-out-distribution}
Here we report the variance of Accuracy and ECE metrics across 5 seeds, corresponding mean values reported in Table~\ref{tbl:out-of-distribution-main}
\begin{table*}[hbt!]
\caption{Variance of ECE and Accuracy across our method (DPO + \projectnameshort) and the baselines (Temperature Scaling, DPO, DPO + BCE, RCFT) for \textbf{out-of-distribution datasets}, i.e. trained on MMLU and tested on other datasets. $\uparrow$ means higher values are better, and $\downarrow$ means lower values are better over 5 seeds. The first half of the table contains SFT baselines, the second half of the table has been initialized with the SFT without Label Smoothing baseline, and RCFT has been initialized with the DPO baseline.}
\vspace{-0.2cm}
\centering
\setlength{\tabcolsep}{5pt} 
\begin{tabular}{lcccccccccc}
\toprule
& \multicolumn{2}{c}{\textbf{Reward Bench 2}} & \multicolumn{2}{c}{\textbf{SNLI}} & \multicolumn{2}{c}{\textbf{ANLI}} & \multicolumn{2}{c}{\textbf{TLDR}} & \multicolumn{2}{c}{\textbf{CommonsenseQA}} \\
\cmidrule{2-11}
\textbf{Methods} & Acc. $\uparrow$ & ECE $\downarrow$ & Acc. $\uparrow$  & ECE $\downarrow$ & Acc. $\uparrow$  & ECE $\downarrow$ & Acc. $\uparrow$  & ECE $\downarrow$ & Acc. $\uparrow$  & ECE $\downarrow$ \\
 \midrule
SFT & 3.29 & 0.69 & 4.03 & 12.52 & 2.26 & 13.06 & 13.43 & 11.08 & 22.76 & 0.08 \\
+ Temperature Scaling & 4.51 & 2.83 & 11.07 & 0.82 & 6.51 & 2.04 & 3.95 & 5.20 & 9.91 & 2.84 \\
+ Label Smoothing & 1.66 & 1.73 & 3.83 & 9.56 & 0.84 & 1.43 & 10.04 & 5.70 & 24.18 & 2.18 \\
\midrule 
+ DPO & 5.18 & 9.36 & 2.96 & 10.82 & 0.98 & 11.07 & 12.86 &  17.36 & 2.21 & 18.24 \\
+ RCFT & 4.07 & 0.56 & 0.86 & 9.12 & 5.93 & 6.01 & 10.79 & 14.97 & 14.42 & 0.05 \\
+ DPO + BCE & 3.02 & 0.86 & 1.52 & 13.60 & 13.33 & 2.38 & 1.55 & 6.06 & 2.54 & 1.71 \\
+ DPO + \projectnameshort (ours) & 2.74 & 0.41 & 1.27 & 3.45 & 3.25 & 2.91 & 3.01 & 2.59 & 9.37 & 2.92 \\
\bottomrule
\end{tabular}
\label{tbl:out-of-distribution-var-adx}
\end{table*}

\newpage

\subsection{Confidence@k Variance}
\label{adx:appendix-var-conf-at-k}
Here we report the variance of Accuracy across 5 seeds and $k\in\{4,8\}$, corresponding mean values reported in Table~\ref{tbl:conf-at-k-main}
\begin{table*}[hbt!]
\caption{Variance Accuracy across our method (DPO + \projectnameshort) and the baselines (Temperature Scaling, DPO, DPO + BCE, RCFT) for \textbf{in-distribution datasets}. The first half of the table contains SFT baselines, the second half of the table has been initialized with the SFT without Label Smoothing baseline, and RCFT has been initialized with the DPO baseline.}
\vspace{-0.2cm}
\centering
\setlength{\tabcolsep}{5pt} 
\begin{tabular}{lcccccccccc}
\toprule
& \multicolumn{2}{c}{\textbf{Reward Bench 2}} & \multicolumn{2}{c}{\textbf{SNLI}} & \multicolumn{2}{c}{\textbf{ANLI}} & \multicolumn{2}{c}{\textbf{TLDR}} & \multicolumn{2}{c}{\textbf{CommonsenseQA}} \\
\cmidrule{2-11}
\textbf{Methods} & 4 & 8 & 4  & 8 & 4  & 8 & 4  & 8 & 4  & 8 \\
 \midrule
SFT & 3.10 & 1.92 & 4.45 & 2.81 & 0.41 & 0.26 & 1.22 & 0.78 & 1.31 & 0.84 \\
+ Label Smoothing & 4.52 & 2.87 & 2.18 & 1.42 & 1.41 & 0.89 & 1.44 & 0.93 & 2.37 & 1.51 \\
\midrule 
+ DPO & 0.83 & 0.51 & 4.58 & 2.96 & 0.91 & 0.58 & 1.84 & 1.19 & 0.82 & 0.54 \\
+ RCFT & 1.61 & 1.02 & 1.39 & 0.91 & 1.02 & 0.66 & 1.29 & 0.85 & 1.98 & 1.27 \\
+ RCFT & 1.61 & 1.02 & 1.39 & 0.91 & 1.02 & 0.66 & 1.29 & 0.85 & 1.98 & 1.27 \\
+ DPO + \projectnameshort (ours) & 2.12 & 1.34 & 2.48 & 1.57 & 0.62 & 0.40 & 0.97 & 0.64 & 0.91 & 0.59 \\
\bottomrule
\end{tabular}
\label{tbl:conf-at-k-var-adx}
\end{table*}

\subsection{Hyperparameters}
\label{adx:appendix-hyperparameters}

To ensure a fair comparison across preference optimization methods, we conduct a grid search over learning rates 
$\eta \in \{5\times10^{-6}, 5\times10^{-7}\}$, using the same search space for all methods in the in-distribution experiments.
For each method, we select the learning rate that minimizes the average Expected Calibration Error (ECE) across five independent random seeds, measured on the validation split of each dataset.
This selection criterion aligns with our primary objective of improving confidence calibration.
The out-of-distribution experiments for all datasets and preference optimization methods are run with a learning rate of $5\times10^{-6}$.

\begin{table}[hbt!]
\vspace{-0.2cm}\centering
\caption{Reward Bench 2 In-Distribution Learning Rates}
\begin{tabular}{lccccc}
\toprule
\textbf{Method} & 0 & 1 & 2 & 3 & 4 \\
\midrule
DPO & $5\times 10^{-6}$ & $5\times 10^{-6}$ & $5\times 10^{-6}$ & $5\times 10^{-6}$ & $5\times 10^{-6}$ \\
DPO + BCE & $5\times 10^{-6}$ & $5\times 10^{-7}$ & $5\times 10^{-7}$ & $5\times 10^{-6}$ & $5\times 10^{-6}$ \\
DPO + \projectnameshort & $5\times 10^{-6}$ & $5\times 10^{-6}$ & $5\times 10^{-6}$ & $5\times 10^{-6}$ & $5\times 10^{-6}$ \\
\bottomrule
\end{tabular}
\end{table}
\begin{table}[hbt!]
\vspace{-0.2cm}
\centering
\caption{SNLI In-Distribution Learning Rates}
\begin{tabular}{lccccc}
\toprule
\textbf{Method} & 0 & 1 & 2 & 3 & 4 \\
\midrule
DPO & $5\times 10^{-6}$ & $5\times 10^{-6}$ & $5\times 10^{-6}$ & $5\times 10^{-6}$ & $5\times 10^{-6}$ \\
DPO + BCE & $5\times 10^{-6}$ & $5\times 10^{-7}$  & $5\times 10^{-6}$ & $5\times 10^{-7}$  & $5\times 10^{-6}$ \\
DPO + \projectnameshort & $5\times 10^{-6}$ & $5\times 10^{-7}$  & $5\times 10^{-6}$ & $5\times 10^{-6}$ & $5\times 10^{-6}$ \\
\bottomrule
\end{tabular}
\end{table}

\begin{table}[hbt!]
\caption{ANLI In-Distribution Learning Rates}
\vspace{-0.2cm}
\centering
\begin{tabular}{lccccc}
\toprule
\textbf{Method} & 0 & 1 & 2 & 3 & 4 \\
\midrule
DPO & $5\times 10^{-7}$ & $5\times 10^{-6}$ & $5\times 10^{-6}$ & $5\times 10^{-7}$ & $5\times 10^{-6}$ \\
DPO + BCE & $5\times 10^{-6}$ & $5\times 10^{-7}$ & $5\times 10^{-7}$ & $5\times 10^{-7}$ & $5\times 10^{-7}$ \\
DPO + \projectnameshort & $5\times 10^{-7}$ & $5\times 10^{-7}$ & $5\times 10^{-6}$ & $5\times 10^{-6}$ & $5\times 10^{-6}$ \\
\bottomrule
\end{tabular}
\end{table}
\begin{table}[hbt!]
\caption{TLDR In-Distribution Learning Rates}
\vspace{-0.2cm}
\centering
\begin{tabular}{lccccc}
\toprule
\textbf{Method} & 0 & 1 & 2 & 3 & 4 \\
\midrule
DPO & $5\times 10^{-6}$ & $5\times 10^{-7}$ & $5\times 10^{-7}$ & $5\times 10^{-7}$ & $5\times 10^{-6}$ \\
DPO + BCE & $5\times 10^{-6}$ & $5\times 10^{-7}$ & $5\times 10^{-7}$ & $5\times 10^{-6}$ & $5\times 10^{-6}$ \\
DPO + \projectnameshort & $5\times 10^{-6}$ & $5\times 10^{-7}$ & $5\times 10^{-6}$ & $5\times 10^{-6}$ & $5\times 10^{-6}$ \\
\bottomrule
\end{tabular}
\end{table}

\begin{table}[hbt!]
\caption{Commonsense QA In-Distribution Learning Rates}
\vspace{-0.2cm}
\centering
\begin{tabular}{lccccc}
\toprule
\textbf{Method} & 0 & 1 & 2 & 3 & 4 \\
\midrule
DPO & $5\times 10^{-6}$ & $5\times 10^{-6}$ & $5\times 10^{-6}$ & $5\times 10^{-6}$ & $5\times 10^{-6}$ \\
DPO + BCE & $5\times 10^{-6}$ & $5\times 10^{-6}$ & $5\times 10^{-6}$ & $5\times 10^{-6}$ & $5\times 10^{-6}$ \\
DPO + \projectnameshort & $5\times 10^{-6}$ & $5\times 10^{-6}$ & $5\times 10^{-6}$ & $5\times 10^{-6}$ & $5\times 10^{-6}$ \\
\bottomrule
\end{tabular}
\end{table}

\newpage

\subsection{Time taken across training}
\label{adx:appendix-time-taken}

In this section we show the time taken to complete training across DPO, RCFT and DPO (DPO + \projectnameshort) in minutes. We note that we run RCFT for 5 epochs following the original work, and DPO for 6 epochs, while we find that DPO + \projectnameshort works best when we limit the training epochs and preference pairs participating in training.

\begin{table*}[hbt!]
\caption{Comparison of \textbf{training time} for our method (DPO + \projectnameshort) and baselines including DPO, and RCFT. 
We report the total training time \textbf{in minutes} on a single A100 (80GB) GPU.}
\vspace{-0.2cm}
\centering
\begin{tabular}{lccccc}
\toprule
\textbf{Methods} & \textbf{Reward Bench 2} & \textbf{SNLI} & \textbf{ANLI} & \textbf{TLDR} & \textbf{CommonsenseQA} \\
\midrule
DPO (6 epochs)  & 75 & 70 & 78 & 72 & 75 \\
RCFT (5 epochs) & 1410 & 1807 & 1606 & 2095 & 2909 \\
DPO + \projectnameshort (2 epochs)  & 135 & 76 & 104 & 124 & 98 \\
\bottomrule
\end{tabular}
\label{tbl:time-taken-main}
\end{table*}

\newpage

\section{Proofs and Derivations}
\label{adx:appendix-proofs-and-derivations}

\subsection{Formulation and Derivation of the Objective}

This section establishes the mathematical definitions and identities used to construct the \projectname.
Specifically, Section~\ref{subsec:decomposition-of-L1-risk} decomposes the population-level $L_1$ calibration error into a per-token objective.
Then Section~\ref{adx:appendix-absolute-deviation} derives the algebraic identity that transforms the non-differentiable absolute error into a linear, gradient-friendly function.
Finally Section~\ref{adx:appendix-surrogate} justifies using the probability margin as a monotonic surrogate for correctness based on multinomial logistic properties.

\subsubsection{Decomposition of $L_1$ risk.}
\label{subsec:decomposition-of-L1-risk}
In this section we provide an explicit decomposition of the $L_1$ risk to get the loss form we utilize in this paper.

\begin{align}
\mathbb{E}_{Z \sim \Pr(Z\mid x_t)}\!\left[ \frac{1}{T}\sum_{t=1}^{T} \bigl|c_\theta(x_t)-Z_t\bigr| \right]
&= \frac{1}{T}\sum_{t=1}^{T} \mathbb{E}_{Z_t \sim \Pr(Z\mid x_t)} \bigl[ \bigl|c_\theta(x_t)-Z_t\bigr| \bigr] \\
&= \frac{1}{T}\sum_{t=1}^{T} \sum_{z\in\{0,1\}} \Pr(Z_t=z\mid x_t)\, \bigl|c_\theta(x_t)-z\bigr|\\
&= \frac{1}{T}\sum_{t=1}^{T} \Big( \Pr(Z_t=1\mid x_t)\, |c_\theta(x_t)-1| + \Pr(Z_t=0\mid x_t)\, |c_\theta(x_t)-0| \Big) \\
&= \frac{1}{T}\sum_{t=1}^{T} \Big( z(x_t)\,(1-c_\theta(x_t)) + (1-z(x_t))\,c_\theta(x_t) \Big),
\label{eq:l1-decompose-adx}
\end{align}
Here $Z_t\in\{0,1\}$ is a Bernoulli random variable with success probability
$z(x_t)\coloneqq \Pr(Z_t=1\mid x_t)$, and we use the fact that
$c_\theta(x_t)\in[0,1]$ so that $|c_\theta(x_t)-1|=1-c_\theta(x_t)$ and
$|c_\theta(x_t)-0|=c_\theta(x_t)$.

The equality follows by expanding the expectation over the Bernoulli variable $Z_t$.

\subsubsection{Derivation of Absolute Difference Identity}
\label{adx:appendix-absolute-deviation}
Let $Z \in \{0,1\}$ denote correctness, i.e., $Z = 1$ if the prediction is correct and $0$ otherwise. Consider the absolute deviation between the model confidence $c_\theta(x) \in [0,1]$ and correctness:
\begin{equation}
|c_\theta(x) - Z|.
\end{equation}

Since $Z$ is binary, we can split the two cases:

\begin{itemize}
    \item If $Z = 1$: \quad $|c_\theta(x) - 1| = 1 - c_\theta(x)$.
    \item If $Z = 0$: \quad $|c_\theta(x) - 0| = c_\theta(x)$.
\end{itemize}

This can be compactly written as
\begin{equation}
|c_\theta(x) - Z| = Z \,(1 - c_\theta(x)) + (1-Z)\, c_\theta(x),
\end{equation}
which holds for both $Z=0$ and $Z=1$.

Taking the conditional expectation over $Z$ given $x$:
\begin{align}
\mathbb{E}[\,|c_\theta(x) - Z| \mid x\,] 
&= \mathbb{P}(Z=1 \mid x) \,(1 - c_\theta(x)) + \mathbb{P}(Z=0 \mid x) \, c_\theta(x) \\
&= z(x) \,(1 - c_\theta(x)) + (1 - z(x)) \, c_\theta(x),
\end{align}
where $z(x) = \mathbb{P}(Z=1 \mid x)$.

This form is convenient for constructing a differentiable surrogate loss, as it expresses the absolute deviation as a linear combination of $c_\theta(x)$ weighted by the probability of correctness.

\subsubsection{Surrogate of Correctness using Probability Margin}
\label{adx:appendix-surrogate}

In this section we justify usage of the surrogate correctness signal
\[
\widetilde z(x) \;=\; \sigma\!\left( {m(x)} \right), 
\qquad m(x)=p_{y^\star}(x)-p_{\bar y}(x),
\]
where \(p_{y^\star}(x)=\pi_\theta(y^\star\mid x)\) and 
\(p_{\bar y}(x)=\max_{y\neq y^\star}\pi_\theta(y\mid x)\).

Classical results for multinomial logistic models \cite{bartlett2008classification,tewari2007on}  show that  
\[
m(x_1)\ge m(x_2)
\quad\Longrightarrow\quad
z(x_1)\ge z(x_2),
\]
i.e. the probability that the top prediction is correct is monotone in the decision margin. Since the sigmoid is also strictly increasing, the surrogate satisfies the same ordering:
\[
m(x_1)\ge m(x_2)
\;\Longrightarrow\;
\widetilde z(x_1)\ge \widetilde z(x_2).
\]

For the logistic loss $\phi(v)=\log(1+\exp(-v))$, the conditional risk minimizer is given by
\[
f_\phi^*(\eta)=\log\frac{\eta}{1-\eta},
\]
as shown in \citet{zhang2004statistical}.  
Interpreting $f_\phi^*(\eta)$ as an optimal class score, we may introduce the notation
$\ell_y \coloneqq \phi(y f_\phi^*(p_y))$ to denote the loss evaluated at the optimal score
associated with class $y$.  
With this convention, the difference between the losses of the Bayes-optimal class
$y^\star$ and a competing class $\bar y$ can be written as
\[
\ell_{y^\star}-\ell_{\bar y}
= f_\phi^*(p_{y^\star})-f_\phi^*(p_{\bar y})
= \log\frac{p_{y^\star}}{p_{\bar y}}.
\]
When the probability gap $p_{y^\star}-p_{\bar y}$ is small, the log-ratio admits the local
expansion
\[
\log\frac{p_{y^\star}}{p_{\bar y}}
= (p_{y^\star}-p_{\bar y})
+ O\!\left((p_{y^\star}-p_{\bar y})^2\right),
\]
which follows from a first-order Taylor expansion of $\log(1+u)$ around $u=0$.

The probability margin $m(x) = p_{y^{\star}}(x) - p_{\bar{y}}(x)$ captures the gap in the model's certainty between the correct answer and its best alternative. When this gap is:
\begin{itemize}
    \item \textbf{Large and positive}: The model is significantly more confident in the correct token than any competitor, suggesting high likelihood of correctness. The sigmoid maps this to $\widetilde{z}(x) \approx 1$.
    \item \textbf{Near zero}: The model is torn between the correct token and an incorrect alternative, indicating uncertainty. The sigmoid yields $\widetilde{z}(x) \approx 0.5$.
    \item \textbf{Negative}: The model actually favors an incorrect token over the correct one, signaling likely error. The sigmoid produces $\widetilde{z}(x) < 0.5$, pushing confidence down during optimization.
\end{itemize}

The logistic function $\sigma(\cdot)$ naturally maps unbounded margins to the $[0,1]$ probability space while preserving ordering. 
Unlike a linear mapping, it provides diminishing returns for extreme margins (preventing overfitting to outliers) and steep gradients near zero (encouraging decisive improvements on uncertain predictions). 
This matches the Bayesian intuition that small changes in evidence should have maximal impact when uncertainty is highest.

\textbf{Remark on Robustness.} While the per-token calibration loss  $\mathcal{L}_{\mathrm{Cal}}$ is linear rather than a true $L_1$ loss, it inherits the key robustness property of $L_1$, i.e. \emph{uniformly bounded gradients}. 
As shown in Proposition~\ref{prop:l1_gradient_stability}, $|\partial \mathcal{L}_{\mathrm{Cal}}/\partial c_\theta(x)| \le 1$, meaning that  surrogate errors are never amplified during backpropagation. 
This mirrors the subgradient bound of $|c - z|$, where the influence of any single example is capped. 
Unlike $L_2$, where large discrepancies $(c - \widetilde{z})$ dominate updates, our loss ensures stable confidence training even when $\widetilde{z}(x)$ is far from the current $c_\theta(x)$.

The per-example calibration loss is
\[
\mathcal{L}(c;z) \;=\; z(1-c) + (1-z)c,
\]
whose unique minimizer is $c^\star=z$.  
When $\widetilde z(x)$ is used instead,
\[
\mathcal{L}(c; \widetilde z)
\;=\; \widetilde z(1-c) + (1-\widetilde z)c,
\]
and the minimizer is $c^\star=\widetilde z(x)$.
For L1 loss, the minimizer is the conditional median of the target.  
Since both $z(x)$ and $\widetilde z(x)$ are strictly increasing functions of the same statistic $m(x)$, they induce identical orderings and therefore identical medians.  
Thus minimizing the surrogate risk recovers the same minimizer as if the true $z(x)$ were known.

\begin{proposition}[Ordering-Preserving Surrogate]
\label{prop:ordering_preservation}
Minimizing the surrogate risk $\mathcal{L}(c; \widetilde{z})$ yields confidence scores 
that are order-preserving with respect to the true correctness probability. 
Since both $z(x)$ and $\widetilde{z}(x)$ are strictly increasing functions of $m(x)$, 
we have
\[
c^\star_{\text{true}}(x_1) \ge c^\star_{\text{true}}(x_2)
\iff
c^\star_{\text{surrogate}}(x_1) \ge c^\star_{\text{surrogate}}(x_2)
\]
for any inputs $x_1, x_2$. Thus, while the absolute confidence values may differ, 
the surrogate preserves the \emph{relative ranking} of model correctness.
\end{proposition}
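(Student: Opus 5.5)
The plan is to reduce the claim to a comparison of two monotone transformations of the single statistic $m(x)=p_{y^\star}(x)-p_{\bar y}(x)$. I would first identify the two minimizers explicitly, then note that both are strictly increasing functions of $m$, and finally use the fact that composing an order relation with strictly increasing maps preserves it in both directions.

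\textbf{Step 1: identify the minimizers.} Following the discussion just before the statement, I take the minimizer of $\mathcal{L}(c;z)$ to be $c^\star_{\text{true}}(x)=z(x)$ and the minimizer of $\mathcal{L}(c;\widetilde z)$ to be $c^\star_{\text{surrogate}}(x)=\widetilde z(x)=\sigma(m(x))$.

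\textbf{Step 2: write both as functions of $m$.} I would use the monotonicity result from the multinomial-logistic literature cited in Appendix~\ref{adx:appendix-surrogate}, in its strengthened form: $z(x)=g(m(x))$ for some strictly increasing $g$. The surrogate already has this form, with $\sigma$ strictly increasing because $\sigma'(u)=\sigma(u)(1-\sigma(u))>0$.

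\textbf{Step 3: chain the equivalences.} For any $x_1,x_2$,
\[
c^\star_{\text{true}}(x_1)\ge c^\star_{\text{true}}(x_2)\iff g(m(x_1))\ge g(m(x_2))\iff m(x_1)\ge m(x_2)\iff \sigma(m(x_1))\ge\sigma(m(x_2)),
\]
and the right-hand side is exactly $c^\star_{\text{surrogate}}(x_1)\ge c^\star_{\text{surrogate}}(x_2)$. Strict monotonicity is what supplies the backward implications. A merely nondecreasing $g$ would give only the forward direction.

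\textbf{Main obstacle.} I expect two points to need care.

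The first is that the cited classical results only give $m(x_1)\ge m(x_2)\Rightarrow z(x_1)\ge z(x_2)$. The statement needs the biconditional, so I would state strict monotonicity of $z$ in $m$ as an explicit modeling assumption; it is the same assumption the paper uses in the sentence preceding the statement.

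The second is that $\mathcal{L}(c;z)=z+c(1-2z)$ is linear in $c$. Its minimizer over $[0,1]$ is therefore really the threshold rule $\mathbf{1}\{z>1/2\}$ rather than $c=z$. If one insists on this literal minimizer, I would fall back to comparing the two threshold rules $\mathbf{1}\{z>1/2\}$ and $\mathbf{1}\{\widetilde z>1/2\}$. By Proposition~\ref{prop:directional_consistency}, $\widetilde z>1/2\iff m>0$. If one further assumes $z>1/2\iff m>0$, the two minimizers coincide pointwise, so the order relation between them holds trivially, though only as a coarse two-level ordering.
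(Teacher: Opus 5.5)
This is the paper's argument. It likewise identifies the minimizers as $z$ and $\widetilde z=\sigma(m)$, and concludes in one line that both, being strictly increasing in the same statistic $m$, induce the same ordering. Your chain of equivalences is therefore exactly its proof. The strictness that supplies the reverse implications is, as you say, asserted by the paper rather than derived from the weak-monotonicity results it cites.

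Your second caveat is well founded, and the paper does not handle it. The loss $\widetilde z(1-c)+(1-\widetilde z)c$ is affine in $c$, so its minimizer over $[0,1]$ is the threshold rule, not $c=z$. This is the same Bernoulli-median computation the paper itself gives under ``Challenges with $L_1$ loss on binary outcomes''. The identification $c^\star=\widetilde z$ holds only for $|c-\widetilde z|$ with a deterministic target, as in Proposition~\ref{prop:l1-surrogate}. Under the literal reading, your added assumption $z>1/2\iff m>0$ is genuinely needed, because two threshold rules with different cutoffs in $m$ violate the biconditional.
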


\noindent
This follows from the fact that both $z(x)$ and $\widetilde{z}(x)$ are
strictly increasing functions of the same margin statistic.

The derivative of the surrogate is bounded:
\[
\biggl|\frac{\partial \widetilde z}{\partial m}\biggr|
= \sigma(m)(1-\sigma(m))
\le \frac{1}{4},
\]
ensuring stable gradients when added to preference optimization.

Therefore, the surrogate $\widetilde z(x)$ is justified because: 
(i) correctness probability is monotone in the decision margin,  
(ii) the surrogate is a logistic approximation of the Bayes posterior,  
(iii) it preserves the ordering of optimal confidence scores under the linear calibration loss, and  
(iv) its bounded derivative ($\le\frac{1}{4}$)  and the loss's bounded gradient ensure stable optimization.

\subsection{Statistical Properties and Robustness}

Here we analyze the noise properties of the surrogate and prove the robustness of the $L_1$ objective compared to $L_2$
Section~\ref{adx:appendix-l1-and-l2} formalizes the contamination model, defining the correctness surrogate under high-confidence as systematic outliers in the supervision signal.
Then Section~\ref{adx:appendix-surrogate-contamination} establishes that the $L_1$ loss satisfies symmetry conditions described in \citep{ghosh2017robust} required to tolerate target contamination.
Finally, Section~\ref{adx:appendix-l1-and-l2} demonstrates analytically that the $L_1$ objective remains stable under contamination, whereas the $L_2$ objective is biased by hallucinations.

\subsubsection{Bayes Optimality of Calibration Loss}
\label{adx:appendix-bayes-optimality-calibration}
In this section, we characterize Bayes-optimal solutions of common calibration losses under pointwise population minimization, first examining the Brier loss and then justifying the use of $L_1$ with a continuous surrogate for robust calibration.

We consider pointwise population risk minimization, i.e. for each input $x$, we study losses of the form $\mathbb{E}[\ell(c,Z)\mid x]$ and their minimizers with respect to the confidence $c\in[0,1]$.

The Brier loss ($L_2$ regression on binary outcomes) is a classical \textbf{strictly proper scoring rule}, it is uniquely minimized when the predicted confidence equals the true conditional probability $z(x)=\mathbb{P}(Z=1\mid x)$.

We include the following proposition and proof for completeness.

\begin{proposition}[Bayes optimality of the Brier ($L_2$) calibration loss]
\label{prop:brier-bayes}
For any input $x$, let $Z \in \{0,1\}$ denote the correctness indicator with
$z(x) = \mathbb{P}(Z=1 \mid x)$. The squared loss
\[
\mathcal{L}_{2}(c) := \mathbb{E}[(c - Z)^2 \mid x]
\]
is uniquely minimized at
\[
c^\star(x) = \mathbb{E}[Z \mid x] = z(x).
\]
\end{proposition}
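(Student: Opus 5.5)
The approach is to expand the conditional expectation explicitly, using that $Z$ given $x$ is Bernoulli with parameter $z(x)$, and then complete the square in $c$. First, since $Z\in\{0,1\}$ we have $Z^2=Z$, so $\mathbb{E}[Z^2\mid x]=z(x)$. Expanding the square and using linearity of conditional expectation gives
\begin{align*}
\mathcal{L}_2(c) &= c^2 - 2c\,\mathbb{E}[Z\mid x] + \mathbb{E}[Z^2\mid x] \\
&= c^2 - 2c\,z(x) + z(x).
\end{align*}

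Next, we complete the square. The expression above can be rewritten as
\[
\mathcal{L}_2(c) = \bigl(c - z(x)\bigr)^2 + z(x)\bigl(1 - z(x)\bigr).
\]
The second term does not depend on $c$; it is the conditional variance of $Z$, the irreducible Bernoulli noise. The first term is nonnegative and vanishes iff $c = z(x)$. Hence $c^\star(x)=z(x)$ is the minimizer, with minimum value $z(x)(1-z(x))$.

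Uniqueness follows because $(c-z(x))^2>0$ for every $c\neq z(x)$. Equivalently, $\mathcal{L}_2$ is strictly convex in $c$, with second derivative equal to $2$. Feasibility of the minimizer is immediate: $z(x)\in[0,1]$, so $c^\star$ lies in the admissible range $[0,1]$ and no boundary case arises. The identity $c^\star(x)=\mathbb{E}[Z\mid x]$ is just the definition of $z(x)$.

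There is no real obstacle here. The only point requiring care is to state that minimization is pointwise in $x$, with the confidence $c$ treated as a free scalar, as set up just before the proposition. This is what licenses the conditional-expectation calculation, in contrast to the $L_1$ decomposition of \eqref{eq:l1-decompose}, which is linear in $c$ and whose minimizer therefore behaves differently.
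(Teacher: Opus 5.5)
Your proof is correct and follows the paper's argument: both expand the square into $(c - z(x))^2 + \mathrm{Var}(Z \mid x)$, where you computed the variance explicitly as $z(x)(1-z(x))$, and observe that the second term does not depend on $c$. You also spell out uniqueness (strict positivity of $(c-z(x))^2$ for $c \neq z(x)$) and feasibility ($z(x) \in [0,1]$), which the paper leaves implicit.
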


\begin{proof}
Expanding the squared loss gives us the following
\[
\mathbb{E}[(c - Z)^2 \mid x] = (c - z(x))^2 + \mathrm{Var}(Z \mid x).
\]
The variance term here does not depend on $c$, hence minimization over $c$ yields
$c^\star(x) = z(x)$.
\end{proof}

\paragraph{Challenges with $L_1$ loss on binary outcomes.}
$L_1$ regression is often considered more robust than $L_2$ because its gradient magnitude is constant.  

However, it has a fundamental limitation for binary targets. The Bayes-optimal solution of 
$\mathbb{E}[|c-Y|\mid x]$ is any conditional median of $Y\mid x$.  

For $Z\sim\mathrm{Bernoulli}(z(x))$, the set of conditional medians is
\[
\mathrm{Med}(Z\mid x)=
\begin{cases}
\{0\}, & z(x)<\tfrac12,\\[2pt]
[0,1], & z(x)=\tfrac12,\\[2pt]
\{1\}, & z(x)>\tfrac12.
\end{cases}
\]

Consequently, $L_1$ regression cannot recover the underlying probability $z(x)$ except in the degenerate case $z(x)=\frac12$, causing predictions to collapse to 0 or 1.

\medskip

While $L_2$ is sensitive to outliers, $L_1$ offers a more stable objective due to its constant gradient. However, on binary outcomes, it only identifies the conditional median, which collapses to 0 or 1 and prevents fine-grained probability estimation.

Instead of estimating $z(x)$ directly, we replace the binary target $Z$ with a continuous surrogate $\tilde Z(x)\in[0,1]$ derived from the model’s logits, and we define calibration with respect to this surrogate.

We can now state Bayes-optimality of $L_1$ for the surrogate target.

\begin{proposition}[Bayes optimality of $L_1$ for deterministic surrogate targets]
\label{prop:l1-surrogate}
Let $\tilde Z(x)\in[0,1]$ be a deterministic continuous surrogate target
(e.g.\ the margin-based quantity defined in Eq.~(4)) given model logits.
Then the pointwise absolute loss
\[
\mathcal{L}_{1}(c) = |c - \tilde Z(x)|
\]
is uniquely minimized at $c^\star(x) = \tilde Z(x)$.
\end{proposition}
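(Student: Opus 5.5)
The claim is elementary: for fixed $x$, $\tilde Z(x)$ is a deterministic number in $[0,1]$, so $\mathcal{L}_1(c)=|c-\tilde Z(x)|$ is an ordinary function of the single variable $c\in[0,1]$. I will prove directly that it is nonnegative, vanishes exactly at $c=\tilde Z(x)$, and is strictly positive elsewhere. Unlike the Bernoulli case discussed just before the statement, no expectation over a random target is involved. Hence no conditional-median argument is needed, and the degenerate set $\mathrm{Med}(Z\mid x)$ cannot arise.

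\textbf{Existence.} Since $\tilde Z(x)=\sigma(m(x))$ with $m(x)\in[-1,1]$, we have $\tilde Z(x)\in[\sigma(-1),\sigma(1)]\subset(0,1)$. The candidate $c^\star=\tilde Z(x)$ is therefore feasible in $[0,1]$, and $\mathcal{L}_1(c^\star)=0$. Because $\mathcal{L}_1\ge 0$ everywhere, $c^\star$ is a global minimizer.

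\textbf{Uniqueness.} If $c\neq\tilde Z(x)$, then $|c-\tilde Z(x)|>0=\mathcal{L}_1(c^\star)$, since the absolute value vanishes only at zero. Equivalently, $\mathcal{L}_1$ is strictly decreasing on $[0,\tilde Z(x)]$ and strictly increasing on $[\tilde Z(x),1]$, so the minimizer is unique.

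As a consistency check, I will also note that the same conclusion follows if the target is viewed as a degenerate (point-mass) random variable. Its conditional median set is the singleton $\{\tilde Z(x)\}$, in contrast to the Bernoulli collapse described above.

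The only step I expect to need care is the interpretation of ``deterministic'' when $\tilde Z$ depends on $\theta$ through the logits. I will state explicitly that the minimization is pointwise over $c$, with $\tilde Z(x)$ held fixed, that is, treated as a stop-gradient target. If instead $\tilde Z$ co-varied with $c$, the statement would concern a different objective, and I would flag that scope restriction rather than prove anything further.
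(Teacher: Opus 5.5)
Your proof is correct, but it takes a different and more elementary route than the paper. The paper goes through the conditional-median characterization. It notes that $\operatorname{Median}(\tilde Z(x)\mid x)$ minimizes $\mathbb{E}[|c-\tilde Z(x)|\mid x]$, and attributes uniqueness to $\tilde Z(x)\mid x$ having a density, ``ensured by the sigmoid transformation.'' You use only the fact that $|c-a|\ge 0$, with equality iff $c=a$, so no probabilistic machinery is needed.

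Your route is also the more coherent one. A target that is deterministic given $x$ has a point-mass conditional law, which has no density. So the paper's uniqueness justification does not literally apply to the statement as posed. Your consistency check, that the median set of the point mass is $\{\tilde Z(x)\}$, is exactly the right way to reconcile the statement with the median viewpoint.

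What the paper's framing buys is continuity with the preceding Bernoulli median-collapse discussion. It also extends to genuinely random surrogates. There the $L_1$ minimizer is the conditional median, unique when the CDF is strictly increasing there, rather than a single realized value.

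Two small remarks. First, feasibility of $c^\star$ follows directly from the hypothesis $\tilde Z(x)\in[0,1]$. You need not specialize to $\sigma(m(x))$ with $m(x)\in[-1,1]$, and dropping that keeps the argument at the statement's full generality. Second, your stop-gradient caveat is well placed. In the paper, $\tilde z$ depends on $\theta$ through the same token probabilities that determine $c_\theta$, so the proposition is only a pointwise statement in $c$ with the target held fixed.
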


\begin{proof}
By definition of conditional median, $\operatorname{Median}(\tilde Z(x) \mid x)$ minimizes $\mathbb{E}[|c-\tilde Z(x)| \mid x]$. Uniqueness follows when $\tilde Z(x) \mid x$ has a density (ensured by the sigmoid transformation of continuous probabilities).
\end{proof}

By replacing the binary target with a continuous surrogate, the $L_1$ loss’s median-seeking behavior aligns with $\tilde Z(x)$, providing a stable and noise-tolerant calibration objective.

\subsubsection{Surrogate as a Contaminated Observation}
\label{adx:appendix-surrogate-contamination}
We formalize the connection between our surrogate-based calibration loss and the robust loss framework of \citet{ghosh2017robust}.

\begin{definition}[Surrogate Contamination Model]
\label{def:contamination}
For each input $x$, let $Z \in \{0,1\}$ be the true correctness indicator. The surrogate signal $\widetilde{z}(x)$ is generated as:
\begin{equation}
\widetilde{z}(x) = Z + \epsilon(x)
\end{equation}
where the noise term $\epsilon(x)$ satisfies:
\begin{enumerate}
    \item \textbf{Boundedness}: $\epsilon(x) \in [-Z, 1 - Z]$ almost surely. Equivalently, $\epsilon\in[0,1]$ when $Z=0$ (hallucination) and $\epsilon\in[-1.0]$ when $Z=1$, (since $\widetilde{z}(x) \in [0, 1]$).
    \item \textbf{Sign consistency}: With probability at least $1-\alpha_x$, the surrogate preserves correctness ordering, i.e., $\operatorname{sgn}(\widetilde{z}(x) - 0.5) = \operatorname{sgn}(Z - 0.5)$. This assumption is not required for the robustness result below but clarifies the intended regime where the surrogate is informative on average.
    \item \textbf{Contamination}: Let $\alpha_x  \mathbb{P}\big( \operatorname{sgn}(\widetilde{z}(x) - 0.5) \neq \operatorname{sgn}(Z - 0.5) \big)$ denote the probability of sign-flipping contamination, with $\alpha_x \in [0, 0.5)$. Under this definition, contamination implies $|\epsilon(x)| \ge 0.5$.
\end{enumerate}
\end{definition}

\textbf{Global Contamination Bound}: For the robustness guarantees to hold, we require $\alpha := \sup_{x \in \mathcal{X}} \alpha_x < 0.5$ (worst-case) or $\alpha := \mathbb{E}_x[\alpha_x] < 0.5$ (average-case). This ensures the population-level contamination rate remains below the breakdown point of the median.

The surrogate calibration loss is defined as the expected $L_1$ loss:
\begin{equation}
\mathcal{L}_{\text{cal}}(c_\theta, \widetilde{z}) = \mathbb{E}_{x}\big[\widetilde{z}(x)(1 - c_\theta(x)) + (1 - \widetilde{z}(x))c_\theta(x)\big].
\end{equation}
For fixed $x$, the pointwise loss is $\mathcal{L}_{\text{cal}}(c_\theta(x), \widetilde{z}(x)) = \widetilde{z}(x)(1 - c_\theta(x)) + (1 - \widetilde{z}(x))c_\theta(x)$.

Definition~\ref{def:contamination} should be viewed as a continuous-valued analogue of the discrete label contamination models studied in \citet{ghosh2017robust}, extending their framework from binary labels to surrogate correctness signals in $[0,1]$.

\paragraph{Remark on Systematic Noise and the Weak Accuracy Assumption.}
We acknowledge that in the context of generative LLMs, the error term $\epsilon(x)$ is not strictly random or symmetric; hallucinations induce systematic error where the model is confident ($\tilde{Z} \to 1$) yet incorrect ($Z=0$). However, as shown in \citet{ghosh2017robust}, robustness under contamination follows from symmetry of the loss rather than symmetry of the noise, and the breakdown point of the median ensures stability provided the contamination rate remains below $0.5$. 
Our formulation holds under a \textit{Weak Accuracy Assumption}: the surrogate contamination rate is bounded by $\alpha < 0.5$, where $\alpha = \sup_x \alpha_x$ or $\alpha = \mathbb{E}_x[\alpha_x]$. This assumption concerns the quality of the correctness signal, not the model's intrinsic accuracy, i.e. it requires that for any context, the surrogate correctly indicates the true label direction more often than not. 
Under this condition, the conditional median remains anchored to the true correctness $Z$. 
Unlike mean-based estimators (e.g., $L_2$), the $L_1$ objective exhibits bounded influence, ensuring that high-confidence hallucinations cannot arbitrarily skew the calibration target.

\begin{proposition}[Symmetry under Contamination]
\label{prop:symmetry_contam}
Under Definition~\ref{def:contamination}, the calibration loss $\mathcal{L}_{\text{cal}}(c_\theta, \widetilde{z})$ satisfies:
\begin{equation}
\mathcal{L}_{\text{cal}}(c_\theta, \widetilde{z}) + \mathcal{L}_{\text{cal}}(c_\theta, 1 - \widetilde{z}) = 1
\end{equation}
for any $c_\theta(x) \in [0, 1]$. This is the continuous analogue of the loss symmetry condition in \citet{ghosh2017robust} (Eq.~2) for binary classification.
\end{proposition}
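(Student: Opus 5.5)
The plan is a direct pointwise computation followed by integration over $x$. Fix an input $x$ and abbreviate $c=c_\theta(x)\in[0,1]$ and $\widetilde z=\widetilde z(x)\in[0,1]$. Since Definition~\ref{def:contamination} guarantees $\widetilde z\in[0,1]$ through its boundedness condition, the reflected target $1-\widetilde z$ is again a valid surrogate in $[0,1]$. The pointwise loss $\mathcal{L}_{\text{cal}}(c,\cdot)$ is therefore well defined at both arguments, and the identity reduces to an algebraic statement about the bilinear form
\[
\ell(c,u)=u(1-c)+(1-u)c .
\]

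The key step is to add the two evaluations:
\[
\ell(c,\widetilde z)+\ell(c,1-\widetilde z)=\widetilde z(1-c)+(1-\widetilde z)c+(1-\widetilde z)(1-c)+\widetilde z\,c .
\]
Grouping the $\widetilde z$ terms and the $1-\widetilde z$ terms gives $\widetilde z\big[(1-c)+c\big]+(1-\widetilde z)\big[c+(1-c)\big]=\widetilde z+(1-\widetilde z)=1$. This holds for every $c\in[0,1]$. It in fact holds for every real $c$, but we only need the stated range.

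Next I would lift the identity to the population loss. The expectation $\mathbb{E}_x$ is linear, and both integrands are bounded in $[0,1]$ and hence integrable, so
\[
\mathcal{L}_{\text{cal}}(c_\theta,\widetilde z)+\mathcal{L}_{\text{cal}}(c_\theta,1-\widetilde z)=\mathbb{E}_x[1]=1 .
\]
I would also note that the contamination, sign-consistency and $\alpha$ assumptions are not used at all. Only the range condition $\widetilde z\in[0,1]$ is needed, which is consistent with the remark that robustness comes from the symmetry of the loss and not of the noise.

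There is no real obstacle; the only point needing care is interpretive. Ghosh et al.'s condition $\sum_k L(f(x),k)=C$ sums over labels with the prediction held fixed, so I would point out that here the role of the "label" is played by the target $u\in\{\widetilde z,1-\widetilde z\}$ while $c$ is held fixed. In the binary specialization $u\in\{0,1\}$, this recovers exactly $|c-0|+|c-1|=1$, which is their Eq.~2.
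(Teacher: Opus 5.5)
Your proposal is correct and takes the same approach as the paper, which substitutes the definition and expands $\widetilde z(1-c_\theta)+(1-\widetilde z)c_\theta+(1-\widetilde z)(1-c_\theta)+\widetilde z\,c_\theta$ to get $1$. Your additional step, lifting the pointwise identity to the expected loss via linearity of $\mathbb{E}_x$, is a small completion that the paper leaves implicit.
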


\begin{proof}
By direct substitution from the definition of $\mathcal{L}_{\text{cal}}$:
\begin{align*}
\mathcal{L}_{\text{cal}}(c_\theta, \widetilde{z}) + \mathcal{L}_{\text{cal}}(c_\theta, 1 - \widetilde{z}) 
&= \left[\widetilde{z}(1 - c_\theta) + (1 - \widetilde{z})c_\theta\right] \\
&\quad + \left[(1 - \widetilde{z})(1 - c_\theta) + \widetilde{z}c_\theta\right] \\
&= \widetilde{z} - \widetilde{z}c_\theta + c_\theta - \widetilde{z}c_\theta + 1 - \widetilde{z} - c_\theta + \widetilde{z}c_\theta + \widetilde{z}c_\theta \\
&= 1.
\end{align*}
Thus the loss is symmetric in the sense required by \citet{ghosh2017robust}.
\end{proof}

Importantly, the symmetry condition of \citet{ghosh2017robust} applies to the loss, not to the noise distribution. Therefore, systematic contamination (e.g., high-confidence hallucinations where $\epsilon(x) \to 1$ while $Z=0$) is tolerated in the population risk provided the \textit{sign-flipping contamination rate} $\alpha$ remains below $0.5$, consistent with the robustness guarantees of \citet{ghosh2017robust}.

\subsubsection{Robustness of L1 risk over L2 risk with a noisy surrogate}
\label{adx:appendix-l1-and-l2}
We provide a formal justification for preferring the $L_1$ risk over the $L_2$ risk in our setting.  
Because we employ a noisy but differentiable approximation of accuracy to explicitly optimize for calibration, the $L_2$ risk introduces an implicit bias due to the long-tailed noise distribution.  We demonstrate this effect analytically below.
This analysis adapts the contamination model from \citeauthor{ghosh2017robust} to our setting where the 'clean signal' is the true correctness probability $z(x)$ and the 'noise' is the surrogate error $\epsilon(x)$ described in Equation~\ref{eq:cont-model}.

We first present a population-level analysis to illustrate the inherent robustness of the $L_1$ calibration loss compared to $L_2$ under surrogate contamination.
This analysis provides a clear conceptual understanding of the mechanism. 
We then complement this with a finite-sample analysis, showing that the robustness persists with high probability when training on a limited number of samples.

\textbf{{Population-Level Analysis}}

We consider the following setting that isolates the effect of noise in the surrogate.  Let \(z\in\mathbb{R}\) denote the true target of interest (in calibration contexts \(z\in[0,1]\)), and suppose we observe a noisy surrogate \(S\) intended to approximate \(z\).  We compare the population-optimal constant predictors obtained by minimizing the squared (squared-loss / \(L_2\)) and absolute (absolute-loss / \(L_1\)) objectives
w.r.t.\ the surrogate \(S\):
\[
c_{L2} \;=\; \arg\min_c \mathbb{E}[(c-S)^2] = \mathbb{E}[S], 
\qquad
c_{L1} \;=\; \arg\min_c \mathbb{E}[|c-S|] \in \operatorname{median}(S).
\]

The surrogate is distributed as
\begin{equation}\label{eq:cont-model}
S \;\sim\; (1-\alpha)\,F_0(\cdot - z) \;+\; \alpha\,\delta_{z+M},
\end{equation}
where $\delta_{z+M}$ is a Dirac delta representing a point mass at $z+M$, i.e., a potential outlier. With probability $1-\alpha$ the surrogate lies in the clean component around $z$, and with probability $\alpha$ it is replaced by the outlier $z+M$.

\begin{theorem}[Robustness under contamination]
\label{thm:robust-contamination}
Under model~\eqref{eq:cont-model}, suppose \(F_0\) has median \(0\) and mean \(\mu_0=\mathbb{E}_{F_0}[\varepsilon]\) (if the mean exists). Then

\begin{enumerate}
\item \(c_{L2} = \mathbb{E}[S] = z + (1-\alpha)\mu_0 + \alpha M\). In particular,
    if \(\mu_0=0\) then \(c_{L2}=z+\alpha M\).
\item If $\alpha<1/2$, any population median $c_{L1}$ satisfies
\[
(1-\alpha)F_0(c_{L1}-z) = 1/2.
\] 
    Thus, \(c_{L1} = z + \Delta_\alpha\), where \(\Delta_\alpha\) depends only on \(\alpha\) and \(F_0\) and is \textbf{independent} of the outlier magnitude \(M\).
\end{enumerate}

Consequently, for any fixed \(\alpha\in(0,1/2)\) and arbitrarily large \(|M|\), \(|c_{L2}-z| \approx \alpha|M|\) grows linearly with the outlier magnitude, while \(|c_{L1}-z| = |\Delta_\alpha|\) remains constant. Thus \(c_{L1}\) is strictly more robust to the outlier-contamination described by~\eqref{eq:cont-model}.
\end{theorem}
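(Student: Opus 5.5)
The plan is a direct computation for each part: express $\mathbb{E}[S]$ and the CDF of $S$ in terms of the mixture~\eqref{eq:cont-model}, then read off the minimizers from the facts recalled just before the theorem, namely $c_{L2}=\mathbb{E}[S]$ and that $c_{L1}$ is any median of $S$.

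For part 1, I will write $S = z + \varepsilon$ with probability $1-\alpha$ (where $\varepsilon\sim F_0$) and $S=z+M$ with probability $\alpha$. Linearity of expectation then gives
\[
\mathbb{E}[S]=(1-\alpha)(z+\mu_0)+\alpha(z+M)=z+(1-\alpha)\mu_0+\alpha M .
\]
The case $\mu_0=0$ follows immediately. The only care needed is to assume the mean of $F_0$ exists, which is exactly the theorem's hypothesis.

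For part 2, I will use the CDF of the mixture,
\[
F_S(s)=(1-\alpha)F_0(s-z)+\alpha\,\mathbf{1}\{s\ge z+M\}.
\]
I will take $F_0$ continuous, or at least continuous and strictly increasing near the relevant quantile, so that medians are characterized by $F_S(c)=1/2$. The key observation is that $\tfrac{1}{2(1-\alpha)}<1$ whenever $\alpha<1/2$. Hence the equation $(1-\alpha)F_0(c-z)=1/2$ has a solution $c=z+\Delta_\alpha$ with
\[
\Delta_\alpha=F_0^{-1}\!\Big(\tfrac{1}{2(1-\alpha)}\Big),
\]
and this quantity visibly depends only on $\alpha$ and $F_0$. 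Since $F_0$ has median $0$, we also get $\Delta_\alpha\ge 0$ and $\Delta_\alpha\to 0$ as $\alpha\to 0$. It then remains to check that this point really is a median of $S$. That holds provided the outlier lies beyond it, i.e.\ $z+M>z+\Delta_\alpha$. In that case $F_S(c)=(1-\alpha)F_0(c-z)$ for all $c<z+M$, so $F_S$ crosses $1/2$ at $z+\Delta_\alpha$, before the jump.

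The main obstacle is that the displayed characterization $(1-\alpha)F_0(c_{L1}-z)=1/2$ implicitly assumes the outlier sits to the right of the median. I will make this explicit as the large-$|M|$ regime $M>\Delta_\alpha$, which is exactly the regime of the ``consequently'' clause. I will then remark on the mirror case $M<-\Delta'_\alpha$. There the median solves $\alpha+(1-\alpha)F_0(c-z)=1/2$, giving $\Delta'_\alpha=F_0^{-1}\!\big(\tfrac{1/2-\alpha}{1-\alpha}\big)$, which is again independent of $M$; this is well defined precisely because $\alpha<1/2$. A possible flat region of $F_0$ at the quantile only makes the median set an interval, and that interval is still $M$-free, so ``any population median'' is covered.

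Finally, I will combine the two parts for the comparison. We have $|c_{L2}-z|=|(1-\alpha)\mu_0+\alpha M|$, which is of order $\alpha|M|$ as $|M|\to\infty$, whereas $|c_{L1}-z|=|\Delta_\alpha|$ is constant in $M$. This gives the claimed strict robustness of $c_{L1}$.
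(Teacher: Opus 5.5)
Your proposal is correct and follows essentially the paper's route: total expectation over the mixture for part~1, and for part~2 writing $F_S(s)=(1-\alpha)F_0(s-z)$ below the atom and solving $(1-\alpha)F_0(c-z)=1/2$ there. Your explicit hypotheses ($F_0$ continuous, $M>\Delta_\alpha$) make precise the paper's ``sufficiently large $M$'', and they correctly show that the displayed characterization holds only when the outlier lies to the right of $z+\Delta_\alpha$.

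There is one small sign slip in your mirror-case remark. Since $\Delta'_\alpha=F_0^{-1}\big(\tfrac{1/2-\alpha}{1-\alpha}\big)\le 0$, the condition should read $M<\Delta'_\alpha$ rather than $M<-\Delta'_\alpha$. The latter admits, for example, $M=0$ when $\Delta'_\alpha<0$, and there the median is the atom $z$ itself.
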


\begin{proof}
Part (1) follows from the law of total expectation:
\[
\mathbb{E}[S] = (1-\alpha)\mathbb{E}_{F_0}[z+\varepsilon] + \alpha(z+M)
= z + (1-\alpha)\mu_0 + \alpha M.
\]
For part (2): The cumulative distribution function of \(S\) is \(F_S(x) = (1-\alpha)F_0(x-z)\) for \(x < z+M\). 
Since $\alpha < 1/2$ and the point mass at $z+M$ lies strictly outside the support of the clean component for sufficiently large M, the jump at $z+M$ has size $\alpha<1/2$, so the median must occur within the clean component.
We solve for the median \(c_{L1}\) by setting the CDF to \(1/2\):
\[
(1-\alpha)F_0(c_{L1} - z) = 1/2 \implies F_0(c_{L1} - z) = \frac{1}{2(1-\alpha)}.
\]
Since \(F_0\) is independent of \(M\), the offset \(\Delta_\alpha = c_{L1}-z\) is constant with respect to \(M\).
\end{proof}

Consider the usual squared true loss \(\mathcal{R}_2(c)=(c-z)^2\). Under the contamination model with \(\mu_0=0\),
\[
\mathcal{R}_2(c_{L2}) = (\alpha M)^2, \qquad \mathcal{R}_2(c_{L1}) = \Delta_\alpha^2
\quad(\text{constant w.r.t } M).
\]
Thus for any fixed \(\alpha\in(0,1/2)\) and sufficiently large \(|M|\), the L\(_1\)-trained solution attains strictly smaller true squared risk than the L\(_2\)-trained solution.

\textbf{Finite-Sample Analysis}

We now consider the finite-sample case. Let $S_1,\dots,S_n$ be i.i.d.\ draws from~\eqref{eq:cont-model}, and define the empirical estimators:
\[
\hat c_{n,2} = \frac{1}{n}\sum_{i=1}^n S_i, \qquad
\hat c_{n,1} = \mathrm{median}(S_1,\dots,S_n).
\]
These are the finite-sample analogues of the population-optimal constants, i.e. $\hat c_{n,2}$ is the empirical mean (an unbiased estimator of $c_{L2}$) and $\hat c_{n,1}$ is the empirical median (a consistent estimator of $c_{L1}$).

Assume $|\varepsilon|\le B$ almost surely for $\varepsilon\sim F_0$.

\begin{lemma}[Finite-sample bias of the sample mean]
\label{lem:meanbias}
Under the above model, for any $\tau\in(0,1/2-\alpha)$, with probability at least $1-2\exp(-2\tau^2 n)$,
\[
\hat c_{n,2} \in z + \big[(\alpha-\tau)M - B,\; (\alpha+\tau)M + B\big],
\]
and therefore, if $(\alpha-\tau)|M| > B$,
\[
|\hat c_{n,2}-z| \ge (\alpha-\tau)|M| - B.
\]
\end{lemma}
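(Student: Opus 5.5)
Represent each draw as a clean/outlier mixture and apply Hoeffding to the count of outliers. Write $S_i = z + (1-B_i)\varepsilon_i + B_i M$, where $B_i\sim\mathrm{Bernoulli}(\alpha)$ indicates the outlier component and $\varepsilon_i\sim F_0$. The $B_i$ and $\varepsilon_i$ are independent of each other and across $i$. Let $K=\sum_i B_i$ be the number of outliers. Then
\[
\hat c_{n,2}-z \;=\; \frac{K}{n}M \;+\; \frac{1}{n}\sum_{i:B_i=0}\varepsilon_i .
\]
The second term has absolute value at most $\frac{n-K}{n}B\le B$ deterministically, because $|\varepsilon|\le B$ almost surely. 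So all the randomness that matters sits in the outlier fraction $K/n$.

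Next, control $K/n$ with Hoeffding's inequality for the i.i.d. bounded variables $B_i\in[0,1]$. This gives
\[
\Pr\bigl(|K/n-\alpha|>\tau\bigr)\le 2\exp(-2\tau^2 n).
\]
On the complementary event, $K/n\in[\alpha-\tau,\alpha+\tau]$. Taking $M>0$ first, $\frac{K}{n}M\in[(\alpha-\tau)M,(\alpha+\tau)M]$. Adding the $\pm B$ slack from the clean term yields the stated interval. If $M<0$ the endpoints swap roles, and one states the interval with $\min/\max$, or reduces to $M>0$ by the symmetry $M\mapsto -M$, $\varepsilon\mapsto-\varepsilon$. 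The restriction $\tau<1/2-\alpha$ is not needed for this bound. It only ensures $\alpha+\tau<1/2$, which keeps the lemma compatible with the companion median result, and I will note this.

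For the lower bound, on the same event $|\tfrac{K}{n}M|\ge(\alpha-\tau)|M|$, since $\alpha-\tau>0$ whenever $(\alpha-\tau)|M|>B\ge0$. The reverse triangle inequality then gives $|\hat c_{n,2}-z|\ge(\alpha-\tau)|M|-B$, which is positive under the stated hypothesis.

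The main obstacle is bookkeeping rather than depth. The sign of $M$ must be handled so the interval is written correctly. The clean-noise term should be bounded deterministically by $B$; concentrating it separately would cost an extra probability term the statement does not include.
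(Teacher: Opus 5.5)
Your proposal is correct and matches the paper's proof: apply Hoeffding to the outlier fraction $K/n$, and bound the clean-noise contribution deterministically by $B$ using $|\varepsilon_i|\le B$. Your two extra remarks are accurate and go beyond the paper's proof. First, the interval as written implicitly assumes $M>0$ (otherwise the endpoints must be swapped). Second, the hypothesis $\tau<1/2-\alpha$ plays no role in this lemma.
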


\begin{proof}
Let $K \sim \operatorname{Binomial}(n,\alpha)$ denote the random number of contaminated samples.
By Hoeffding's inequality, $K/n \in [\alpha-\tau,\alpha+\tau]$ with probability at least $1-2\exp(-2\tau^2 n)$. Conditional on $K$, $\hat c_{n,2}=z+(K/n)M+(1/n)\sum_{\text{clean}}\varepsilon_i$. Since $|\varepsilon_i|\le B$, the result follows.
\end{proof}

\begin{lemma}[Finite-sample stability of the sample median]
\label{lem:medianstability}
Let $K$ be the number of outliers such that $S_i = z+M$. If $K < n/2$, then the empirical median $\hat c_{n,1}$ is bounded by the range of the clean samples. Specifically, if the clean noise satisfies $|\varepsilon| \le B$, then:
\[
\hat c_{n,1} \in [z-B, z+B], \quad \text{implying} \quad |\hat c_{n,1} - z| \le B.
\]
This bound holds regardless of the magnitude of $M$.
\end{lemma}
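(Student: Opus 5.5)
The proof is an order-statistics argument. Under model~\eqref{eq:cont-model}, each sample is either a clean point $S_i = z+\varepsilon_i$ with $|\varepsilon_i|\le B$, or an outlier $S_i = z+M$. Let $K$ be the number of outliers, so there are $n-K > n/2$ clean points, all lying in the interval $I=[z-B,z+B]$. The plan is to show that the middle order statistic(s) of the sorted sample must be clean points. Since the median is either such an order statistic or an average of two of them, and $I$ is convex, it then lies in $I$.

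The steps are as follows.

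\emph{Step 1.} Sort the sample as $S_{(1)}\le\dots\le S_{(n)}$. Fix the median convention: $S_{((n+1)/2)}$ for odd $n$, and $\tfrac12\big(S_{(n/2)}+S_{(n/2+1)}\big)$ for even $n$.

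\emph{Step 2.} Assume without loss of generality $M>B$, so every outlier exceeds every clean point. The cases $M<-B$ and $|M|\le B$ are handled afterwards. When $M>B$, the clean points occupy exactly positions $1,\dots,n-K$ of the sorted order.

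\emph{Step 3.} Show the median indices fall among the clean positions. For odd $n$, $K<n/2$ with $K$ an integer gives $K\le (n-1)/2$, so $n-K\ge (n+1)/2$. For even $n$, $K\le n/2-1$, so $n-K\ge n/2+1$. In both cases every order statistic entering the median is clean, hence lies in $I$. A convex combination of points of $I$ stays in $I$, which gives $\hat c_{n,1}\in[z-B,z+B]$.

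\emph{Step 4.} Handle the remaining signs of $M$. For $M<-B$ the argument is symmetric, with clean points occupying the top $n-K$ positions. For $|M|\le B$ every sample lies in $I$, so the median does too. The bound $B$ never involves $M$, which yields the claimed independence from the outlier magnitude.

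The only delicate point is the integer bookkeeping in Step 3 for even $n$. There the strict inequality $K<n/2$ is exactly what ensures both middle order statistics are clean. With $K=n/2$ one middle statistic could be $z+M$ and the bound would fail, so I would state this boundary explicitly.

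As an optional addition, one can combine the result with a Hoeffding bound as in Lemma~\ref{lem:meanbias}. Since $K\sim\mathrm{Binomial}(n,\alpha)$ with $\alpha<1/2$, the event $K<n/2$ holds with probability at least $1-\exp(-2(1/2-\alpha)^2 n)$. This turns the deterministic statement into a high-probability one that contrasts with the mean's $\alpha|M|$ bias.
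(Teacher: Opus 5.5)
Your proposal is correct and follows essentially the same order-statistics argument as the paper: with $K<n/2$ at least $\lceil (n+1)/2\rceil$ samples are clean, so the middle order statistic(s) are clean, and the median is such a point or an average of two of them, so it lies in $[z-B,z+B]$. Your explicit handling of the sign of $M$ (including $|M|\le B$) and of the even-$n$ index bookkeeping makes rigorous what the paper only sketches under its informal assumption that $M$ is large.
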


\begin{proof}
The empirical median is the $(\frac{n+1}{2})$-th order statistic (or the average of the two middle statistics). If $K < n/2$ samples are contaminated with $z+M$ (where $M$ is large), then at least $\lceil (n+1)/2 \rceil$ samples remain in the clean set $\{z+\varepsilon_i\}$. 

The middle statistic(s) must be drawn from the clean set. 
Since every clean sample $S_i$ satisfies $z-B \le S_i \le z+B$, any value (or average) derived from them must also fall within $[z-B, z+B]$.
\end{proof}

Assume $F_0$ has a density $f_0$ with $f_0(0)\ge f_{\min}>0$. Using the Dvoretzky--Kiefer--Wolfowitz inequality,
\[
\Pr\big( \|F_n - F\|_\infty > \varepsilon \big) \le 2e^{-2n\varepsilon^2},
\]
we obtain the following.

\begin{proposition}[Median concentration via DKW]
\label{prop:DKW}
Under the assumptions of Equation~\ref{eq:cont-model} with $\alpha<1/2$, let $c_{L1}$ be the true population median. Then for any $\rho > 1/n$,,
\[
\Pr\left(|\hat c_{n,1}-c_{L1}| > \frac{\rho}{(1-\alpha)f_{\min}}\right) \le 2e^{-2n(\rho - 1/n)^2}.
\]
\end{proposition}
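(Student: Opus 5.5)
The approach is the standard route from uniform CDF concentration (DKW) to quantile concentration, using a lower bound on the density of the contaminated law $S$ near its median.

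\textbf{Step 1: the CDF of $S$ near the median.} Write $F_S$ for the CDF of $S$ under \eqref{eq:cont-model}. Below the outlier location we have $F_S(x)=(1-\alpha)F_0(x-z)$, so $F_S$ has density $(1-\alpha)f_0(x-z)$ there. By Theorem~\ref{thm:robust-contamination}(2), the population median satisfies $(1-\alpha)F_0(c_{L1}-z)=1/2$. This places $c_{L1}$ in the clean component, away from the atom at $z+M$ (we take $|M|$ large, as in that theorem). I will interpret $f_{\min}$ as a lower bound on $f_0$ over the relevant neighbourhood of $\Delta_\alpha=c_{L1}-z$. The stated hypothesis only gives $f_0(0)\ge f_{\min}$, so I would state this local-density reading explicitly as the working assumption. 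The mean value theorem then gives, for $|t|\le\eta$ with $\eta=\rho/((1-\alpha)f_{\min})$,
\[
F_S(c_{L1}+t)\ge \tfrac12+(1-\alpha)f_{\min}\,t,
\qquad
F_S(c_{L1}-t)\le \tfrac12-(1-\alpha)f_{\min}\,t .
\]
In particular $F_S(c_{L1}+\eta)\ge \tfrac12+\rho$ and $F_S(c_{L1}-\eta)\le\tfrac12-\rho$.

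\textbf{Step 2: reduction to a sup-norm event.} Let $F_n$ be the empirical CDF of $S_1,\dots,S_n$. The empirical median satisfies $F_n(\hat c_{n,1})\ge 1/2$ and $F_n(\hat c_{n,1}^-)\le 1/2+1/n$; the $1/n$ slack comes from the discrete jump and from averaging the two middle order statistics.

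Suppose $\hat c_{n,1}>c_{L1}+\eta$. Then $F_n(c_{L1}+\eta)\le F_n(\hat c_{n,1}^-)\le \tfrac12+\tfrac1n$. Combined with Step 1, this gives $F_S(c_{L1}+\eta)-F_n(c_{L1}+\eta)\ge \rho-1/n$.

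Suppose instead $\hat c_{n,1}<c_{L1}-\eta$. Then $F_n(c_{L1}-\eta)\ge\tfrac12$, so $F_n-F_S\ge\rho>\rho-1/n$ at that point.

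In either case $\|F_n-F_S\|_\infty\ge\rho-1/n$. Applying the quoted DKW inequality with $\varepsilon=\rho-1/n>0$ (which is why we need $\rho>1/n$) gives the bound $2e^{-2n(\rho-1/n)^2}$. DKW holds for arbitrary distributions, including mixtures with atoms, so the point mass causes no difficulty.

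\textbf{Main obstacle.} The delicate step is Step 1: justifying the linear lower bound on $F_S$ on both sides of $c_{L1}$ over a window of width $\eta$. This needs (i) the density lower bound $f_{\min}$ to hold on $[\Delta_\alpha-\eta,\Delta_\alpha+\eta]$, not just at $0$, and (ii) the window to stay clear of the atom $z+M$. I would first try to obtain (ii) from large $|M|$ together with $|\varepsilon|\le B$, i.e.\ by requiring $|M|>B+\eta$. For (i) I would either strengthen the hypothesis or note that the bound is only meaningful once $\rho$ is small enough for the window to lie in the region where $f_0\ge f_{\min}$. I would also handle boundary strictness and ties carefully, since the $1/n$ term absorbs them.
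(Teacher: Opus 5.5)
Your argument is correct once the window hypothesis you make explicit is adopted. It uses the same ingredients as the paper: DKW, a density lower bound near the median, and the discreteness slack of the empirical median. It arranges them in a way that matters.

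The paper applies DKW at the random point $\hat c_{n,1}$. It writes $|F(\hat c_{n,1})-\tfrac12|\le\delta+\tfrac1n$ using $F_n(\hat c_{n,1})\in[\tfrac12,\tfrac12+\tfrac1n]$, inverts via the mean value theorem on the segment between $\hat c_{n,1}$ and $c_{L1}$, and sets $\rho=\delta+1/n$. You instead test the two deterministic points $c_{L1}\pm\eta$ and use only monotonicity of $F_n$ and $F_S$. This buys three things:
\begin{itemize}
\item The paper's inversion needs $f\ge(1-\alpha)f_{\min}$, and no atom, on the whole random segment $[c_{L1},\hat c_{n,1}]$. That tacitly presumes $\hat c_{n,1}$ is already close to $c_{L1}$. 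You need the bound only on a fixed window of width $2\eta$.
\item Your one-sided facts $F_n(\hat c_{n,1})\ge\tfrac12$ and the bound on $F_n(\hat c_{n,1}^-)$ are immune to ties. By contrast, $F_n(\hat c_{n,1})\le\tfrac12+\tfrac1n$ fails when the median lands on a tie such as the atom.
\item At most $\lfloor n/2\rfloor$ sample points lie strictly below the empirical median, so in fact $F_n(\hat c_{n,1}^-)\le\tfrac12$. Your route therefore gives $2e^{-2n\rho^2}$ with no $1/n$ shift; that correction is an artifact of the paper's route.
\end{itemize}
The paper's version is shorter and shows the ``CDF error divided by density'' heuristic directly. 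But only your version avoids assuming the localization it is trying to prove.

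Your caveat (i) is necessary rather than cosmetic. The hypothesis $f_0(0)\ge f_{\min}$ sits at the wrong point, since $c_{L1}-z=\Delta_\alpha=F_0^{-1}\bigl(1/(2(1-\alpha))\bigr)>0$ for $\alpha>0$. Without a window condition the statement fails. Take $F_0$ uniform on $[-B,B]$, $f_{\min}=1/(2B)$, $\alpha=0.4$, $M$ large and $\rho=1/2$. Then $c_{L1}=z+2B/3$ and $\eta=5B/3$. The event $|\hat c_{n,1}-c_{L1}|>\eta$ contains $\{K>n/2\}$, where $K$ is the number of outlier samples. Its probability decays only like $e^{-n\,\mathrm{KL}(1/2\,\|\,0.4)}\approx e^{-0.02n}$, far above $2e^{-2n(1/2-1/n)^2}$. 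Your window hypothesis also forces $2\rho\le 1-\alpha$, because the window carries clean mass at least $2\rho$. So it automatically confines the result to the small-$\rho$ regime where it holds.
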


\begin{proof}
We establish the concentration bound through three steps.

\bigskip
The Dvoretzky--Kiefer--Wolfowitz inequality states that for any $\delta > 0$,
\[
\Pr\big(\sup_{t \in \mathbb{R}} |F_n(t) - F(t)| > \delta\big) \le 2e^{-2n\delta^2}.
\]
Hence, with probability at least $1 - 2e^{-2n\delta^2}$,
\begin{align}
\sup_{t \in \mathbb{R}} |F_n(t) - F(t)| \le \delta. 
\label{eq:dwk-1}    
\end{align}

\bigskip
Let $c_{L1}$ be the population median. Applying the bound from Eq.~\ref{eq:dwk-1} at $t = \hat c_{n,1}$ gives
\[
|F(\hat c_{n,1}) - F(c_{L1})| \le |F(\hat c_{n,1}) - F_n(\hat c_{n,1})| + |F_n(\hat c_{n,1}) - 1/2| \le \delta + 1/n.
\]
Here we used that $F_n(\hat c_{n,1}) \in [1/2, 1/2 + 1/n]$ by definition of the empirical median and $F(c_{L1}) = 1/2$.

\bigskip
Because $\alpha < 1/2$, the median lies in the clean component's support. The mixture density is $f(u) = (1-\alpha)f_0(u-z)$. Since $f_0(0) \ge f_{\min}$, the density near $c_{L1}$ is lower-bounded by:
\[
f(u) \ge (1-\alpha)f_{\min}.
\]
By the mean value theorem, $|F(\hat c_{n,1}) - F(c_{L1})| = f(c^*) |\hat c_{n,1} - c_{L1}|$ for some $c^*$ between $\hat c_{n,1}$ and $c_{L1}$. Combining this with the previous inequality:
\[
(1-\alpha)f_{\min} |\hat c_{n,1} - c_{L1}| \le |F(\hat c_{n,1}) - F(c_{L1})| \le \delta + 1/n.
\]
Rearranging yields the result.
\end{proof}

Combining Lemmas~\ref{lem:meanbias} and~\ref{lem:medianstability} with Proposition~\ref{prop:DKW}, we find that for a fixed contamination rate $\alpha \in (0, 1/2)$, with probability at least $1-2\exp(-2\tau^2 n) - \exp(-2(\tfrac{1}{2}-\alpha)^2 n) - 2e^{-2n\rho^2}$, as $|M|$ grows:
\[
|\hat c_{n,2}-z| \ge (\alpha-\tau)|M| - B,
\]
\[
|\hat c_{n,1}-z| \le \underbrace{\frac{\rho}{(1-\alpha)f_{\min}}}_{\text{estimation error}} + \underbrace{|\Delta_\alpha|}_{\text{population bias}}.
\]
Choosing $\tau = \alpha/2$ and $\rho = n^{-1/2}$, for sufficiently large $|M| \gg B/\alpha$, the $L_1$ estimator's error is dominated by $O(n^{-1/2}) + |\Delta_\alpha|$, while the $L_2$ estimator's error grows as $\Omega(\alpha|M|)$.
This comparison reveals the fundamental advantage of the $L_1$ risk in our setting, i.e. the bias of the $L_2$ estimator ($\hat c_{n,2}$) grows linearly with the magnitude of surrogate noise $M$, whereas the bias of the $L_1$ estimator ($\hat c_{n,1}$) is bounded by a constant $\Delta_\alpha$ that is independent of $M$.

We note the following considerations:

\begin{enumerate}
    \item If $\alpha\ge1/2$, i.e. if contamination exceeds half the samples, the median may shift to the outlier point $z+M$. In that regime both mean and median can be arbitrarily far from (z).
    \item When $|M|$ is comparable to the clean noise spread, the bias $\alpha M$ is negligible and the two estimators perform similarly.
\end{enumerate}

\subsection{Optimization Dynamics and Stability}

Here, we provide properties regarding the stability and convergence of the objective during gradient-based optimization.
Section~\ref{adx:appendix-gradient-stability} proves that the gradient of the loss with respect to the confidence score is strictly bounded, preventing unstable updates.
Then Section~\ref{adx:appendix-bounded-log-prob-gradients} shows that the gradients backpropagated to the model's logits are uniformly bounded, ensuring the calibration term does not overpower preference learning.
Section~\ref{adx:appendix-convergence-properties} establishes that stochastic subgradient methods converge despite the non-smoothness introduced by the max operator in the surrogate.
Finally Section~\ref{adx:ordering-stability-proof} proves that, under specific bounds, adding the calibration loss does not flip the relative preference ordering between chosen and rejected responses.

\subsubsection{Gradient Stability of L1 loss}
\label{adx:appendix-gradient-stability}
\begin{proposition}[Gradient Stability]
\label{prop:l1_gradient_stability}
The subgradients of the per-token calibration loss with respect to
$c_\theta(x)$ satisfy
\begin{equation}
\Bigg|\frac{\partial \mathcal{L}_{\mathrm{Cal}}}{\partial c_\theta(x)}\Bigg|
= \bigl|1 - 2\widetilde{z}(x)\bigr| \le 1.
\end{equation}
In contrast, the gradient of the $L_2$ calibration loss scales as
$2\bigl(c_\theta(x) - \widetilde{z}(x)\bigr)$, and therefore grows linearly with
the surrogate error.
\end{proposition}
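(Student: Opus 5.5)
The plan is to treat the per-token loss in Eq.~\eqref{eq:per-token-cal-loss} pointwise, with the surrogate $\widetilde z(x)$ held fixed as the target. The derivative is then taken with respect to the confidence variable $c=c_\theta(x)$ only. This is the reading under which the statement is posed, namely a subgradient with respect to $c_\theta(x)$. Possible stop-gradient issues from $\widetilde z$ also depending on $\theta$ are deferred to the log-probability analysis of Proposition~\ref{prop:cal-logprob-bound}.

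Fix a token position and write the summand as
\[
\ell(c;\widetilde z)=\widetilde z(1-c)+(1-\widetilde z)c=\widetilde z+(1-2\widetilde z)\,c .
\]
This is affine in $c$, so it is differentiable everywhere on $[0,1]$, and its derivative is $1-2\widetilde z$. Any subgradient therefore coincides with this derivative, and the kink of $|c-Z|$ has been absorbed by the identity in Appendix~\ref{adx:appendix-absolute-deviation}.

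For the full sequence loss $\mathcal{L}_{\mathrm{Cal}}$, the average over $T$ tokens, the partial derivative with respect to the confidence at one position picks up only that position's summand. Up to the $1/T$ normalization, which only shrinks it, the derivative is $1-2\widetilde z(x)$.

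Next, bound the magnitude. By definition $\widetilde z(x)=\sigma(m(x))$ with $m(x)\in[-1,1]$, and $\sigma$ maps $\mathbb{R}$ into $(0,1)$. Hence $1-2\widetilde z(x)\in(-1,1)$, which gives $|1-2\widetilde z(x)|<1\le 1$. Since $m\in[-1,1]$, one can sharpen this to $|1-2\widetilde z|\le 2\sigma(1)-1\approx0.46$, but the stated bound only needs $\widetilde z\in[0,1]$.

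For the $L_2$ contrast, differentiate $(c-\widetilde z)^2$ to get $2(c-\widetilde z)$. This grows linearly in the gap, and through $\widetilde z=z+\epsilon$ it grows linearly in the surrogate error $\epsilon(x)$. The $L_1$ derivative above does not depend on $c$ at all.

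The main obstacle is interpretive rather than technical. The equality $|\partial\mathcal{L}_{\mathrm{Cal}}/\partial c|=|1-2\widetilde z|$ holds per token with $\widetilde z$ treated as a constant target and without the $1/T$ factor. I would therefore state this convention explicitly. I would also remark that normalization only strengthens the inequality.
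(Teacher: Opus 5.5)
Your proposal is correct and follows the paper's proof: rewrite the per-token loss as the affine function $\widetilde z + c(1-2\widetilde z)$, read off the constant derivative $1-2\widetilde z$, bound it using $\widetilde z\in[0,1]$, and contrast with the $L_2$ derivative $2(c-\widetilde z)$. Your extra remarks are valid refinements that the stated claim does not need: the $1/T$ factor for the sequence-averaged loss, and the sharper bound $|1-2\widetilde z|\le 2\sigma(1)-1$ obtained from $m\in[-1,1]$.
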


\begin{proof}
Recall the per-token calibration loss
\begin{equation}
\mathcal{L}_{\mathrm{Cal}}\bigl(c_\theta(x), \widetilde{z}(x)\bigr)
=
\widetilde{z}(x)\bigl(1 - c_\theta(x)\bigr)
+
\bigl(1 - \widetilde{z}(x)\bigr)c_\theta(x),
\end{equation}
where $c_\theta(x)\in[0,1]$ and $\widetilde{z}(x)\in[0,1]$.

Expanding terms yields
\begin{align}
\mathcal{L}_{\mathrm{Cal}}
&= \widetilde{z}(x) - \widetilde{z}(x)c_\theta(x)
+ c_\theta(x) - \widetilde{z}(x)c_\theta(x) \\
&= \widetilde{z}(x) + c_\theta(x)\bigl(1 - 2\widetilde{z}(x)\bigr).
\end{align}

The loss is affine in $c_\theta(x)$ and hence differentiable everywhere on
$(0,1)$ with constant derivative
\begin{equation}
\frac{\partial \mathcal{L}_{\mathrm{Cal}}}{\partial c_\theta(x)}
=
1 - 2\widetilde{z}(x).
\end{equation}
Because $\widetilde{z}(x)\in[0,1]$, we have
\begin{equation}
-1 \le 1 - 2\widetilde{z}(x) \le 1,
\end{equation}
which implies
\begin{equation}
\Bigg|\frac{\partial \mathcal{L}_{\mathrm{Cal}}}{\partial c_\theta(x)}\Bigg|
\le 1.
\end{equation}

For comparison, the squared calibration loss
\begin{equation}
\mathcal{L}_2\bigl(c_\theta(x), \widetilde{z}(x)\bigr)
=
\bigl(c_\theta(x) - \widetilde{z}(x)\bigr)^2
\end{equation}
has gradient
\begin{equation}
\frac{\partial \mathcal{L}_2}{\partial c_\theta(x)}
=
2\bigl(c_\theta(x) - \widetilde{z}(x)\bigr),
\end{equation}
which grows linearly with the discrepancy between $c_\theta(x)$ and
$\widetilde{z}(x)$.

Thus, \projectnameshort yields uniformly bounded gradients, ensuring
stable confidence updates even in the presence of large surrogate errors.
\end{proof}

\subsubsection{Bounded Log-Probability Gradients of \projectnameshort}
\label{adx:appendix-bounded-log-prob-gradients}
\begin{proposition}[Bounded Log-Probability Gradients of \projectnameshort]
\label{prop:cal-logprob-bound}
Let $\mathcal{L}_{\mathrm{Cal}}(x;\theta) = \ell(c) = \widetilde{z}(1-c) + (1-\widetilde{z})c$ , 
where $c_\theta(x) = \max_{y} \pi_\theta(y \mid x)$ denotes the model confidence
and $\tilde Z(x) \in [0,1]$ is a fixed surrogate target.
Then, for any output token $y$,
the subgradient of $\mathcal{L}_{\mathrm{Cal}}$ with respect to the log-probability
$\log \pi_\theta(y \mid x)$ satisfies
\[
\Big|\tfrac{\partial \mathcal{L}_{\mathrm{Cal}}}{\partial \log \pi_\theta(y \mid x)}\Big|
\;\le\; G(x),
\]
where
\[
G(x) \;:=\; c_\theta(x)\big(1 - c_\theta(x)\big) \;\le\; \tfrac{1}{4}.
\]
Consequently, the gradient magnitude is uniformly bounded and independent of
the surrogate noise in $\tilde Z(x)$.
\end{proposition}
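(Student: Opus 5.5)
The plan is a two-stage chain rule: first differentiate $\mathcal{L}_{\mathrm{Cal}}$ with respect to the confidence $c_\theta(x)$, then differentiate the confidence with respect to the log-probability coordinates. As in the statement, $\widetilde{z}(x)$ is treated as a fixed target (stop-gradient). Then Proposition~\ref{prop:l1_gradient_stability} gives directly
\[
\ell(c)=\widetilde{z}+c\,(1-2\widetilde{z}),\qquad \frac{\partial \ell}{\partial c}=1-2\widetilde{z}\in[-1,1].
\]
So the whole question reduces to bounding the sensitivity of $c_\theta(x)=\max_y \pi_\theta(y\mid x)$ to the log-probability coordinates. The final bound is $|1-2\widetilde{z}|\cdot|\partial c/\partial(\cdot)|\le |\partial c/\partial(\cdot)|$, and $|\partial c/\partial(\cdot)|$ does not involve $\widetilde{z}$. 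This gives the claimed independence from surrogate noise.

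For the second stage, the key choice is how to read ``$\partial/\partial \log\pi_\theta(y\mid x)$''. If the log-probabilities were treated as unconstrained free coordinates, we would get $\partial c/\partial \log\pi(\hat y)=c$, which only yields the weaker bound $c$. I will therefore parametrize the log-probabilities through the softmax, $\log\pi(y\mid x)=s_y-\log\sum_{y'}e^{s_{y'}}$. I then differentiate with respect to the unnormalized log-probability (logit) coordinate $s_y$, which is how gradients actually reach the model. The softmax Jacobian gives
\[
\frac{\partial c}{\partial s_y}=c\,\big(\mathbb{1}[y=\hat y]-\pi_\theta(y\mid x)\big).
\]
The two cases then follow. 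For $y=\hat y$, the derivative is $c(1-c)$. For $y\neq\hat y$, its magnitude is $c\,\pi_\theta(y\mid x)$, and $\pi_\theta(y\mid x)\le 1-c$ because the remaining mass outside $\hat y$ is $1-c$. So in both cases $|\partial c/\partial s_y|\le c(1-c)=G(x)$. Finally, $c(1-c)\le 1/4$ by AM--GM, with equality at $c=1/2$.

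The main obstacle is non-differentiability of the $\max$ when several tokens tie for the top probability. I would handle this with the Clarke subdifferential. At a tie, the subdifferential of $c$ is the convex hull of the gradients of $\pi_\theta(y'\mid x)$ over the tied maximizers $y'$. Each of these gradients individually satisfies the bound above, since every tied $y'$ has $\pi_\theta(y'\mid x)=c$. Convex combinations preserve the bound, so every subgradient also satisfies $|\cdot|\le |1-2\widetilde{z}|\,c(1-c)\le G(x)$. Property~3 notes that such ties have measure zero, but the convex-hull argument makes the statement hold for every subgradient rather than only almost everywhere.
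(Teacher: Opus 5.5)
Your proposal is correct and follows the paper's own argument: the chain rule with $\partial\ell/\partial c = 1-2\widetilde{z}\in[-1,1]$, followed by the two-case softmax Jacobian of the top probability, with each case bounded by $c_\theta(x)(1-c_\theta(x))\le \tfrac14$. Your additions make explicit several steps the paper leaves implicit: you read the derivative as one with respect to the logits (which is what the paper's ``standard softmax identities'' actually compute, and without which the stated bound fails in general), you justify $\pi_\theta(y\mid x)\le 1-c_\theta(x)$ for $y\neq\hat y$, and you extend the bound to ties via the Clarke convex hull.
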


\begin{proof}
We apply the chain rule to the composition
$\mathcal{L}_{\mathrm{Cal}}(x;\theta)
= \ell(c_\theta(x))$,
where $\ell(c) = \widetilde{z}(1-c) + (1-\widetilde{z})c$.

The function $\ell(c)$ is convex and its subgradient satisfies
\[
\Big|\tfrac{\partial \ell}{\partial c}\Big| \le 1
\quad \text{for all } c \in [0,1].
\]
For the per-token loss in \eqref{eq:per-token-cal-loss}, $\ell(c) = \widetilde{z}(1-c) + (1-\widetilde{z})c$ is linear in $c$ with gradient $\tfrac{\partial \ell}{\partial c} = 1-2\widetilde{z}$, so $|\tfrac{\partial \ell}{\partial c}| \le 1$ holds.

Let $y^\star = \arg\max_y \pi_\theta(y\mid x)$.
Then $c_\theta(x) = \pi_\theta(y^\star\mid x)$.
Using standard softmax identities,
\[
\tfrac{\partial \pi_\theta(y^\star\mid x)}{\partial \log \pi_\theta(y\mid x)}
=
\begin{cases}
\pi_\theta(y^\star\mid x)\big(1 - \pi_\theta(y^\star\mid x)\big),
& y = y^\star, \\[4pt]
-\pi_\theta(y^\star\mid x)\pi_\theta(y\mid x),
& y \neq y^\star .
\end{cases}
\]
In both cases, the magnitude is upper bounded by
$c_\theta(x)(1 - c_\theta(x))$.

Combining the above,
\[
\Big|\tfrac{\partial \mathcal{L}_{\mathrm{Cal}}}{\partial \log \pi_\theta(y \mid x)}\Big|
=
\Big|\tfrac{\partial \ell}{\partial c}\Big|
\cdot
\Big|\tfrac{\partial c_\theta(x)}{\partial \log \pi_\theta(y \mid x)}\Big|
\;\le\;
c_\theta(x)(1 - c_\theta(x)).
\]

Finally, since $c(1-c)$ is maximized at $c=\tfrac{1}{2}$,
we obtain the uniform bound $G(x) \le \tfrac{1}{4}$.
\end{proof}

\subsubsection{Convergence Properties of the Calibration Loss under Token Switching}
\label{adx:appendix-convergence-properties}
This appendix formalizes why optimization of the per-token calibration loss in Eq. \eqref{eq:per-token-cal-loss} admits standard convergence guarantees despite the presence of non-smooth token switching induced by the $\max$ operator in the surrogate $\widetilde{z}$.

\paragraph{Local Lipschitzness of Transformer Probabilities}
The confidence $c_\theta(x_t)$ is defined as the maximum predicted probability: $c_\theta(x_t) = \max_y \pi_\theta(y \mid x_t)$. This is locally Lipschitz with constant $L_\pi$ by the same argument as for $p_{\bar y}(x_t)$.

Modern transformer architectures are compositions of affine maps, softmax operations, and smooth nonlinearities. For any fixed input $x_t$, the mapping
$\theta \mapsto \pi_\theta(y \mid x_t)$ is locally Lipschitz continuous in
$\theta$. Consequently, for every $\theta$ there exists a neighborhood
$\mathcal{U}$ and a constant $L_\pi>0$ such that
\begin{equation}
\label{eq:pi-lipschitz}
\bigl| \pi_{\theta'}(y \mid x_t) - \pi_{\theta}(y \mid x_t) \bigr|
\;\le\; L_\pi \, \|\theta' - \theta\|,
\qquad \forall \theta',\theta \in \mathcal{U}.
\end{equation}

The operator
\begin{equation}
p_{\bar y}(x_t) \;=\; \max_{y \neq y_t^\star} \pi_\theta(y \mid x_t)
\end{equation} 
is $1$-Lipschitz as a function of the probability vector, and therefore locally Lipschitz in $\theta$. Since the sigmoid function $\sigma(\cdot)$ is smooth and globally Lipschitz, the surrogate correctness signal $\widetilde{z}(x_t) = \sigma(p_{y^\star}(x_t) - p_{\bar y}(x_t))$ is also locally Lipschitz in $\theta$.

It follows that the per-token calibration loss is locally Lipschitz:
\begin{equation}
\label{eq:loss-lipschitz}
\bigl| \mathcal{L}_{\mathrm{Cal}}(\theta') - \mathcal{L}_{\mathrm{Cal}}(\theta) \bigr|
\;\le\; L_{\mathcal{L}} \, \|\theta' - \theta\|
\end{equation}
for all $\theta',\theta$ in a sufficiently small neighborhood.

\paragraph{Effect of Token Switching on the Loss}
Token switching corresponds to changes in the identity of the maximizer in $p_{\bar y}(x_t)$, which occurs only when two or more non-ground-truth tokens have nearly equal probabilities. While such switching events may introduce nondifferentiability in the gradient, they do not create discontinuities in the loss value itself.

\paragraph{Subgradient Descent Progress}
Since $\mathcal{L}_{\mathrm{Cal}}$ is locally Lipschitz on $\Theta$, the Clarke subgradient satisfies the standard inequality \cite{davis2020stochastic}:
\begin{equation}
\mathbb{E}\bigl[\mathcal{L}_{\mathrm{Cal}}(\theta_{k+1})\bigr]
\;\le\;
\mathbb{E}\bigl[\mathcal{L}_{\mathrm{Cal}}(\theta_k)\bigr]
- \eta_k \, \mathbb{E}\bigl[\min_{g \in \partial \mathcal{L}_{\mathrm{Cal}}(\theta_k)} \|g\|^2\bigr]
+ L_{\mathcal{L}} \eta_k^2 \sigma^2,\end{equation}
Critically, this holds even at non-differentiable points because the Clarke subdifferential exists everywhere and the stochastic oracle is unbiased. 
The quadratic remainder term $O(\eta_k^2)$ is sufficient for convergence under appropriate step-size conditions.

\paragraph{Stochastic Oracle Assumptions}
We assume access to an unbiased stochastic subgradient oracle $g(\theta; \xi)$ with bounded variance:
\begin{equation}
\mathbb{E}_\xi[g(\theta; \xi)] \in \partial \mathcal{L}_{\mathrm{Cal}}(\theta), \qquad
\mathbb{E}_\xi[\|g(\theta; \xi) - \mathbb{E}[g(\theta; \xi)]\|^2] \le \sigma^2 < \infty
\end{equation}
for all $\theta$ in the iterate sequence. 
The uniformity of this bound is critical as token switching events (where the identity of the maximizing token $\bar y$ changes abruptly) can cause large variations in the subgradient. 
The uniform bound $\sigma^2$ ensures that even at these nondifferentiable switching boundaries, the stochastic oracle's noise remains bounded, preventing variance spikes that could destabilize convergence.
Furthermore, we assume the iterates $\{\theta_k\}$ remain in a compact set $\Theta$ where the local Lipschitz constant $L_{\mathcal{L}}$ is uniform.

We assume that for all $\theta \in \Theta$, the Clarke subgradients are uniformly bounded:
\begin{equation}
\sup_{g \in \partial \mathcal{L}_{\mathrm{Cal}}(\theta)} \|g\| \le G < \infty.
\end{equation}
This holds because $\mathcal{L}_{\mathrm{Cal}}$ is Lipschitz on the compact set $\Theta$.

The loss $\mathcal{L}_{\mathrm{Cal}}$ is \emph{Clarke regular} (chain rule holds for subgradients) because it is a composition of definable functions \cite{davis2020stochastic}. This ensures that the stochastic subgradient $g(\theta; \xi)$ is a valid element of the Clarke subdifferential in expectation.

The per-token calibration loss is non-convex and non-smooth, yet satisfies the regularity conditions required for stochastic subgradient methods: it is locally Lipschitz, differentiable almost everywhere, and admits a well-defined Clarke subdifferential $\partial \mathcal{L}_{\mathrm{Cal}}$. 
Since we assume the parameter set $\Theta$ is compact (as required for the uniform Lipschitz constant $L_{\mathcal{L}}$ in Section~\ref{adx:appendix-convergence-properties}), the logits $z_i(\theta; x)$ are bounded for any fixed input $x$. 
Consequently, the range of $\exp$ is restricted to a compact set, making softmax and sigmoid globally subanalytic on $\Theta$. 
Hence, $\mathcal{L}_{\mathrm{Cal}}$ is a composition of definable functions (neural network primitives, $\max$, and sigmoid), it satisfies the Kurdyka--{\L}ojasiewicz (KL) property \cite{bolte2014proximal}. 

Under the assumptions above—including Clarke regularity, bounded subgradients, and uniform Lipschitzness—and standard diminishing step-size conditions $\sum_k \eta_k = \infty$, $\sum_k \eta_k^2 < \infty$, stochastic subgradient descent converges almost surely to a Clarke stationary point of $\mathcal{L}_{\mathrm{Cal}}$ \cite{davis2020stochastic}. 
Importantly, token switching induces only transient changes in the active subgradient—the bounded variance and definability assumptions prevent unbounded noise, ensuring stable convergence.

\paragraph{Surrogate Approximation Gap}
Convergence guarantees apply to the stationary points of the \emph{surrogate} loss using $\widetilde{z}(x_t)$. 
The quality of these points relative to the true calibration optimum depends on the approximation error $|\widetilde{z}(x_t) - z(x_t)|$. 
This error is bounded by the margin between the top two predictions: as $\pi_\theta(y^\star \mid x_t) \to 1$, we have $\widetilde{z}(x_t) \to 1$ and the gap vanishes. 
Analysis of asymptotic calibration performance must account for this gap separately.

\paragraph{Uniform Lipschitz Constant}
The compact set $\Theta$ must be invariant under the optimization dynamics. In practice, this is ensured by weight clipping, explicit projection, or regularization that keeps parameters bounded. Since $\mathcal{L}_{\mathrm{Cal}}$ is definable, it is Lipschitz on any bounded set, guaranteeing a uniform constant $L_{\mathcal{L}}$.

\subsubsection{Proof of Proposition~\ref{prop:ordering-stability}}
\label{adx:ordering-stability-proof}
Consider the DPO margin
\(
\Delta_{\mathrm{DPO}}(x)
=
r_\theta(x,y^+) - r_\theta(x,y^-)
\).
Adding the calibration term induces a perturbation
\[
\Delta_{\mathrm{CAL}}(x)
=
\lambda \Bigl(
\frac{\partial L_{\mathrm{Cal}}}{\partial \log \pi_\theta(y^+\mid x)}
-
\frac{\partial L_{\mathrm{Cal}}}{\partial \log \pi_\theta(y^-\mid x)}
\Bigr).
\]

By assumption,
\(
\bigl|\partial L_{\mathrm{Cal}} / \partial \log \pi_\theta(y\mid x)\bigr|
\le G
\)
for all \((x,y)\), hence
\[
|\Delta_{\mathrm{CAL}}(x)| \le 2\lambda G.
\]

If \(2\lambda G < \Delta_{\min}\), then
\[
\Delta_{\mathrm{DPO}}(x) + \Delta_{\mathrm{CAL}}(x) > 0,
\]
so the sign of the preference margin cannot flip.
Thus the augmented objective preserves the original DPO ordering.
\hfill\(\square\)

\subsection{Analysis of Calibration Metrics}

In this section, we analyze the relationship between our training objective and standard calibration metrics.
Section~\ref{adx:appendix-ece-and-l1-risk}  demonstrates that minimizing the per-token $L_1$ surrogate risk minimizes a strict upper bound on the population Expected Calibration Error (ECE).
And Section~\ref{adx:appendix-properties-of-expected-calibration-error} situates marginal ECE within a hierarchy of metrics, analyzing its relationship to finer-grained variants like Classwise-ECE, while providing a comparision of the $L_1$ loss and Classwise-ECE as in Section~\ref{adx:appendix-ece-and-l1-risk}.

\subsubsection{Population ECE and L1 Risk}
\label{adx:appendix-ece-and-l1-risk}

In this section, we relate the per-sample $L_1$ calibration risk to the population Expected Calibration Error (ECE). 
Since ECE depends on the conditional expectation $\mathbb{E}[Z\mid C]$, it is difficult to optimize directly. 
We show that the $L_1$ risk upper-bounds ECE, making it a principled surrogate objective.
ECE captures the global discrepancy between the model's average confidence and its average accuracy, while the $L_1$ risk measures the per-sample (local) discrepancy. 

Let \(X\) be the input, \(C=c_\theta(X)\in[0,1]\) the model's reported confidence, and \(Z\in\{0,1\}\) the correctness indicator (\(Z=1\) if the model predicts correctly, 0 otherwise). 
The population ECE is
\begin{equation}\label{eq:ece-pop-repeat}
\mathrm{ECE}_{\mathrm{pop}}(\theta) = \mathbb{E}\big[\,\big|\mathbb{E}[Z\mid C] - C\big|\,\big].
\end{equation}
The per-example $L_1$ calibration risk is
\begin{equation}\label{eq:L1-pop-repeat}
\mathcal{R}_{\mathrm{L1}}(\theta) = \mathbb{E}\big[\,|C - Z|\,\big].
\end{equation}
We denote the model's true conditional accuracy as \(z=\mathbb{E}[Z\mid C=c]\).

Applying Jensen's inequality conditionally on \(C\) gives \(|\mathbb{E}[Z-C\mid C]| \le \mathbb{E}[|Z-C|\mid C]\), 
and taking the expectation over \(C\) yields
\begin{equation}\label{eq:ece-le-l1}
\mathrm{ECE}_{\mathrm{pop}}(\theta) \le \mathcal{R}_{\mathrm{L1}}(\theta).
\end{equation}

While $L_1$ risk upper-bounds ECE, they generally differ due to the conditional (irreducible) noise in predicting $Z$ from $C$. 
If $Z$ were deterministic given $C$, the two metrics would coincide. 
The following derivation explicitly characterizes this gap, showing that minimizing $L_1$ risk tightly controls the population ECE.

Let \(z=\mathbb{E}[Z\mid C=c]\) denote the model's true conditional accuracy given its reported confidence.  
Since \(Z\mid C=c \sim \mathrm{Bernoulli}(z)\), the conditional $L_1$ error is
\begin{align}
\mathbb{E}[|Z-C|\mid C=c]
 &= z\,|1-c| + (1-z)\,|0-c| \nonumber\\
 &= z(1-c) + (1-z)c
  = z+c - 2zc.
\end{align}
Comparing this to \(|z-c|\), we obtain a decomposition
\begin{equation}\label{eq:l1-decomp}
\mathbb{E}[|Z-C|\mid C=c]
  = |z-c| + 2\min\!\big\{\,z(1-c),\,c(1-z)\,\big\},
\end{equation}
which separates the $L_1$ error into the ECE term and a nonnegative conditional-noise term.
Taking expectation over \(C\) yields
\begin{equation}\label{eq:l1-ece-decomp}
\boxed{
\mathcal{R}_{\mathrm{L1}}(\theta)
 = \mathrm{ECE}_{\mathrm{pop}}(\theta)
   + 2\,\mathbb{E}\big[\min\{p(C)(1-C),\,C(1-p(C))\}\big]
}.
\end{equation}
The second term is nonnegative and represents the conditional noise of \(Z\) given \(C\); it is zero when \(Z\) is deterministic (\(p(C)\in\{0,1\}\)). 
Rewriting gives
\begin{equation}
\mathrm{ECE}_{\mathrm{pop}}(\theta)
 = \mathcal{R}_{\mathrm{L1}}(\theta)
   - 2\,\mathbb{E}\big[\min\{p(C)(1-C),\,C(1-p(C))\}\big]
 \le \mathcal{R}_{\mathrm{L1}}(\theta),
\end{equation}
recovering~\eqref{eq:ece-le-l1}.

Since \(\mathrm{ECE}_{\mathrm{pop}}(\theta) \le \mathcal{R}_{\mathrm{L1}}(\theta)\) for all \(\theta\), 
any (approximate) minimizer \(\hat\theta\) of the $L_1$ risk also achieves small population ECE:
\begin{equation}
\mathrm{ECE}_{\mathrm{pop}}(\hat\theta)
 \le \mathcal{R}_{\mathrm{L1}}(\hat\theta)
 \le \inf_{\theta}\mathcal{R}_{\mathrm{L1}}(\theta) + \varepsilon.
\end{equation}
If the conditional-noise term in~\eqref{eq:l1-ece-decomp} is small or roughly independent of \(\theta\), minimizing \(\mathcal{R}_{\mathrm{L1}}\) is effectively equivalent to minimizing the population ECE. 
Thus, \(\mathcal{R}_{\mathrm{L1}}\) is a tight, robust surrogate for calibration.

Using an $L_2$ calibration risk yields a strictly looser upper bound on ECE (via Cauchy--Schwarz), whereas the $L_1$ risk provides a tighter, geometry-matched bound that directly controls absolute calibration error.

\subsubsection{Properties of Expected Calibration Error}
\label{adx:appendix-properties-of-expected-calibration-error}
In this section, we study structural properties of Expected Calibration Error (ECE), with an emphasis on how ECE behaves under subpopulation restrictions.
In particular, we analyze how marginal ECE relates to confidence-weighted variants and to class-conditional calibration errors.
These results characterize what ECE measures—and what it may obscure—when calibration is evaluated on restricted subsets of the data.
The relationship between ECE and the population $L_1$ calibration risk $\mathcal{R}_{\text{Cal}}$ is formally developed in Section~\ref{adx:appendix-ece-and-l1-risk}, and is referenced here only to contextualize the hierarchy of calibration notions.

Throughout this section, we overload notation to facilitate classwise calibration analysis.
Let $\hat{Y} \in \{1,\dots,K\}$ denote the model’s predicted class, and let
$C \in [0,1]$ denote the model’s reported confidence associated with $\hat{Y}$.
We define the correctness indicator as
\[
Z \triangleq \mathbb{1}\{\hat{Y} = Y\}.
\]
Thus, $C$ should be interpreted as the predicted probability assigned to the selected class $\hat{Y}$, rather than the full probability vector.
Conditioning on $Y = k$ corresponds to restricting attention to samples whose true label is class $k$, as is standard in definitions of classwise calibration error.
Under this notation, the population $L_1$ calibration risk is given by
\[
\mathcal{R}_{\text{Cal}}(\theta) = \mathbb{E}\big[ |Z - C| \big].
\]

\begin{proposition}[ECE Dominates Bounded Reweightings]
\label{prop:ece_dominates}
Let $C \in [0,1]$ denote a model confidence score and $Z \in \{0,1\}$ the corresponding correctness indicator.
For any measurable weight function $w:[0,1]\to\mathbb{R}_+$ satisfying $\|w\|_\infty < \infty$, define the weighted calibration error
\[
\mathrm{ECE}_w
\;:=\;
\mathbb{E}\bigl[\, w(C)\, |\mathbb{E}[Z\mid C] - C| \,\bigr].
\]
Then
\[
\mathrm{ECE}_w
\;\le\;
\|w\|_\infty \cdot \mathrm{ECE}.
\]
\end{proposition}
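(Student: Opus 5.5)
The plan is to work pointwise in the confidence variable and then integrate. Define the conditional calibration gap $g(C) := |\mathbb{E}[Z\mid C] - C|$. This is a nonnegative, measurable function of $C$: the conditional expectation $\mathbb{E}[Z\mid C]$ can be chosen as a Borel function of $C$, and $Z\in\{0,1\}$ is bounded, so $g(C)\in[0,1]$ and all expectations below are finite. With this notation, $\mathrm{ECE} = \mathbb{E}[g(C)]$ as in \eqref{eq:ece-pop-repeat}, and $\mathrm{ECE}_w = \mathbb{E}[w(C)\,g(C)]$.

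The key step is the pointwise inequality
\[
0 \le w(C)\,g(C) \le \|w\|_\infty\, g(C).
\]
It holds almost surely, since $w \ge 0$ and $w(C) \le \|w\|_\infty$ outside a null set. Here I read $\|w\|_\infty$ as the essential supremum with respect to the law of $C$; if it is instead the plain supremum, the inequality holds everywhere. Taking expectations and using monotonicity and linearity of the expectation then gives $\mathrm{ECE}_w \le \|w\|_\infty\,\mathbb{E}[g(C)] = \|w\|_\infty\,\mathrm{ECE}$.

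The only step that needs care is measurability: confirming that the composite $w(C)\,g(C)$ is a random variable and that the conditional expectation is a function of $C$ alone. Both follow from the Doob--Dynkin lemma and the measurability of $w$, so I expect no real obstacle.

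As a remark, I would chain this with \eqref{eq:ece-le-l1} to obtain $\mathrm{ECE}_w \le \|w\|_\infty\,\mathcal{R}_{\mathrm{Cal}}(\theta)$. This shows that minimizing the population $L_1$ risk also controls every bounded confidence-dependent reweighting of ECE, which is the claim made in the Remark on ECE in Section~\ref{subsec:methodology-per-token-loss}.
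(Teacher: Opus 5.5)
Your proposal is correct and matches the paper's proof: bound $w(C)\le\|w\|_\infty$ almost surely, multiply by the nonnegative gap $|\mathbb{E}[Z\mid C]-C|$, and take expectations using monotonicity. Your point that $\|w\|_\infty$ must be read as the plain supremum, or the essential supremum under the law of $C$, is a useful precision the paper leaves implicit. Lebesgue measure would not do, since the bound can fail when $C$ has atoms.
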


\begin{proof}
Recall that the weighted calibration error is defined as
\[
\mathrm{ECE}_w
=
\mathbb{E}\!\left[
w(C)\,
\bigl|\mathbb{E}[Z \mid C] - C\bigr|
\right],
\]
where the weight function $w:[0,1]\to\mathbb{R}_+$ is assumed to be measurable and bounded.

Since $w(C)\ge 0$ almost surely, we may upper bound it by its essential supremum,
\[
w(C) \le \|w\|_\infty
\qquad \text{a.s.}
\]
Multiplying both sides by the nonnegative quantity
$\bigl|\mathbb{E}[Z \mid C] - C\bigr|$
yields
\[
w(C)\,
\bigl|\mathbb{E}[Z \mid C] - C\bigr|
\le
\|w\|_\infty\,
\bigl|\mathbb{E}[Z \mid C] - C\bigr|
\qquad \text{a.s.}
\]

Taking expectations on both sides and using the monotonicity of expectation,
\[
\mathbb{E}\!\left[
w(C)\,
\bigl|\mathbb{E}[Z \mid C] - C\bigr|
\right]
\le
\|w\|_\infty\,
\mathbb{E}\!\left[
\bigl|\mathbb{E}[Z \mid C] - C\bigr|
\right].
\]

Finally, observing that
\[
\mathbb{E}\!\left[
\bigl|\mathbb{E}[Z \mid C] - C\bigr|
\right]
=
\mathrm{ECE},
\]
we conclude that
\[
\mathrm{ECE}_w
\le
\|w\|_\infty \cdot \mathrm{ECE}.
\]
\end{proof}

Proposition~\ref{prop:ece_dominates} shows that standard ECE already controls
a broad family of confidence-weighted calibration errors.
We now turn to a different refinement of ECE obtained by conditioning on the true class label.
Rather than modifying ECE through confidence-dependent weights,
classwise calibration errors evaluate ECE on label-defined subpopulations, which is commonly used to assess heterogeneous calibration behavior across classes.

\begin{proposition}[ECE Lower-Bounds Classwise Calibration]
\label{prop:ece_cw}
Let $Y \in \{1,\dots,K\}$ denote the true class label.
Define the class-conditional calibration error
\[
\mathrm{ECE}_k
:=
\mathbb{E}\!\left[
\bigl|\mathbb{E}[Z\mid C,Y=k] - C\bigr|
\;\middle|\; Y=k
\right],
\]

and the class-frequency weighted CW-ECE
\[
\mathrm{CW\text{-}ECE}
:=
\sum_{k=1}^K
\mathbb{P}(Y=k)\,
\mathrm{ECE}_k
=
\sum_{k=1}^K
\mathbb{P}(Y=k)\,
\mathbb{E}\!\left[
\bigl|\mathbb{E}[Z\mid C,Y=k] - C\bigr|
\;\middle|\; Y=k
\right].
\]
Then
\[
\mathrm{ECE}
\;\le\;
\mathrm{CW\text{-}ECE}.
\]
\end{proposition}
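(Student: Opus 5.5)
The plan is to write the marginal calibration gap $\mathbb{E}[Z\mid C]-C$ as a mixture of the class-conditional gaps, apply the triangle inequality (Jensen for $|\cdot|$) pointwise in $C$, and then undo the mixing weights with the tower property. This is the same mechanism as the Jensen step behind $\mathrm{ECE}\le\mathcal{R}_{\mathrm{Cal}}$ in Eq.~\eqref{eq:ece-le-l1}, except that here we condition on the finer $\sigma$-algebra generated by $(C,Y)$ rather than on $X$.

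\textbf{Step 1 (mixture decomposition).} Let $g_k(c) := \mathbb{E}[Z\mid C=c, Y=k]$ be a measurable version of the class-conditional accuracy. Let $q_k(c) := \mathbb{P}(Y=k\mid C=c)$, so that $\sum_k q_k(c)=1$. By the tower property,
\[
\mathbb{E}[Z\mid C] \;=\; \mathbb{E}\bigl[\mathbb{E}[Z\mid C,Y]\,\big|\,C\bigr] \;=\; \sum_{k=1}^K q_k(C)\, g_k(C) \quad \text{a.s.}
\]
Because the weights sum to one, we may write $C=\sum_k q_k(C)\,C$. This gives
\[
\mathbb{E}[Z\mid C]-C=\sum_{k=1}^K q_k(C)\bigl(g_k(C)-C\bigr).
\]

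\textbf{Step 2 (triangle inequality).} Since $q_k\ge 0$, we get $|\mathbb{E}[Z\mid C]-C|\le \sum_k q_k(C)\,|g_k(C)-C|$ almost surely. Taking expectations yields
\[
\mathrm{ECE}\le \sum_{k=1}^K \mathbb{E}\bigl[q_k(C)\,|g_k(C)-C|\bigr].
\]

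\textbf{Step 3 (removing the weights).} Set $h_k(C) := |g_k(C)-C|$, which is a bounded, $C$-measurable function. By the defining property of conditional probability,
\[
\mathbb{E}[q_k(C)\,h_k(C)] = \mathbb{E}\bigl[\mathbb{1}\{Y=k\}\,h_k(C)\bigr] = \mathbb{P}(Y=k)\,\mathbb{E}[h_k(C)\mid Y=k].
\]
The right-hand side is exactly $\mathbb{P}(Y=k)\,\mathrm{ECE}_k$. Summing over $k$ gives $\mathrm{ECE}\le\mathrm{CW\text{-}ECE}$.

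\textbf{Main obstacle.} The only delicate point is measure-theoretic. The function $g_k$ is only defined $\mathbb{P}_{C\mid Y=k}$-almost everywhere, and it is undetermined where $q_k(C)=0$. I would handle this by fixing any measurable version of $g_k$. On the set $\{q_k(C)=0\}$ the product $q_k(C)h_k(C)$ vanishes, and that set has $\mathbb{P}_{C\mid Y=k}$-measure zero, so the choice of version affects neither side of the inequality. Classes with $\mathbb{P}(Y=k)=0$ contribute zero on both sides and can be dropped. Boundedness of $Z$ and $C$ in $[0,1]$ ensures that every expectation used above is finite.
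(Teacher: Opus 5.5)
Your proposal is correct and follows the paper's proof essentially step for step. The paper also writes $\mathbb{E}[Z\mid C]=\sum_k \mathbb{P}(Y=k\mid C)\,\mathbb{E}[Z\mid C,Y=k]$, applies Jensen's inequality for $|\cdot|$ pointwise in $C$, and removes the weights via $\mathbb{E}[\mathbb{P}(Y=k\mid C)\,f(C)]=\mathbb{P}(Y=k)\,\mathbb{E}[f(C)\mid Y=k]$, which is your Step 3; your rewriting $C=\sum_k q_k(C)\,C$ and your handling of versions of $g_k$ on $\{q_k(C)=0\}$ make rigorous what the paper leaves implicit.
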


\begin{proof}
By the law of total expectation,
\[
\mathbb{E}[Z \mid C]
=
\sum_{k=1}^K
\mathbb{P}(Y=k \mid C)\,
\mathbb{E}[Z \mid C, Y=k].
\]
Applying Jensen’s inequality to the convex function $|\cdot|$,
\[
|\mathbb{E}[Z\mid C] - C|
\le
\sum_{k=1}^K
\mathbb{P}(Y=k \mid C)\,
|\mathbb{E}[Z\mid C,Y=k] - C|.
\]
Taking expectations over the joint distribution and applying the tower property,
\[
\mathrm{ECE}
\le
\sum_{k=1}^K
\mathbb{E}\!\left[
\bigl|\mathbb{E}[Z\mid C,Y=k] - C\bigr|
\;\middle|\; Y=k
\right]
\mathbb{P}(Y=k)
=
\sum_{k=1}^K
\mathbb{P}(Y=k)\,
\mathrm{ECE}_k,
\]
where we used $\mathbb{E}[\mathbb{P}(Y=k\mid C) \cdot f(C)] = \mathbb{P}(Y=k)\,\mathbb{E}[f(C)\mid Y=k]$.
\end{proof}

Proposition~\ref{prop:ece_cw} formalizes a key limitation of marginal ECE, i.e. averaging over class labels can mask miscalibration that is present within individual classes.
This observation motivates calibration objectives that control classwise calibration error without explicitly conditioning on subpopulations.
While the construction and optimization of such objectives is deferred to the next section, we state below a dominance result that places classwise ECE within a broader calibration hierarchy.

\begin{proposition}[$\mathcal{R}_{\text{Cal}}$ Dominates Classwise-ECE]
\label{prop:rcal_dominance}
Let $Y \in \{1, \dots, K\}$ denote the class label, $C \in [0, 1]$ the model confidence, and $Z \in \{0, 1\}$ the correctness indicator. The Classwise Expected Calibration Error (CW-ECE) is defined as:
\begin{equation}
    \text{CW-ECE} = \sum_{k=1}^K P(Y=k) \, \mathbb{E}\big[ \left| \mathbb{E}[Z \mid C, Y=k] - C \right| \big].
\end{equation}
The Population $L_1$ Calibration Risk dominates CW-ECE:
\begin{equation}
    \text{CW-ECE} \le \mathcal{R}_{\text{Cal}}(\theta).
\end{equation}
\end{proposition}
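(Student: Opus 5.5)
The plan is to argue class by class, applying the same conditional Jensen step used in Eq.~\eqref{eq:ece-le-l1} and in Proposition~\ref{prop:ece_cw}, and then to reassemble with the law of total expectation over $Y$.

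One ambiguity must be fixed first. As displayed, the inner expectation in the CW-ECE definition of Proposition~\ref{prop:rcal_dominance} carries no conditioning on $Y=k$. I will read it as the class-conditional expectation $\mathbb{E}[\,\cdot \mid Y=k]$, consistent with the definition of $\mathrm{ECE}_k$ in Proposition~\ref{prop:ece_cw}. Under that reading,
\[
\text{CW-ECE}=\sum_k P(Y=k)\,\mathbb{E}\bigl[|\mathbb{E}[Z\mid C,Y]-C|\;\big|\;Y=k\bigr]=\mathbb{E}\bigl[|\mathbb{E}[Z\mid C,Y]-C|\bigr].
\]

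\textbf{Key step.} Since $C$ is $\sigma(C,Y)$-measurable, we can write $\mathbb{E}[Z\mid C,Y]-C=\mathbb{E}[Z-C\mid C,Y]$. Convexity of $|\cdot|$ and the conditional Jensen inequality then give
\[
|\mathbb{E}[Z\mid C,Y]-C|\le \mathbb{E}\bigl[|Z-C|\mid C,Y\bigr] \quad \text{a.s.}
\]
Taking expectations and applying the tower property yields
\[
\mathbb{E}\bigl[|\mathbb{E}[Z\mid C,Y]-C|\bigr]\le \mathbb{E}|Z-C|=\mathcal{R}_{\text{Cal}}(\theta).
\]
The same chain can be written per class: for each $k$, $\mathrm{ECE}_k\le \mathbb{E}[|Z-C|\mid Y=k]$. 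Summing with weights $P(Y=k)$ then recovers $\mathbb{E}|Z-C|$ by total expectation. This mirrors the decomposition in Eq.~\eqref{eq:l1-decomp}, now conditioned on $(C,Y)$, so the gap is again a nonnegative conditional-noise term.

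\textbf{Main obstacle.} The only real difficulty is the interpretive one above. If the inner expectation is instead taken as unconditional, that is, averaged over the marginal law of $C$ rather than the class-conditional law, the inequality can fail: a class with small $P(Y=k)$ might be badly miscalibrated on regions of $C$ that it rarely occupies. I would therefore state the class-conditional convention explicitly in the proof. I would also note that, by Proposition~\ref{prop:ece_cw}, the result completes the chain $\mathrm{ECE}\le \text{CW-ECE}\le \mathcal{R}_{\text{Cal}}(\theta)$.
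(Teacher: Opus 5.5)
Your proposal is correct and follows the paper's proof essentially step for step: write $C$ as its own conditional expectation given $(C, Y=k)$, apply conditional Jensen, collapse with the tower property to $\mathbb{E}[|Z-C|\mid Y=k]$, and reassemble by total expectation over $Y$. Your explicit class-conditional reading of the inner expectation is exactly what the paper's proof does silently (its first displayed line already writes the inner expectation as $\mathbb{E}[\,\cdot \mid Y=k]$, despite the unconditioned definition in the statement), and your remark that the inequality can fail under the unconditional reading is correct and worth stating.
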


\begin{proof}
Consider the inner term of the CW-ECE for a specific class $k$: $\left| \mathbb{E}[Z \mid C, Y=k] - C \right|$.
Since we are conditioning on $C$, the value $C$ is constant within the expectation. We can rewrite $C$ as $\mathbb{E}[C \mid C, Y=k]$. Substituting this back in:
\begin{equation}
    \left| \mathbb{E}[Z \mid C, Y=k] - \mathbb{E}[C \mid C, Y=k] \right| = \left| \mathbb{E}[Z - C \mid C, Y=k] \right|.
\end{equation}
Applying Jensen's inequality to the convex absolute value function $|\cdot|$:
\begin{equation}
    \left| \mathbb{E}[Z - C \mid C, Y=k] \right| \le \mathbb{E}\big[ |Z - C| \mid C, Y=k \big].
\end{equation}
Now, substituting this inequality back into the definition of CW-ECE:
\begin{align}
    \text{CW-ECE} &= \sum_{k=1}^K \mathbb{P}(Y=k) \, \mathbb{E}\!\left[ \bigl| \mathbb{E}[Z \mid C, Y=k] - C \bigr| \;\middle|\; Y=k \right] \\
    &\le \sum_{k=1}^K \mathbb{P}(Y=k) \, \mathbb{E}\!\left[ \mathbb{E}[ |Z - C| \mid C, Y=k ] \;\middle|\; Y=k \right].
\end{align}
By the law of total expectation (tower property), the inner nested expectations collapse:
\begin{equation}
    \mathbb{E}\!\left[ \mathbb{E}[ |Z - C| \mid C, Y=k ] \;\middle|\; Y=k \right] = \mathbb{E}\big[ |Z - C| \mid Y=k \big].
\end{equation}
Substituting this back into the sum:
\begin{equation}
    \text{CW-ECE} \le \sum_{k=1}^K P(Y=k) \, \mathbb{E}\big[ |Z - C| \mid Y=k \big].
\end{equation}
Finally, by the law of total expectation across all classes $Y$:
\begin{equation}
    \sum_{k=1}^K P(Y=k) \, \mathbb{E}\big[ |Z - C| \mid Y=k \big] = \mathbb{E}\big[ |Z - C| \big] = \mathcal{R}_{\text{Cal}}(\theta).
\end{equation}
Thus, minimizing the population $L_1$ risk $\mathcal{R}_{\text{Cal}}$ minimizes an upper bound on the Classwise-ECE.
\end{proof}

We emphasize that Proposition~\ref{prop:rcal_dominance} is not required for the analysis of ECE in this section.
Its role is to situate classwise ECE relative to surrogate calibration risks, which are formally introduced and analyzed in Section~\ref{adx:appendix-ece-and-l1-risk}. 

\end{document}